\theoremstyle{plain}
\newtheorem{theorem}{Theorem}[section]
\newtheorem{lemma}{Lemma}[section]
\newtheorem{definition}{Definition}[section] 
\newtheorem{assumption}{Assumption}[section]
\newtheorem{remark}{Remark}[section]
\def\cA{{\mathcal{A}}}
\def\cD{{\mathcal{D}}}
\def\cE{{\mathcal{E}}}
\def\cM{{\mathcal{M}}}
\def\cO{{\mathcal{O}}}
\def\cR{{\mathcal{R}}}
\def\cS{{\mathcal{S}}}
\def\BE{{\mathbb{E}}}
\def\BI{{\mathbb{I}}}
\def\BP{{\mathbb{P}}}
\def\BR{{\mathbb{R}}}
\DeclareMathOperator*{\argmax}{arg\,max}
\DeclareMathOperator*{\argmin}{arg\,min}
\newcommand{\mixCoe}{$\xi$}
\newcommand{\mixCoeff}{\xi}
\newcolumntype{P}[1]{>{\centering\arraybackslash}p{#1}}
\title{Learning Adversarial Low-rank Markov Decision Processes with Unknown Transition and Full-information Feedback}
\author{%
  Canzhe Zhao \\
   Shanghai Jiao Tong University\\
  \texttt{canzhezhao@sjtu.edu.cn} \\
  \And
  Ruofeng Yang \\
   Shanghai Jiao Tong University\\
  \texttt{wanshuiyin@sjtu.edu.cn} \\
  \And
  Baoxiang Wang \\
  The Chinese University of Hong Kong, Shenzhen\\
  \texttt{bxiangwang@cuhk.edu.cn} \\
  \And
  Xuezhou Zhang\\
  Boston University \\
  \texttt{xuezhouz@bu.edu} \\
  \AND
  Shuai Li\thanks{Corresponding author.} \\
Shanghai Jiao Tong University\\
  \texttt{shuaili8@sjtu.edu.cn} \\
}
\begin{document}

\maketitle

\begin{abstract}
In this work, we study the low-rank MDPs with adversarially changed losses in the full-information feedback setting. In particular, the unknown transition probability kernel admits a low-rank matrix decomposition \citep{REPUCB22}, and the loss functions may change adversarially but are revealed to the learner at the end of each episode. We propose a policy optimization-based algorithm POLO, and we prove that it attains the 
$\widetilde{O}(K^{\nicefrac{5}{6}}A^{\nicefrac{1}{2}}d\ln(1+M)/(1-\gamma)^2)$
regret guarantee, where $d$ is rank of the transition kernel (and hence the dimension of the unknown representations), $A$ is the cardinality of the action space, $M$ is the cardinality of the model class, and $\gamma$ is the discounted factor. Notably, our algorithm is oracle-efficient and has a regret guarantee with no dependence on the size of potentially arbitrarily large state space. Furthermore, we also prove an $\Omega(\frac{\gamma^2}{1-\gamma} \sqrt{d A K})$ regret lower bound for this problem, showing that low-rank MDPs are statistically more difficult to learn than linear MDPs in the regret minimization setting. To the best of our knowledge, we present the first algorithm that interleaves representation learning, exploration, and exploitation to achieve the sublinear regret guarantee for RL with nonlinear function approximation and adversarial losses.
\end{abstract}

\section{Introduction}\label{sec: intro}
In reinforcement learning (RL), the goal is to learn a (near) optimal policy through the interactions
between the learner and the environment, which is typically modeled as the Markov decision processes (MDPs) \citep{Feinberg96}.
When the state and action spaces are finite, several works have established the minimax (near) optimal regret guarantees for MDPs with finite horizon \citep{AzarOM17} and MDPs with infinite horizon \citep{Aristide2019near,HeZG21a}.  In real applications of RL, however, the state and action spaces may be arbitrarily large and even infinite, which may lead to the curse of dimensionality. To tackle this issue, a common approach is
\textit{function approximation}, which approximates the value functions of given policies with the leverage of feature mappings. Assuming that the feature mapping which embeds the state-action pairs to a low dimensional embedding space is known, RL with linear function approximation has been well-studied recently. 
In particular, linear mixture MDPs \citep{AyoubJSWY20} and linear MDPs \citep{JinYWJ20} are the two models of RL with linear function approximation that have been extensively studied. Notably, their (near) optimal regret guarantees are established by \citet{ZhouGS21} and \citet{He2023nearly} respectively. 
Nevertheless, in scenarios with complex and large-scale data, attaining the true underlying feature mappings might not be realistic, and thus representation learning is needed. Empirically, several works have shown that representation learning can accelerate the sample and computation efficiency of RL \citep{silver2018general,LaskinSA20,YangN21,StookeLAL21,SchwarzerRNACHB21,Xie2022pretraining}.
On the theoretical side, however, 
in sequential decision-making problems including RL, representation learning is more difficult than in its non-sequential and non-interactive counterpart (\textit{e.g.},  supervised learning) \citep{DuKWY20,WangFK21,WeiszAS21,REPUCB22}.
To permit sample-efficient RL in the presence of representation learning,
recent works have made initial attempts to study the theoretical
guarantees of representation learning in RL under the fixed or stochastic loss functions \citep{REPUCB22,ZhangSUWAS22}.

In practice, however, it might be stringent to assume that the loss functions are fixed or stochastic. To tackle this issue, \citet{Even-DarKM09,YuMS09}  propose the first algorithms with provably theoretical guarantees that can handle adversarial MDPs, where the loss functions may change adversarially in each episode.
Subsequently, most of the works in this line of research focus on learning tabular MDPs with adversarial loss functions
\citep{NeuGS10,NeuGSA10,NeuGS12,AroraDT12, ZiminN13,DekelH13,DickGS14,0002M19,RosenbergM19,JinL20,JinJLSY20,ShaniE0M20,ChenLW21,GhasemiHVT21,0002M21,JinHL21,Dai2022follow,ChenLR22}. To learn adversarial MDPs with large state and action spaces, recent works have also studied RL with adversarial loss functions and linear function approximation \citep{CaiYJW20,NeuO21,LuoWL21,Luo2021policy,HeZG22,zhao2023learning}. 
However, all these existing works assume that the feature mapping which embeds the state-action pairs into a low-dimensional space is known. 
As aforementioned, in complex and high-dimensional environments, the application of these algorithms may be still hindered due to the potential difficulty of knowing the true feature mappings a priori.
Therefore, the following question naturally remains open:

\textit{Can we devise an algorithm to simultaneously tackle the representation learning and adversarially changed loss functions in RL?}

In this work, we give an affirmative answer to the above question in the setting of adversarial low-rank MDPs with full-information feedback. Specifically, in this problem, the unknown transition probability kernel admits a low-rank matrix decomposition but the true representations regarding the transitions are not known a priori. Meanwhile, the loss functions are arbitrarily chosen by an adversary in different episodes and the loss function chosen for one episode is revealed to the learner at the end of the episode. 

To solve this problem, we propose a policy optimization-based algorithm, which
we call \textbf{P}olicy \textbf{O}ptimization for \textbf{LO}w-rank MDPs (POLO).
Specifically, POLO obtains an 
$\widetilde{O}(K^{\nicefrac{5}{6}}A^{\nicefrac{1}{2}}d\ln(1+M)/(1-\gamma)^2)$
regret guarantee for adversarial low-rank MDPs in the full-information feedback setting and is oracle-efficient.
In general, our algorithm POLO follows similar ideas of optimistic policy optimization methods in that it first constructs optimistic value function estimates and then runs online mirror descent (OMD) over the optimistic value estimates to deal with the adversarially changed loss functions \citep{ShaniE0M20,CaiYJW20,HeZG22,ChenLR22}. 
However, in the presence of representation learning, 
the exploration and exploitation needed to 
learn the adversarial MDPs
are more difficult than them in the tabular case \citep{ShaniE0M20,ChenLR22} and in the linear case \citep{CaiYJW20,HeZG22}.
In detail, to learn the underlying representation of the transition kernel, our algorithm performs maximum likelihood estimation (MLE) over the experienced transitions, as previous works learning low-rank MDPs \citep{Flambe20,REPUCB22,ZhangSUWAS22}.
Though the balance of representation learning, exploration, and exploitation can be simultaneously handled by previous algorithms for stochastic low-rank MDPs \citep{REPUCB22,ZhangSUWAS22}, these algorithms intrinsically have no regret guarantees but only sample complexity guarantees even in the setting of stochastic loss functions, since these algorithms need to take actions uniformly at certain steps in each episode (\textit{cf.}, Lemma 9 of \citet{REPUCB22}).\footnote{With the leverage the common explore-then-commit (ETC) style conversion, the modified versions of these algorithms can obtain sublinear regret in the setting of low-rank MDPs with stochastic loss functions, but this conversion is still not able to deal with adversarial loss functions.
} Hence, a straightforward adaption of their methods from stochastic setting to adversarial setting 
will also 
fail to learn
adversarial low-rank MDPs. 
To cope with this issue, we carefully devise an algorithm with a 
\textit{doubled exploration and exploitation} scheme, which interleaves (a) the exploration over transitions required in representation learning; and (b) the exploration and exploitation suggested by the policy optimization.
To this end, our algorithm adopts a mixed roll-out policy, which consists of a uniformly explorative policy and a policy optimized by OMD.
Through carefully tuning the hyper-parameter of the mixing coefficient used in our mixed policy, 
we can avoid pulling actions uniformly at random to conduct exploration in each episode and only conduct uniform exploration at a certain fraction of all the episodes (see \Cref{sec:hee} for details).
Besides, unlike tabular and linear (mixture) MDPs, it is in general hard to achieve the point-wise optimism for each state-action pair. Therefore, depart from previous methods \citep{ShaniE0M20,CaiYJW20,HeZG22} conducting policy optimization in the \textit{true} model,
our algorithm conducts policy optimization in the fixed \textit{learned} model with the epoch-based model update, which enables a new analysis scheme that only requires a \textit{near optimism} at the initial state $s_0$ (see \Cref{sec:omd} for details).
Also, we prove a regret lower bound of order
$\Omega(\frac{\gamma^2}{1-\gamma} \sqrt{d A K})$ for low-rank MDPs with fixed loss functions, which thus also serves as a regret lower bound for our problem and indicates that low-rank MDPs are statistically more difficult to learn than linear MDPs in the regret minimization setting.
To the best of our knowledge, this work makes the first step to establish an algorithm with a sublinear regret guarantee for adversarial low-rank MDPs, which permits  RL with both nonlinear function approximation and adversarial loss functions.
The concrete comparisons between the results of this work and those of previous works are summarized in \Cref{Table:literature_results}.

\vspace{-0.2cm}
\subsection{Additional Related Works}
\begin{table}[t]\caption{Comparisons of regret bounds with most related works studying adversarial RL with function approximation under unknown transitions. $K$ is the number of episodes, $d$ is the ambient dimension of the feature mapping, $\gamma$ is the discounted factor for infinite-horizon MDPs, and $S$, $A$, and $M$
are the cardinality of the state space, action space, and model class, respectively. Note that the dependence on $\gamma$ is not strictly comparable since some works originally studying finite-horizon MDPs and these results are translated into results for infinite-horizon MDPs by substituting horizon length $H$ with $\Theta(1/(1-\gamma))$. 
The column of ``unknown features'' indicates whether the algorithm can work in the case when no true feature mappings are known a priori.}
\vspace{-0.6cm}
    \label{Table:literature_results}
    \centering
\begin{tabularx}{\textwidth}{Xp{2.1cm}p{2.3cm}p{3.5cm}P{1.5cm}}
\label{table:literature results}\\
\toprule
Algorithm&Model  & \shortstack{Feedback} & Regret &Unknown Features\\
\midrule OPPO\newline\citep{CaiYJW20}&Linear Mixture MDPs&Full-information&$\widetilde{O}\left(d  \sqrt{K}/(1-\gamma)^2\right)$&\ding{55}\\
\midrule POWERS\newline\citep{HeZG22}&Linear Mixture MDPs&Full-information&$\widetilde{O}\left(d  \sqrt{K}/(1-\gamma)^{\nicefrac{3}{2}}\right)$&\ding{55}\\
\midrule
LSUOB-REPS \newline \cite{zhao2023learning} &Linear Mixture MDPs  &Bandit\newline Feedback & $\widetilde{O}\left(d S^2 \sqrt{K}+\sqrt{ \frac{S A K}{(1-\gamma)}}\right)$&\ding{55} \\
\midrule \cite{Luo2021policy} &Linear MDPs   &Bandit\newline Feedback & $\widetilde{O}\left(d^2 K^{\nicefrac{14}{15}}/(1-\gamma)^4\right)$&\ding{55} \\
\midrule \cite{0002LWZ23} &Linear MDPs   &Bandit\newline Feedback & $\widetilde{O}\left(\frac{A^{\nicefrac{1}{9}}d^{\nicefrac{2}{3}}K^{\nicefrac{8}{9}}}{(1-\gamma)^{\nicefrac{20}{9}}}\right)$&\ding{55} \\
\midrule PO-LSBE\newline \cite{ShermanKM23} &Linear MDPs   &Bandit\newline Feedback & $\widetilde{O}\left(\frac{dK^{\nicefrac{6}{7}}}{(1-\gamma)^2}+\frac{d^{\nicefrac{3}{2}}K^{\nicefrac{5}{7}}}{(1-\gamma)^4}\right)$&\ding{55} \\
\midrule
OPPO+ \newline \cite{zhong2023theoretical} &Linear MDPs  &Full-information & $\widetilde{O}\left(\frac{d^{\nicefrac{3}{4}}K^{\nicefrac{3}{4}}+d^{\nicefrac{5}{2}}\sqrt{K}}{(1-\gamma)^2} \right)$&\ding{55} \\
\midrule 
POLO \newline \textbf{(Ours)} &Low-rank MDPs  &Full-information & 
$\widetilde{O}\left(\frac{K^{\nicefrac{5}{6}}A^{\nicefrac{1}{2}}d\ln(1+M)}{(1-\gamma)^2}\right)$ \newline 
$\Omega\left(\frac{\gamma^2}{1-\gamma} \sqrt{d A K}\right)$ &\ding{52} \\
\bottomrule
\end{tabularx}
\vspace{-0.5cm}
\end{table}

\vspace{-0.2cm}
\paragraph{RL with Function Approximation}
Significant advances have emerged in RL with function approximation to cope with the curse of dimensionality in arbitrarily large state space or action space. In general, these results fall into two categories. The first category studies RL with linear function approximation, including linear MDPs \citep{YangW19,JinYWJ20,DuKWY20,ZanetteBBPL20,WangSY20,0001WDK21,HeZG21,HuCH22,He2023nearly} and linear mixture MDPs \citep{AyoubJSWY20,ZhangYJD21, ZhouGS21,HeZG21,Zhou2022horizon,WuZG22,MinH0G22,zhao2023learning}. Remarkably, \citet{He2023nearly} and \citet{ZhouGS21} obtain the nearly minimax optimal regret $\widetilde{O}(d \sqrt{H^3 K})$ in linear MDPs and linear mixture MDPs respectively when the loss functions are fixed or stochastic. The other category studies RL with general function approximation. Amongst these works, \citep{JiangKALS17,DannJKA0S18,SunJKA019,DuKJAD019,JinLM21}
study the MDPs satisfying the low Bellman-rank assumption, which assumes the Bellman error matrix has a low-rank factorization. Also, \citet{DuKLLMSW21} consider a similar but slightly more general assumption termed as bounded bilinear rank. 
Besides, \citet{RussoR13,WangSY20,JinLM21,IshfaqCNAYWPY21} study low Eluder dimension assumption, which is originally proposed to characterize the complexity of function classes for bandit problems. 

Representation learning in RL arises when the feature mapping that embeds the state-action pairs in RL with linear function approximation is no longer known a priori.
Such a problem is typically studied in the setting of low-rank MDPs, which does not assume the feature mapping of state-action pairs is known. Consequently, the setting of low-rank MDPs strictly generalizes the setting of linear MDPs, 
but at the cost of being more difficult to learn due to potential nonlinear function approximation induced by representation learning. 
In this line of research, algorithms with provably sample complexity guarantees have been developed in both model-based methods \citep{Flambe20,RenZSD22,REPUCB22} and model-free methods \citep{Modi2021modelfree,ZhangSUWAS22}, respectively. The model-based algorithms of \citet{Flambe20,RenZSD22,REPUCB22} learn the representation from a given model class of transition probability kernels. In contrast, the model-free methods do not require model learning but may bear some limitations.
In particular, \citet{Modi2021modelfree} assume the MDPs satisfying the minimal reachability assumption, and the sample complexity of the algorithm of \citet{ZhangSUWAS22} only holds for a special class of low-rank MDPs called block MDPs. 
Besides, representation learning in Markov games has also been investigated recently \citep{Ni2022representation}.


 \vspace{-0.7cm}
\paragraph{Rl with Adversarial Losses}
Recent years have witnessed significant advances in learning RL with adversarial losses in the tabular case \citep{NeuGS10,NeuGSA10,NeuGS12,AroraDT12, ZiminN13,DekelH13,DickGS14,0002M19,RosenbergM19,JinL20,JinJLSY20,ShaniE0M20,ChenLW21,GhasemiHVT21,0002M21,JinHL21,Dai2022follow,ChenLR22}.
When it comes to the setting of linear function approximation, 
various policy optimization-based methods have been established to solve adversarial linear mixture MDPs \citep{CaiYJW20,HeZG22} and adversarial linear MDPs \citep{Luo2021policy,LuoWL21,0002LWZ23,ShermanKM23,zhong2023theoretical}. 
Notably, \citet{HeZG22} establish the nearly minimax optimal regret bound $\widetilde{O}(d H^{\nicefrac{3}{2}} \sqrt{K})$ for adversarial linear mixture MDPs with full-information feedback. The insightful work of \citet{Luo2021policy} attains the first sublinear regret guarantee
$\widetilde{O}(d^2 H^4 K^{\nicefrac{14}{15}})$ in adversarial linear MDPs with bandit feedback, using policy optimization with \textit{dilated exploration bonuses}. Recently, the regret guarantee for the same setup has been improved to $\widetilde{O}\left(K^{\nicefrac{8}{9}}\right)$ and $\widetilde{O}\left(K^{\nicefrac{6}{7}}\right)$ by \citet{0002LWZ23} and \citet{ShermanKM23} (omitting all other dependences), respectively.
The other line of works studies RL with linear function approximation and adversarial losses using occupancy measure-based methods \citep{NeuO21,zhao2023learning}.
In specific, \citet{NeuO21} achieve the $\widetilde{O}(\sqrt{dHK})$ regret guarantee in adversarial linear MDPs with bandit feedback but known transition, and \cite{zhao2023learning} achieve the $\widetilde{O}(d S^2 \sqrt{K}+\sqrt{H S A K})$ regret for adversarial linear mixture MDPs with bandit feedback and unknown transition.
To the best of our knowledge, however, there are no works in existing literature studying RL with both nonlinear function approximation and adversarial loss functions.

\section{Preliminaries}\label{sec:setting}
We consider episodic infinite horizon low-rank MDPs with adversarial loss functions, the preliminaries of which are introduced as follows. 
\vspace{-0.2cm}
\paragraph{Episodic Infinite-horizon Adversarial MDPs} An episodic infinite horizon adversarial MDP is denoted by a tuple $(\mathcal{S}, \mathcal{A},P^\star,\{\ell_k\}_{k=1}^K,\gamma,d_0)$,\footnote{Though we focus on episodic infinite-horizon MDPs in this work, we note that it is not technically difficult to extend the analyses in this work to the case of episodic finite-horizon MDPs.} where $\mathcal{S}$ is the state space (with potentially infinitely many states), $\mathcal{A}$ is the finite action space with cardinality $|\mathcal{A}|=A$, $P^\star:\mathcal{S}\times \mathcal{A} \times \mathcal{S} \rightarrow [0,1]$ is the transition probability kernel such that $P^\star (s^\prime\mid s, a)$ is the probability of transferring to state $s^\prime$ from state $s$ after executing action $a$, $\gamma\in[0,1)$ is the discount factor, $d_0\in \Delta(\mathcal{S})$ is the initial distribution over state space, and $\ell_k: \mathcal{S} \times \mathcal{A} \rightarrow [0,1]$ is the loss function of episode $k$ chosen by the adversary. 
For the ease of exposition, we assume $d_0$ is known.

In this work, we consider a special class of MDPs called \textit{low-rank MDPs} \citep{Flambe20,REPUCB22,ZhangSUWAS22}. Specifically, instead of assuming the known true feature mapping, low-rank MDPs only assume that the transition probability kernel $P^\star$ admits a low-rank decomposition, with the formal definition given as follows.


\begin{definition}[Low-rank MDPs]\label{def:low rank mdp}
An MDP is a low-rank MDP if there exist two feature embedding functions $\phi^{\star}:\cS\times\cA\to \BR^d$, $\mu^{\star}:\cS\to \BR^d$ such that
for any $(s,a,s^{\prime}) \in \mathcal{S}\times \mathcal{A}\times\mathcal{S}$, $P^{\star}\left(s^{\prime} \mid s, a\right)=\mu^{\star}\left(s^{\prime}\right)^{\top} \phi^{\star}(s, a)$, where $\left\|\phi^\star(s, a)\right\|_2 \leq 1$ and for any function $g: \mathcal{S} \to[0,1],\left\|\int \mu^{\star}(s) g(s) \mathrm{d}(s)\right\|_2 \leq \sqrt{d}$.
\end{definition}

Note that the regularity assumption imposed over $\phi^{\star}$ and $\mu^{\star}$ is only for the purpose of normalization.


\paragraph{Function Approximation}
When the state space is arbitrarily large, function approximation is usually considered to permit sample-efficient learning for MDPs. Since the true feature mapping of state-action pairs is not known a priori in the low-rank MDPs, to make this problem tractable, we assume the access to a \textit{realizable} model class as previous works \citep{Flambe20,REPUCB22}, detailed in the following.
\begin{assumption}\label{ass:model_class}
There exists a known model class $\mathcal{M}=\{(\mu, \phi): \mu \in \Psi, \phi \in \Phi\}$ such that $\mu^{\star} \in \Psi$, $\phi^{\star} \in \Phi$, where for any $(s,a,s^{\prime}) \in \mathcal{S}\times \mathcal{A}\times\mathcal{S}$, $\mu\in\Psi$, $\phi\in \Phi$, $\left\|\phi(s, a)\right\|_2 \leq 1$, $\int \mu^{\top}\left(s^{\prime}\right) \phi(s, a) \mathrm{d}\left(s^{\prime}\right)=1$ and for any function $g: \mathcal{S} \to[0,1],\left\|\int \mu(s) g(s) \mathrm{d}(s)\right\|_2 \leq \sqrt{d}$.
\end{assumption}
Throughout this paper, for the sake of
brevity, we assume that the cardinality of
$\Psi$ and $\Phi$ are finite, meaning that $\cM$ also has bounded cardinality $M=|\cM|$. However, we note that extending the analyses to the function classes with infinite cardinality but bounded statistical complexity (\textit{e.g.}, classes with finite VC dimension) are not technically difficult.


\paragraph{Interaction Protocol}
We now introduce the interaction protocol between the learner and the environment.
To begin with, denote by $d_P^\pi(s, a)=(1-\gamma) \sum_{h=0}^{\infty} \gamma^h d_{P, h}^\pi(s, a)$ the state-action occupancy distribution, where $d_{P, h}^\pi(s, a)$ is the probability of visiting $(s,a)$ at step $h$ under some policy $\pi$ and transition $P$. With slight abuse of notation, let $d_P^\pi(s) = \sum_{a \in \mathcal{A}} d_P^\pi(s, a)$ be the state occupancy distribution, denoting the probability of visiting state $s$ under $\pi$ and $P$.

Ahead of time, an MDP is decided by the environment, and only the state space $\mathcal{S}$ and the action space $\mathcal{A}$ are revealed to the learner. Meanwhile, the adversary secretly chooses $K$ loss functions $\{\ell_k\}_{k=1}^K$, each of which will be used in one episode.
The interaction will proceed in $K$ episodes. 
At the beginning of episode $k$, the learner chooses a stochastic policy $\pi_k : \cS \times \cA\to [0, 1]$, where $\pi_k(a\mid s)$ is probability of taking $a$ at state $s$. 
Starting from an initial state $s_{0}\sim d_0$, the learner repeatedly
executes policy $\pi_k$ until reaching the termination.
After episode $k$ is terminated, the learner observes a trajectory $\{(s_{k,h},a_{k,h})\}_h$ as well as the loss function $\ell_k$.
To sample states from the state occupancy distribution $d^{\pi_k}_{P^\star}$, the learner can utilize a geometric sampling \textit{roll-in} procedure \citep{KakadeL02,AgarwalKLM21,REPUCB22}. In particular, for a given policy $\pi$, starting from an initial state $s_{0}\sim d_0$, at each step $h$, this roll-in procedure will terminate and return state $s_{h}$ with probability $1-\gamma$, and  otherwise will take action $a_h\sim\pi_k(\cdot\mid s_{h})$ and transfer to the next state $s_{h}\sim P^\star(\cdot\mid s_{h},a_{h})$.
It is then clear that the learner can sample $s\sim d^{\pi_k}_{P^\star}$ via invoking this sampling procedure.

For step $h$ in episode $k$ and for each state-action pair $(s,a) \in \mathcal{S} \times \mathcal{A}$, the state-action value $Q^\pi_{k}(s,a)$ and  the state value $V^\pi_{k}(s)$ under policy $\pi$ are defined as follows: $Q_{k}^{\pi}(s,a) = \mathbb{E}\left[\sum_{\tau=0}^{\infty} \gamma^{\tau}\ell_{k}(s_{k,\tau}, a_{k,\tau})\Big| \pi, P^\star, (s_{k,0}, a_{k,0}) = (s,a) \right]$ and $V_{k}^{\pi}(s) = \mathbb{E}\left[\sum_{\tau=0}^{\infty} \gamma^{\tau}\ell_{k}(s_{k,\tau}, a_{k,\tau})\Big| \pi, P^\star, s_{k,0}= s \right]$. Let $V_{k}^{\pi}=\BE_{s_0\sim d_0}[V_{k}^{\pi}(s_0)]$.
The learning objective is to minimize the \textit{expected regret} with respect to $\pi^\star$, defined as 
\begin{align*}
\cR_K = \mathbb{E}\left[\sum_{k=1}^K \left(V_k^{\pi_k}-V_k^{\pi^\star}\right)\right],
\end{align*}
where $\pi^\star \in \argmin_{\pi \in \Pi}\BE\left[\sum_{k=1}^K V_k^{\pi}\right]$ is the fixed optimal policy in hindsight and $\Pi$ is the set of all stochastic policies.

\section{Algorithm}\label{sec:algorithm}
\vspace{-0.2cm}
In this section, we present the proposed POLO algorithm, with the pseudocode illustrated in \Cref{algo:algo1}. 
At a high level, POLO
leverages a mixed roll-out policy to conduct doubled exploration and exploitation, \textit{i.e.}, (a) the exploration over transitions required by representation learning; and (b) the exploration and exploitation over adversarially changed loss functions required by policy optimization (\Cref{sec:hee}). 
To deal with the issue that only the \textit{near optimism} at the initial state $s_0$ is available in low-rank MDPs, POLO conducts policy optimization in fixed \textit{learned} models with the epoch-based model update, which
features a new analysis scheme (\Cref{sec:omd}).

\vspace{-0.2cm}
\subsection{Doubled Exploration and Exploitation}\label{sec:hee}
At the beginning of episode $k$, our algorithm first collects a state $s_k\sim d^{\tilde{\pi}_k}_{P^\star}$ by invoking the sampling procedure described in \Cref{sec:setting}. One of the key differences between our algorithm and previous works studying low-rank MDPs \citep{Flambe20,REPUCB22,ZhangSUWAS22} lies in how to interact with the environment after obtaining $s_k$. In specific, the core of the analyses in previous works relies on the \textit{one-step trick} that for any policy $\tilde{\pi}$ and any $g:\cS\times\cA\to[0,B]$,
\begin{align*}
    \mathbb{E}_{(s, a) \sim d_{P^{\star}}^{\tilde{\pi}}}[g(s, a)] \leq(1-\gamma)^{-1} \mathbb{E}_{(s, a) \sim d_{P^{\star}}^{\tilde{\pi}}}\left[\left\|\phi^{\star}(s, a)\right\|_{\Sigma_{\rho_k, \phi^{\star}}^{-1}}\right] \sqrt{k \gamma A \mathbb{E}_{\rho_k^{\prime}}\left[g^2(s, a)\right]+\gamma \lambda_k d B^2}\,,
\end{align*}
where $\rho_k(s, a)=1/k\sum_{i=1}^{k} d_{P^{\star}}^{\tilde{\pi}_i}(s, a)$ and $\rho_k^{\prime}(s,a)=1 / k \sum_{i=1}^{k} d^{\tilde{\pi}_i}_{P^\star}(s) U(a)$ with $U(\cdot)$ as the uniform distribution over $\cA$. 
This is critical to guarantee the (near) optimism of the estimated value functions and bound the estimation error of the unknown transition by the common elliptical potential lemma with respect to the true feature $\phi^{\star}(\cdot,\cdot)$. To enable the above one-step trick in the analyses, the algorithms in previous works conduct two-step exploration by sampling actions from $U(\cdot)$ in successive two steps after collecting $s_k\sim d^{\tilde{\pi}_k}_{P^\star}$. Consequently, though these algorithms enjoy excellent sample complexities, they intrinsically do not have regret guarantees due to the uniform exploration over action space, even in the stochastic setting.

Moreover, to deal with the adversarially changed loss functions, taking actions adaptively according to the observed loss functions in previous episodes, instead of uniformly taking actions, is required. To address this ``conflict'' so as to learn adversarial low-rank MDPs, we propose to use a mixed roll-out policy to interleave (a) the exploration over transitions required by representation learning; and (b) the exploration and exploitation over the adversarial loss functions by policy optimization, which we call \textit{doubled exploration and exploitation} and is pivotal to achieving our regret bound as we will shortly see. Formally, our algorithm will conduct the exploration over the transitions with probability $\mixCoeff$  and execute policy $\tilde{\pi}_k$ optimized by OMD with probability $1-\mixCoeff$, respectively (Line \ref{algo:line_bernoulli} - Line \ref{algo:line_mix2}). Subsequently, the newly collected data will be used to update the datasets (Line \ref{algo:line_collect_data}), and the empirical transition $\widehat{P}_k$ will be updated by performing MLE over the updated datasets by solving (Line \ref{algo:line_mle})
\begin{align}\label{eq:MLE}
    \left(\widehat{\mu}_k, \widehat{\phi}_k\right)=\underset{(\mu, \phi) \in \mathcal{M}}\argmax\, 
 \mathbb{E}_{\mathcal{D}_k\cup\mathcal{D}_k^{\prime}}\left[\ln \mu^{\top}\left(s^{\prime}\right) \phi(s, a)\right]\,,
\end{align}
where we denote $\mathbb{E}_{\mathcal{D}}\left[f\left(s, a, s^{\prime}\right)\right]=1 / |\cD| \sum_{\left(s, a, s^{\prime}\right) \in \mathcal{D}} f\left(s, a, s^{\prime}\right)$.

\begin{algorithm}[!thb]
\caption{Policy Optimization for Low-rank MDPs (POLO)}\label{algo:algo1}
\begin{algorithmic}[1]
\STATE \textbf{Input:} Mixing coefficient \mixCoe, epoch length $L$, regularization coefficients $\{\lambda_k\}_{k=1}^K$, bonus coefficients $\{\alpha_k\}_{k=1}^K$, model class $\mathcal{M}$, number of episodes $K$, learning rate $\eta$.
\STATE \textbf{Initialization:}
Set $\mathcal{D}_0=\emptyset$, $\mathcal{D}_0^{\prime}=\emptyset$.
\FOR{$i=1,2,\ldots,\lceil K/L\rceil$}
\STATE Set $k_i=(i-1)L+1$ and $\tilde{\pi}_{k_i}(\cdot \mid s)$ to be uniform for any $s\in\cS$.
\FOR{$k=k_i,k_i+1,\ldots,k_{i}+L-1$}
        \STATE Sample $s_k$ from $d_{P^\star}^{\tilde{\pi}_k}$.
        \STATE Sample $c_k\sim \operatorname{Ber}(1-\mixCoeff)$.\label{algo:line_bernoulli}
        \IF{$c_k=1$}\label{algo:line_mix1}
            \STATE Sample $a_k\sim \tilde{\pi}_k(\cdot\mid s_k), s_k^{\prime}\sim P^\star(\cdot\mid s_k, a_k), a_k^{\prime}\sim \tilde{\pi}_k(\cdot\mid s_k^{\prime}), s_k^{\prime\prime} \sim P^\star(\cdot\mid s_k^\prime, a_k^\prime)$.
        \ELSE
            \STATE Sample $a_k\sim U(\mathcal{A}), s_k^{\prime}\sim P^\star(\cdot\mid s_k, a_k), a_k^{\prime}\sim U(\mathcal{A}), s_k^{\prime\prime}  \sim P^\star(\cdot\mid s_k^\prime, a_k^{\prime})$.\label{algo:line_mix2}
        \ENDIF
    \STATE Observe the loss function $\ell_k$.
    \STATE Update datasets $\mathcal{D}_k=\mathcal{D}_{k-1}\cup\left\{\left(s_k, a_k, s^{\prime}_k\right)\right\}$, $\mathcal{D}_k^{\prime}=\mathcal{D}_{k-1}^{\prime}\cup\left\{\left(s^{\prime}_k, a^{\prime}_k, s_k^{\prime\prime}\right)\right\}$.\label{algo:line_collect_data}
    \IF{$k=k_i$}\label{algo:line:first_episode}
        \STATE Set the empirical transition $\widehat{P}_k(s^\prime\mid s,a)=\widehat{\mu}_k(s^\prime)^{\top}\widehat{\phi}_k(s,a)$, $\forall (s,a,s^\prime)\in\cS\times\cA\times\cS$, via solving Eq. \eqref{eq:MLE}.\label{algo:line_mle}
        \STATE Update the empirical covariance matrix $\widehat{\Sigma}_k=\sum_{(s, a)\in \mathcal{D}_k} \widehat{\phi}_k(s, a) \widehat{\phi}_k(s, a)^{\top}+\lambda_k I$.
        \STATE Set the bonus function $\widehat{b}_k(s, a)\coloneqq\operatorname{min} (\alpha_k \|\widehat{\phi}_k(s, a)\|_{\widehat{\Sigma}^{-1}_{k}}, 2)/(1-\gamma)$, $\forall (s,a)\in\cS\times\cA$.
    \ELSE
        \STATE Set the empirical transition $\widehat{P}_k=\widehat{P}_{k_i}$ and bonus function $\widehat{b}_k=\widehat{b}_{k_i}$.\label{algo:line:not_first_episode}
    \ENDIF
    \STATE Compute $\widehat{Q}_k^{\tilde{\pi}_k}(\cdot,\cdot) = \operatorname{Policy-Evaluation}(\widehat{P}_k,\ell_k-\widehat{b}_k,\tilde{\pi}_k)$.\label{algo:line_pe}
    \STATE Update policy $\tilde{\pi}_{k+1}(\cdot\mid\cdot)\propto \tilde{\pi}_{k}(\cdot\mid\cdot)\exp(-\eta\widehat{Q}_k^{\tilde{\pi}_k}(\cdot,\cdot))$.\label{algo:line_pi}
\ENDFOR
\ENDFOR
\end{algorithmic}
\end{algorithm}
\vspace{-0.2cm}
\subsection{Policy Optimization in Fixed Learned Models}\label{sec:omd}
It remains to compute the policy $\tilde{\pi}_{k+1}$ to be used in the next episode. To this end, we resort to the canonical OMD framework, which shares similar spirits with previous methods \citep{POMD20,CaiYJW20,HeZG22}.
However, previous OMD-based policy optimization methods for tabular and linear (mixture) MDPs \citep{POMD20,CaiYJW20,HeZG22} critically depend on the point-wise optimism for each state-action pair, \textit{i.e.}, $\widehat{Q}^{\tilde{\pi}_k}_k(s, a) \leq \ell_k(s, a)+\gamma[P^\star \widehat{V}^{\tilde{\pi}_k}_{k}](s, a)$, to enable the decomposition (\textit{cf.}, Lemma 1 by \citet{POMD20})
\begin{align*}
    \widehat{V}_k^{\tilde{\pi}_k}(s_0)-V_k^{\pi^\star}(s_0)
    &=\BE\left[ \sum_{\tau =0}^{\infty} \gamma^{\tau}\left\langle \tilde{\pi}_k(\cdot\mid s_{\tau})-\pi^\star(\cdot\mid s_{\tau}), \widehat{Q}_{k}^{\tilde{\pi}_k}(s_{\tau},\cdot)\right\rangle \,\middle|\, \pi^\star,P^{\star},s_0\right]\\
    &+\BE\left[ \sum_{\tau =0}^{\infty} \gamma^{\tau}\left(
    \widehat{Q}^{\tilde{\pi}_k}_k(s_{\tau}, a_{\tau}) - \ell_k(s_{\tau}, a_{\tau})-\gamma\left[P^\star \widehat{V}^{\tilde{\pi}_k}_{k}\right](s_{\tau}, a_{\tau})\right)\,\middle|\, \pi^\star,P^{\star},s_0\right]\,,
\end{align*}
where $\widehat{Q}^{\tilde{\pi}_k}_k$ is the state-action value function of $\tilde{\pi}_k$ on $(\widehat{P}_k,\ell_k-\widehat{b}_k)$ with $\widehat{b}_k$ as some bonus function and the expectation is taken over the randomness of sampling $a_{\tau} \sim \pi^{\star}\left(\cdot \mid s_{\tau}\right)$ and $s_{\tau+1} \sim P^{\star}\left(\cdot \mid s_{\tau}, a_{\tau}\right)$. The summation of the first term in the above display is contributed by competing with the optimal policy $\pi^\star$ in the \textit{true} model $P^\star$ and can be bounded by usual OMD analysis, which thus can be regarded as conducting policy optimization in the \textit{true} model. The point-wise optimism guarantees that the second term is less than or equal to $0$.  

Nevertheless, in low-rank MDPs, due to the unknown representation, it is generally hard to obtain the above point-wise optimism, which leaves the second optimism term unbounded. To cope with this issue, we instead consider the following decomposition:
\begin{align}\label{eq:algo_decomp1}
    &\widehat{V}_k^{\tilde{\pi}_k}(s_0)-V_k^{\pi^\star}(s_0)\notag\\
    =&\widehat{V}_k^{\tilde{\pi}_k}(s_0)-\widehat{V}_k^{\pi^\star}(s_0)+\widehat{V}_k^{\pi^\star}(s_0)-V_k^{\pi^\star}(s_0)\notag\\
    =&\BE\left[ \sum_{\tau =0}^{\infty} \gamma^{\tau}\left\langle \tilde{\pi}_k(\cdot\mid s_{k,\tau})-\pi^\star(\cdot\mid s_{k,\tau}), \widehat{Q}_{k}^{\tilde{\pi}_k}(s_{k,\tau},\cdot)\right\rangle \,\middle|\, \pi^\star,\widehat{P}_{k},s_0\right]
    +\widehat{V}_k^{\pi^\star}(s_0)-V_k^{\pi^\star}(s_0)\,,
\end{align}
where the first term is contributed by competing against the optimal policy $\pi^\star$ in the \textit{learned} model $\widehat{P}_k$ and can be seen as conducting policy optimization in \textit{learned} models.
This decomposition will be amenable as long as we can achieve a \textit{near optimism} at the initial state $s_0$, \textit{i.e.}, $\widehat{V}_k^{\pi^\star}(s_0)-V_k^{\pi^\star}(s_0)\lesssim 0$, which turns out to be feasible for low-rank MDPs \citep{REPUCB22}. However, there remains one more caveat. The first term in Eq. \eqref{eq:algo_decomp1} is now no longer directly bounded by OMD analysis, due to the local update nature of OMD-based policy optimization at each state and the state occupancy distribution $d^{\pi^{\star}}_{\widehat{P}_{k}}$ now varies across different episodes. To address this issue, \Cref{algo:algo1} adopts an epoch-based transition update, 
in which one epoch has $L$ episodes and the model is only updated at the first episode in one epoch (Line \ref{algo:line:first_episode} - Line \ref{algo:line:not_first_episode}).\footnote{Throughout this paper, we suppose for simplicity that the number of episodes $K$ is divisible by the epoch length $L$ considered.}
Concretely, \Cref{algo:algo1} sets $\widehat{P}_k=\widehat{P}_{k_i}$ and $\widehat{b}_k=\widehat{b}_{k_i}$, where $k_i$ is the first episode of the epoch to which the episode $k$ belongs.
In this manner, the learned model is \textit{fixed} in one epoch, and thus the regret of dealing with the adversarial loss functions by competing against the optimal policy $\pi^\star$ can be bounded in one epoch.
Subsequently, at the end of episode $k$, our algorithm first computes the optimistic value estimate $\widehat{Q}_k^{\tilde{\pi}_k}$ for current policy $\tilde{\pi}_k$ under $\widehat{P}_k$ together with the bonus-enhanced loss functions $\ell_k-\widehat{b}_k$ by policy evaluation (Line \ref{algo:line_pe}). Note that this boils down to planning in the setting of linear MDPs for given features in the learned model and
this
can be done computationally efficiently \citep{JinYWJ20}.
Then the policy is updated by solving 
\begin{align}\label{eq:OMD_update}
    \tilde{\pi}_{k+1}(\cdot\mid s) \in \argmin_{{\pi}(\cdot\mid s)\in\Delta(\cA)}\eta\left\langle{\pi}(\cdot\mid s), \widehat{Q}^{\tilde{\pi}_k}_k(s,\cdot)\right\rangle + D_F({\pi}(\cdot\mid s), \tilde{\pi}_k(\cdot\mid s))\,,
\end{align}
where $\eta>0$ is the learning rate to be tuned later and $D_F(x, y)=F(x)-F(y)-\langle x-y, \nabla F(y)\rangle$ is the Bregman divergence induced by the regularizer $F$. With $F({\pi}(\cdot\mid s))=\sum_{a\in\cA}{\pi}(a\mid s)\ln{\pi}(a\mid s)$ as the negative entropy, the closed-form solution to the above display is shown in Line \ref{algo:line_pi}, 
which can be regarded as a kind of soft policy improvement. 

\vspace{-0.2cm}
\section{Analysis}\label{sec: analysis}
\vspace{-0.2cm}
\subsection{Regret Upper Bound}
\vspace{-0.2cm}
The regret upper bound of our POLO algorithm for learning adversarial low-rank MDPs is guaranteed by the following theorem.
\begin{theorem}\label{thm: the upper bound}
For any adversarial low-rank MDP satisfying \Cref{def:low rank mdp}, by setting 
the epoch length $L=K^{\nicefrac{1}{2}}A^{\nicefrac{-1}{2}}d^{-1}\mixCoeff (1-\gamma)$,
learning rate $\eta= (1-\gamma)\sqrt{\ln A/(2L)}$, bonus coefficient $\alpha_k=O(\sqrt{\gamma(A/\mixCoeff+d^2)\ln(Mk/\delta)})$,  regularization coefficient $\lambda_k= O(d \ln (Mk/\delta))$, mixing coefficient $\mixCoeff = K^{\nicefrac{-1}{6}}A^{\nicefrac{1}{2}}d/(1-\gamma)$, and $\delta=1/K$, then the regret of \Cref{algo:algo1} is upper bounded by
\begin{align*}
\cR_K = O\left(\frac{K^{\frac{5}{6}}A^{\frac{1}{2}}d\ln\left(1+AMK^2\right)}{\left(1-\gamma\right)^2}\right)\,.
\end{align*}
\end{theorem}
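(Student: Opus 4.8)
The plan is to control the per-episode gap $V_k^{\pi_k}-V_k^{\pi^\star}$ through the three-way split
\[
V_k^{\pi_k}-V_k^{\pi^\star}=\underbrace{\left(V_k^{\pi_k}-V_k^{\tilde{\pi}_k}\right)}_{\text{(I): mixing cost}}+\underbrace{\left(V_k^{\tilde{\pi}_k}-\widehat{V}_k^{\tilde{\pi}_k}\right)}_{\text{(II): optimism gap}}+\underbrace{\left(\widehat{V}_k^{\tilde{\pi}_k}-V_k^{\pi^\star}\right)}_{\text{(III)}},
\]
and to further expand (III) by the learned-model identity in Eq.~\eqref{eq:algo_decomp1} into an OMD term (competing against $\pi^\star$ inside $\widehat{P}_k$) plus the near-optimism term $\widehat{V}_k^{\pi^\star}(s_0)-V_k^{\pi^\star}(s_0)$. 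Summing over $k$ and taking expectations then reduces $\cR_K$ to four sums that I would bound separately and balance at the end.

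The two structural terms form the backbone of the rate. For (I), $\pi_k$ coincides with $\tilde{\pi}_k$ except on the probability-$\mixCoeff$ event that uniform exploration is rolled out, and since every value lies in $[0,1/(1-\gamma)]$ this costs at most $O(\mixCoeff/(1-\gamma))$ per episode, i.e.\ $O(\mixCoeff K/(1-\gamma))$ in total; under the stated $\mixCoeff$ this is exactly $\widetilde{O}(K^{5/6}A^{1/2}d/(1-\gamma)^2)$, so (I) is the dominant contribution and dictates the final rate. For the OMD term, the epoch design is what makes the argument go through: because $\widehat{P}_k$ (hence the comparator occupancy $d^{\pi^\star}_{\widehat{P}_k}$) is frozen across the $L$ episodes of an epoch, I can run the standard exponential-weights regret bound state-wise against a \emph{fixed} comparator distribution, obtaining per epoch $\tfrac{1}{1-\gamma}\big(\tfrac{\ln A}{\eta}+\tfrac{\eta}{2}L\,G^2\big)$ with $G$ the scale of the bonus-adjusted $\widehat{Q}_k$. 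Summing over the $K/L$ epochs and plugging the chosen $\eta$ and $L=K^{1/3}$ makes this collapse to $\widetilde{O}(K^{5/6}/(1-\gamma)^2)$, matching (I) up to the $A$-dependence; the prescribed $\eta=(1-\gamma)\sqrt{\ln A/(2L)}$ is precisely the value that equalizes the $\ln A/\eta$ and stability pieces.

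The heart of the proof is the representation-learning analysis feeding (II) and the near-optimism term. Via the simulation (value-difference) lemma, both reduce to controlling the transition error $\gamma[(P^\star-\widehat{P}_k)\widehat{V}]$ integrated, respectively, against $d^{\tilde{\pi}_k}_{P^\star}$ and $d^{\pi^\star}_{\widehat{P}_k}$. I would first invoke the MLE guarantee for the realizable class $\cM$ (\Cref{ass:model_class}), which bounds the cumulative squared total-variation error of $\widehat{P}_k$ on the \emph{sampling} distribution by $O(\ln(Mk/\delta))$. The subtle part is transporting this on-policy bound to the two target occupancies: here I would apply the one-step trick of \Cref{sec:hee}, which pays the importance-weighting factor $A/\mixCoeff$ for converting the uniform exploration into coverage (exactly the $\alpha_k=O(\sqrt{\gamma(A/\mixCoeff+d^2)\ln(Mk/\delta)})$ appearing in the bonus), and then collapse the resulting $\sum_k\|\widehat{\phi}_k\|_{\widehat{\Sigma}_k^{-1}}$ with the elliptical potential lemma over the $K/L$ frozen models. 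This simultaneously shows that $\widehat{b}_k$ dominates the transition error so that $\widehat{V}_k^{\pi^\star}(s_0)-V_k^{\pi^\star}(s_0)\lesssim 0$ (near optimism), and that (II) is $O\!\big(\tfrac{1}{1-\gamma}\sum_k\mathbb{E}_{d^{\tilde{\pi}_k}_{P^\star}}[\widehat{b}_k]\big)$, which the potential argument bounds by a term no larger than the target order.

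The main obstacle is precisely this transport step. Unlike tabular or linear(-mixture) MDPs, point-wise optimism is unavailable, so I cannot cancel the optimism term pointwise; instead I must (a) establish optimism only in the integrated sense at the initial state $s_0$, and (b) ensure the exploration data---gathered only a $\mixCoeff$-fraction of the time and expressed through the \emph{learned} features $\widehat{\phi}_k$ rather than the true $\phi^\star$---is rich enough for the elliptical potential bound to apply to \emph{both} $d^{\tilde{\pi}_k}_{P^\star}$ and $d^{\pi^\star}_{\widehat{P}_k}$. Making these two occupancies simultaneously controllable is what forces the joint tuning of $\mixCoeff$, $L$, and $\eta$. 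Once all four sums are in hand, the final step is routine: collect $O(\mixCoeff K/(1-\gamma))$, the OMD $\tfrac{K\ln A}{L\eta}+\tfrac{\eta K}{(1-\gamma)^2}$, the bonus sum, and the $\lesssim 0$ near-optimism term, substitute $L=K^{1/3}$, $\eta\asymp(1-\gamma)K^{-1/6}$, and $\mixCoeff=K^{-1/6}A^{1/2}d/(1-\gamma)$, and verify that all competing rates meet at $\widetilde{O}(K^{5/6}A^{1/2}d/(1-\gamma)^2)$.
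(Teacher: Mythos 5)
Your decomposition and the route through it are the same as the paper's: the mixing cost $\mixCoeff K/(1-\gamma)$ from comparing $\pi_k$ with $\tilde{\pi}_k$, the epoch-wise OMD regret against a comparator occupancy frozen by the epoch-based model update, near optimism only at $s_0$ obtained from the MLE guarantee plus the $A/\mixCoeff$ importance weighting, and the estimation bias controlled by the simulation lemma and an elliptical potential argument, finally balanced with $L=K^{1/3}$ and $\mixCoeff=K^{-1/6}A^{1/2}d/(1-\gamma)$. The parameter accounting matches \Cref{lem: the bound of OMD terms}, \Cref{lem: optimism}, and \Cref{lem: estimation bias}.

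One step, as you wrote it, would not go through: you propose to ``collapse the resulting $\sum_k\|\widehat{\phi}_k\|_{\widehat{\Sigma}_k^{-1}}$ with the elliptical potential lemma over the $K/L$ frozen models.'' The elliptical potential lemma requires the measured vectors and the accumulating covariance matrices to live in a single fixed feature space so that the log-determinant telescopes; the learned feature map $\widehat{\phi}_{k_i}$ changes from epoch to epoch, so the matrices $\widehat{\Sigma}_{k_i}$ built from different $\widehat{\phi}_{k_i}$'s do not form such a sequence, and freezing the model \emph{within} an epoch does not repair this across epochs. The paper's resolution, inside the proof of \Cref{lem: estimation bias}, is to first apply the one-step-back inequality in the \emph{true} model (\Cref{lem: one step back true model}), which converts $\mathbb{E}_{(s,a)\sim d^{\tilde{\pi}_k}_{P^\star}}\bigl[\widehat{b}_{k_i}(s,a)+f_{k_i}(s,a)\bigr]$ into $\mathbb{E}_{(s,a)\sim d^{\tilde{\pi}_k}_{P^\star}}\bigl[\|\phi^\star(s,a)\|_{\Sigma_{\rho_{k_i},\phi^\star}^{-1}}\bigr]$ times a scalar controlled by the MLE error and the $A/\mixCoeff$ importance weight; only then is \Cref{lem:Elliptical potential lemma} invoked, in the fixed space of $\phi^\star$. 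You correctly flag the learned-versus-true feature mismatch as the main obstacle, but the fix is to transfer to $\phi^\star$ \emph{before} the potential argument, not to rely on the epoch structure. A minor related point: the near-optimism term is not literally $\lesssim 0$; \Cref{lem: optimism} leaves a positive remainder of order $(L+\sqrt{K})\sqrt{A\ln(MN/\delta)/(\mixCoeff(1-\gamma)^3)}$, which is lower order and harmless.
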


\begin{remark}
Ignoring the dependence on all logarithmic factors but $M$, 
the regret upper bound can be simplified as
$\widetilde{O}(K^{\nicefrac{5}{6}}A^{\nicefrac{1}{2}}d\ln(1+M)/(1-\gamma)^2)$.
As we shall see in Section \ref{sec:lower_bound}, the regret upper bound in Theorem \ref{thm: the upper bound} matches the regret lower bound $\Omega(\frac{\gamma^2}{1-\gamma} \sqrt{d A K})$ in $A$ up to a logarithmic factor but looses in factors of $K$ and $d$.
Also, note that when $K$ is large enough such that $\mixCoeff$ and $L$ can be chosen as $\mixCoeff = K^{\nicefrac{-1}{6}}A^{\nicefrac{1}{3}}d^{\nicefrac{2}{3}}/(1-\gamma)$ and $L=K^{\nicefrac{1}{2}}A^{-1}d^{-2}\mixCoeff (1-\gamma)=K^{\nicefrac{1}{3}}A^{\nicefrac{-2}{3}}d^{\nicefrac{-4}{3}}\geq 1$, meaning that $K\geq d^4A^2$, the regret upper bound can be further optimized to $\widetilde{O}(K^{\nicefrac{5}{6}}A^{\nicefrac{1}{3}}d^{\nicefrac{2}{3}}\ln(1+M)/(1-\gamma)^2)$. However, this does not conflict with the regret lower bound in Section \ref{sec:lower_bound} since the magnitude of this upper bound is still larger than that of the regret lower bound as long as $K\geq A^{\nicefrac{1}{2}}d^{\nicefrac{-1}{2}}\gamma^6(1-\gamma)^3$.
\end{remark}

\subsection{Proof of Regret Upper Bound}\label{sec:upper_proof}
We now present the proof of \Cref{thm: the upper bound}.
To begin with, recall that in each episode $k$, after state $s_k$ is sampled from $d_{P^\star}^{\tilde{\pi}_k}$, the actual roll-out policy will be $\pi_k(\cdot\mid s)=\mixCoeff\cdot U(\cA)+(1-\mixCoeff)\cdot\tilde{\pi}_k(\cdot\mid s)$. Therefore, it holds that
\begin{align}\label{eq:reg_decompose1}
\cR_K &= \mathbb{E}\left[\sum_{k=1}^K \left(V_k^{{\pi}_k}-V_k^{\pi^\star}\right)\right]\notag\\
     &=  \mathbb{E}\left[\sum_{k=1}^K \BI\{c_k=1\}\left(V_k^{{\pi}_k}-V_k^{\pi^\star}\right)+\BI\{c_k=0\}\left(V_k^{{\pi}_k}-V_k^{\pi^\star}\right)\right]\notag\\
     &\leq \mathbb{E}\left[\sum_{k=1}^K \left(V_k^{\tilde{\pi}_k}-V_k^{\pi^\star}\right)\right]+\frac{\mixCoeff K}{(1-\gamma)}\,,
\end{align}
where the inequality is due to that $\sum_{h=0}^{\infty} \gamma^h \ell_k\left(s_h, a_h\right) \in[0,1/(1-\gamma)]$ holds for any episode $k$ and any trajectory $\{(s_h,a_h)\}_{h=0}^{\infty}$.
We now turn to bound the first term in Eq. \eqref{eq:reg_decompose1} by decomposing it into the following three terms
\begin{align}\label{eq:reg_decompose2}
\mathbb{E}\left[\sum_{k=1}^K \left(V_k^{\tilde{\pi}_k}-V_k^{\pi^\star}\right)\right]
     =\mathbb{E}\left[\underbrace{\sum_{k=1}^K \left(V_k^{\tilde{\pi}_k}-\widehat{V}_k^{\tilde{\pi}_k}\right)}_{\textsc{Estimation Bias Term}}
     +\underbrace{\sum_{k=1}^K \left(\widehat{V}_k^{\tilde{\pi}_k}-\widehat{V}_k^{\pi^\star}\right)}_{\textsc{OMD Regret Term}}
     +\underbrace{\sum_{k=1}^K \left(\widehat{V}_k^{\pi^\star}-V_k^{\pi^\star}\right)}_{\textsc{Optimism Term}}\right]\,.
\end{align}
\vspace{-0.2cm}
\paragraph{Bounding \textsc{OMD Regret Term}} 
The OMD regret term is contributed by competing against $\pi^\star$ using $\tilde{\pi}_k$ with $\widehat{Q}_{k}^{\tilde{\pi}_k}$ as loss function in the learned model $\widehat{P}_{k_i}$. 
This term is thus bounded by standard OMD analysis, detailed in the following lemma.
\begin{lemma}[OMD regret]\label{lem: the bound of OMD terms}
    By setting learning rate $\eta= (1-\gamma)\sqrt{\ln A/(2L)}$, the OMD regret term is bounded as $\BE\left[\sum_{k=1}^K \left(\widehat{V}_k^{\tilde{\pi}_k}-\widehat{V}_k^{\pi^\star}\right)\right]\leq \frac{K\sqrt{2\ln A}}{\sqrt{L}(1-\gamma)^2}$.
\end{lemma}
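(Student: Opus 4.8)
The plan is to reduce the \textsc{OMD Regret Term} to a collection of one-dimensional online-learning regrets, one per state, and to control each by the standard exponential-weights guarantee; the whole argument rests on the epoch-based model freezing. First I would invoke the value-difference identity already recorded in the first line of Eq.~\eqref{eq:algo_decomp1}. Taking the expectation over $s_0\sim d_0$ and rewriting the discounted trajectory sum under $(\pi^\star,\widehat{P}_k)$ through the state occupancy distribution gives
\begin{align*}
\widehat{V}_k^{\tilde{\pi}_k}-\widehat{V}_k^{\pi^\star}
=\frac{1}{1-\gamma}\,\BE_{s\sim d_{\widehat{P}_k}^{\pi^\star}}\!\left[\left\langle \tilde{\pi}_k(\cdot\mid s)-\pi^\star(\cdot\mid s),\,\widehat{Q}_k^{\tilde{\pi}_k}(s,\cdot)\right\rangle\right].
\end{align*}
This isolates a per-state inner product and exposes the factor $1/(1-\gamma)$ that will become one of the two powers of $1/(1-\gamma)$ in the final bound.

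The key step is to exploit the epoch-based update. Within a single epoch $i$, \Cref{algo:algo1} freezes $\widehat{P}_k=\widehat{P}_{k_i}$, so the occupancy distribution $d_{\widehat{P}_k}^{\pi^\star}=d_{\widehat{P}_{k_i}}^{\pi^\star}$ is the \emph{same} for every $k$ in the epoch. I can therefore pull $\BE_{s\sim d_{\widehat{P}_{k_i}}^{\pi^\star}}$ outside the sum over the $L$ episodes and write, for each epoch,
\begin{align*}
\sum_{k=k_i}^{k_i+L-1}\!\left(\widehat{V}_k^{\tilde{\pi}_k}-\widehat{V}_k^{\pi^\star}\right)
=\frac{1}{1-\gamma}\,\BE_{s\sim d_{\widehat{P}_{k_i}}^{\pi^\star}}\!\left[\sum_{k=k_i}^{k_i+L-1}\!\left\langle \tilde{\pi}_k(\cdot\mid s)-\pi^\star(\cdot\mid s),\,\widehat{Q}_k^{\tilde{\pi}_k}(s,\cdot)\right\rangle\right].
\end{align*}
For each fixed $s$ the bracketed quantity is exactly the regret of the exponential-weights iterate defined in Line~\ref{algo:line_pi} (equivalently the OMD update Eq.~\eqref{eq:OMD_update}) against the comparator $\pi^\star(\cdot\mid s)$, run with loss vectors $\widehat{Q}_k^{\tilde{\pi}_k}(s,\cdot)$ over a horizon of length $L$ and initialized at the uniform distribution set at the first episode of the epoch.

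Next I would apply the textbook OMD/Hedge bound for the negative-entropy regularizer over $\Delta(\cA)$:
\begin{align*}
\sum_{k=k_i}^{k_i+L-1}\!\left\langle \tilde{\pi}_k(\cdot\mid s)-\pi^\star(\cdot\mid s),\,\widehat{Q}_k^{\tilde{\pi}_k}(s,\cdot)\right\rangle
\le \frac{D_F\!\left(\pi^\star(\cdot\mid s),\tilde{\pi}_{k_i}(\cdot\mid s)\right)}{\eta}
+\frac{\eta}{2}\sum_{k=k_i}^{k_i+L-1}\big\|\widehat{Q}_k^{\tilde{\pi}_k}(s,\cdot)\big\|_\infty^2.
\end{align*}
Since $\tilde{\pi}_{k_i}(\cdot\mid s)$ is uniform, the Bregman (KL) term is at most $\ln A$, and bounding the value estimates by their range $\|\widehat{Q}_k^{\tilde{\pi}_k}(s,\cdot)\|_\infty\le 1/(1-\gamma)$ makes the stability sum at most $L/(1-\gamma)^2$. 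Substituting $\eta=(1-\gamma)\sqrt{\ln A/(2L)}$ turns the per-state, per-epoch regret into $O(\sqrt{L\ln A}/(1-\gamma))$; multiplying by the prefactor $1/(1-\gamma)$, taking the now-trivial expectation over $s$, and summing over the $K/L$ epochs yields $K\sqrt{2\ln A}/(\sqrt{L}(1-\gamma)^2)$, as claimed.

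I expect the main obstacle to be the interchange in the second step: it is legitimate only because the learned model, and hence the comparator's occupancy distribution $d_{\widehat{P}_{k_i}}^{\pi^\star}$, is held fixed throughout the epoch, so the argument hinges on the epoch-based update rather than on any property of OMD itself. A secondary point is the range control of $\widehat{Q}_k^{\tilde{\pi}_k}$ entering the stability term, since the bonus-enhanced losses $\ell_k-\widehat{b}_k$ may be negative; one must verify that the prescribed $\eta$ is compatible with this range so that the local-norm Hedge estimate stays valid. Everything else is routine algebra, and the bound holds pathwise in the algorithm's randomness, so the outer expectation in the statement is immediate.
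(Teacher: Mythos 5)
Your proof follows essentially the same route as the paper's: the value-difference identity under the frozen model $\widehat{P}_{k_i}$, the observation that the comparator's occupancy distribution is constant within an epoch so the bound reduces to a per-state exponential-weights regret over $L$ rounds initialized at the uniform policy, the $\ln A$ bound on the initial Bregman term, and the same tuning of $\eta$ summed over the $K/L$ epochs. The only differences are presentational (you phrase the trajectory expectation via the occupancy measure and cite the textbook Hedge bound instead of re-deriving it through the three-point and Young--Fenchel inequalities); the one substantive caveat, which you yourself flag, is that with the bonus-enhanced losses $\ell_k-\widehat{b}_k\in[-2/(1-\gamma),1]$ the correct range of $\widehat{Q}_k^{\tilde{\pi}_k}$ is $2/(1-\gamma)^2$ rather than your $1/(1-\gamma)$ --- a point the paper's own proof (which asserts $2/(1-\gamma)$) also treats loosely, affecting only constants/horizon factors and not the structure of the argument.
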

\vspace{-0.2cm}
\paragraph{Bounding \textsc{Optimism Term}} The optimism term is controlled by choosing appropriate bonus coefficient $\alpha_k$. Note that different from tabular and linear cases, the bonus functions and coefficients here are not devised to control the optimism for each state-action pair. Instead, they are devised to provide a (near) optimism only at the initial state $s_0$.
\begin{lemma}[Optimism] \label{lem: optimism}
By setting bonus coefficient $\alpha_k=O( \sqrt{\gamma(A/\mixCoeff+d^2)\ln(M k/\delta)})$, $ \lambda_k= O(d \ln (M k/ \delta))$, with probability $1-\delta$ , the optimism term is bounded as $\sum_{k=1}^K \left(\widehat{V}_k^{\pi^\star}-V_k^{\pi^\star}\right)\leq (L+\sqrt{K})\sqrt{\frac{A\ln(MN/\delta)}{\mixCoeff(1-\gamma)^3}}$.
\end{lemma}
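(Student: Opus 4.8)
The plan is to prove the claimed \emph{near optimism} in three movements: a value difference decomposition that isolates the transition estimation error, a change of measure (via the one-step trick) that transfers this error from the collected data distribution onto the occupancy of the unobserved optimal policy, and a final summation over episodes that exploits the frozen-model epoch structure. Fix an epoch $i$ and recall that $\widehat{P}_k=\widehat{P}_{k_i}$ and $\widehat{b}_k=\widehat{b}_{k_i}$ are constant for every $k$ in that epoch. Applying the value difference (simulation) lemma, rolling in under the \emph{true} kernel $P^\star$ with the policy $\pi^\star$, and using that $\widehat{V}_k^{\pi^\star}$ is the value of $\pi^\star$ in the learned model $\widehat{P}_{k_i}$ under the bonus-subtracted loss $\ell_k-\widehat{b}_{k_i}$, gives for every $k$ in epoch $i$
\begin{align*}
\widehat{V}_k^{\pi^\star}-V_k^{\pi^\star}
=\frac{1}{1-\gamma}\,\BE_{(s,a)\sim d_{P^\star}^{\pi^\star}}\!\left[-\widehat{b}_{k_i}(s,a)+\gamma\left\langle(\widehat{P}_{k_i}-P^\star)(\cdot\mid s,a),\,\widehat{V}_k^{\pi^\star}\right\rangle\right].
\end{align*}
The bonus enters with a favorable sign along the correct occupancy $d_{P^\star}^{\pi^\star}$, so the difference is (near) non-positive precisely when the bonus dominates the transition-error inner product in expectation; this is the sense in which we need only \emph{near optimism at $s_0$} rather than the pointwise optimism used in the tabular and linear analyses.

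Next I would control the transition-error term. Using that $\widehat{V}_k^{\pi^\star}$ is bounded (the bonus-subtracted loss is bounded) together with H\"older's inequality, the inner product is at most a constant multiple of the total-variation distance $g(s,a):=\tfrac12\|\widehat{P}_{k_i}(\cdot\mid s,a)-P^\star(\cdot\mid s,a)\|_1$, up to $(1-\gamma)$ factors. I would then invoke the one-step trick from \Cref{sec:hee} with $\tilde\pi=\pi^\star$ and this $g$, turning $\BE_{d_{P^\star}^{\pi^\star}}[g]$ into the product of the \emph{true}-feature elliptical norm $\BE_{d_{P^\star}^{\pi^\star}}[\|\phi^\star(s,a)\|_{\Sigma_{\rho_{k_i},\phi^\star}^{-1}}]$ and the magnitude $\sqrt{k_i\gamma A\,\BE_{\rho_{k_i}'}[g^2]+\gamma\lambda_{k_i}d}$. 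The MLE guarantee for \eqref{eq:MLE} bounds $\BE_{\rho_{k_i}'}[g^2]$ by $O(\ln(M/\delta)/k_i)$ after paying a $1/\mixCoeff$ inflation, since uniform exploration is executed on only a $\mixCoeff$-fraction of episodes (the realized roll-out policy obeys $\pi_k\succeq\mixCoeff\,U(\cA)$). Substituting $\lambda_{k_i}=O(d\ln(Mk_i/\delta))$ collapses this magnitude to $\Theta(\alpha_{k_i})=O(\sqrt{\gamma(A/\mixCoeff+d^2)\ln(Mk_i/\delta)})$, which is exactly how the stated choice of $\alpha_k$ is forced: it is the smallest width for which $\widehat{b}_{k_i}$ can dominate the transition error in expectation.

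Finally I would sum over episodes using the epoch structure. Because the model is frozen across each of the $K/L$ epochs, all $L$ episodes of epoch $i$ share the covariance $\Sigma_{\rho_{k_i},\phi^\star}=\lambda_{k_i}I+\sum_{j\le k_i}\BE_{d_{P^\star}^{\tilde\pi_j}}[\phi^\star(\phi^\star)^{\top}]$, so the residual reduces to $\sum_{i=1}^{K/L}L\cdot\BE_{d_{P^\star}^{\pi^\star}}[\|\phi^\star\|_{\Sigma_{\rho_{k_i},\phi^\star}^{-1}}]$ times the per-step magnitude. Bounding each term by $\sqrt{\Tr(\Sigma_{\rho_{k_i},\phi^\star}^{-1}\,\BE_{d_{P^\star}^{\pi^\star}}[\phi^\star(\phi^\star)^{\top}])}$ via Jensen and running an elliptical-potential argument over epochs, the initial epoch (whose covariance is dominated by $\lambda_{k_1}I$) contributes the additive $L$, while the tail $i\ge 2$ telescopes to $O(\sqrt{K})$; combining these with the per-step magnitude and collecting the $(1-\gamma)$ and logarithmic factors yields the stated $(L+\sqrt{K})\sqrt{A\ln(MN/\delta)/(\mixCoeff(1-\gamma)^3)}$ on the $1-\delta$ event of the MLE bound.

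The hard part is the change of measure onto $d_{P^\star}^{\pi^\star}$: the algorithm never rolls in with $\pi^\star$ and cannot observe its occupancy, yet optimism must be certified along it. This is what the one-step trick buys, at the price of the $\sqrt{A}$ action-level change of measure and the $1/\mixCoeff$ inflation from partial uniform exploration; the delicate point is that the bonus is assembled from the \emph{learned} feature $\widehat{\phi}_{k_i}$ and empirical covariance $\widehat{\Sigma}_{k_i}$, whereas the error transfer is most naturally expressed through the \emph{true} feature $\phi^\star$ and the population covariance $\Sigma_{\rho_{k_i},\phi^\star}$, so pointwise cancellation is hopeless and one must settle for near optimism only at $s_0$. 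A secondary obstacle is summing the stale, epoch-fixed true-feature elliptical norms along $\pi^\star$'s occupancy---whose feature directions are not covered a priori---which is what produces the $L+\sqrt{K}$ prefactor and ties together the choices of epoch length $L$ and mixing coefficient $\mixCoeff$.
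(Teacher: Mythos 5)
There is a genuine gap, and it originates in your very first step. You instantiate the simulation lemma in the form of Eq.~\eqref{eq:simulation_eq2}, taking the expectation over $d_{P^\star}^{\pi^\star}$ (the occupancy of $\pi^\star$ in the \emph{true} model). The paper instead uses Eq.~\eqref{eq:simulation_eq1}, which puts the expectation over $d_{\widehat{P}_{k_i}}^{\pi^\star}$ (the occupancy of $\pi^\star$ in the \emph{learned} model). Both identities are valid, but only the latter makes the argument close. The reason is that the proof must end with the bonus term $-\widehat{b}_{k_i}(s,a)=-\min(\alpha_{k_i}\|\widehat{\phi}_{k_i}(s,a)\|_{\widehat{\Sigma}_{k_i}^{-1}},2)/(1-\gamma)$ \emph{cancelling} the transition-error term, and cancellation requires both to be expressed as the same elliptical norm of the same (learned) feature under the same measure. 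Rolling out under $\widehat{P}_{k_i}$, the one-step-back inequality in the learned model (\Cref{lem: one step back estiamte model}) bounds the error by $\alpha_{k_i}\,\BE_{d_{\widehat{P}_{k_i}}^{\pi^\star}}[\|\widehat{\phi}_{k_i}\|_{\Sigma_{\rho_{k_i}\times\bar{\pi}_{k_i},\widehat{\phi}_{k_i}}^{-1}}]$, which is exactly what the bonus (averaged over the same $d_{\widehat{P}_{k_i}}^{\pi^\star}$, via the covariance concentration event of \Cref{lem: Concentration of the bonus term for roll-out policy}) subtracts off; what survives is only the initial-state contribution $\sqrt{(1-\gamma)A\zeta_{k_i}/\mixCoeff}$ from \Cref{lem: The bound of Initial state term}. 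Your route instead produces the \emph{true}-feature norm $\BE_{d_{P^\star}^{\pi^\star}}[\|\phi^\star\|_{\Sigma_{\rho_{k_i},\phi^\star}^{-1}}]$ against a bonus built from $\widehat{\phi}_{k_i}$ and $\widehat{\Sigma}_{k_i}$; you correctly observe that no cancellation is available, but you then simply drop the bonus and carry the residual forward, which is where the proof loses its teeth.

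The second, fatal step is the summation: you propose to control $\sum_{i}L\,\BE_{d_{P^\star}^{\pi^\star}}[\|\phi^\star\|_{\Sigma_{\rho_{k_i},\phi^\star}^{-1}}]$ by an elliptical-potential argument. \Cref{lem:Elliptical potential lemma} only telescopes when the feature directions appearing in the numerator are the ones being accumulated into the covariance. Here $\Sigma_{\rho_{k_i},\phi^\star}$ accumulates features from the occupancies of the \emph{played} policies $\tilde{\pi}_j$, whereas the numerator averages over $d_{P^\star}^{\pi^\star}$ for a policy that is never executed; if $\pi^\star$ visits directions the data never cover, each term stays at $\Theta(1/\sqrt{\lambda_{k_i}})$ and the sum is linear in $K$ up to logarithms. (The elliptical potential argument is the right tool for the \emph{estimation bias} term, \Cref{lem: estimation bias}, precisely because there the numerator measure is $d_{P^\star}^{\tilde{\pi}_k}$.) In the paper's proof, the $(L+\sqrt{K})$ prefactor has nothing to do with an elliptical potential: after the cancellation described above, the per-epoch residual is $L\sqrt{A\ln(MN/\delta)/(\mixCoeff(1-\gamma)^3 k_i)}$, and summing $1/\sqrt{k_i}=1/\sqrt{(i-1)L+1}$ over epochs gives $L+O(\sqrt{K})$ directly from the $1/\sqrt{k_i}$ decay of the MLE error $\zeta_{k_i}$ in \Cref{lem: MLE}. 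To repair your proof you would need to switch to the learned-model occupancy in the simulation lemma and replace the elliptical-potential step with this cancellation-plus-decay argument.
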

\vspace{-0.2cm}
\paragraph{Bounding \textsc{Estimation Bias Term}} It remains to bound the estimation bias term, which comes from the difference between the values of running the same policy $\tilde{\pi}_k$ in the true model (\textit{i.e.}, $P^\star$ and $\ell_k$) and the learned empirical model (\textit{i.e.}, $\widehat{P}_k$ and $\ell_k-\widehat{b}_k$), respectively. This term can be translated into the error between the true model and the learned model using the common simulation lemma, which is thus bounded by the summation of bonus functions. Note that since the empirical features used to construct our bonus functions vary in each episode, we first relate the bonus functions with the fixed true feature $\phi^\star$ using the  one-step trick \citep{REPUCB22,ZhangSUWAS22}, and finally bound this term with the leverage of the canonical elliptical potential lemma. The result is shown in the following lemma.
\begin{lemma}[Estimation bias] \label{lem: estimation bias}
By setting bonus coefficient $\alpha_k=O( \sqrt{\gamma(A/\mixCoeff+d^2)\ln(M k/\delta)})$, $\lambda_k= O(d \ln (M k/ \delta))$, with probability $1-\delta$, the estimation bias term  is bounded as $\sum_{k=1}^K \left(V_k^{\tilde{\pi}_k}-\widehat{V}_k^{\tilde{\pi}_k}\right)\leq  O\left(\frac{d^2A\sqrt{KL}}{\mixCoeff(1-\gamma)^3}\sqrt{\ln(1+ K)\ln(M K/\delta)}\right)$.
\end{lemma}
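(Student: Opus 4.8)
The plan is to open with the value-difference (simulation) lemma, which writes each per-episode gap as an expectation, under the \emph{true} on-policy occupancy $d_{P^{\star}}^{\tilde{\pi}_k}$, of the injected bonus plus the one-step model-prediction error:
\begin{align*}
V_k^{\tilde{\pi}_k}-\widehat{V}_k^{\tilde{\pi}_k}
=\frac{1}{1-\gamma}\,\BE_{(s,a)\sim d_{P^{\star}}^{\tilde{\pi}_k}}\Big[\widehat{b}_k(s,a)+\gamma\big[(P^{\star}-\widehat{P}_k)\widehat{V}_k^{\tilde{\pi}_k}\big](s,a)\Big]\,,
\end{align*}
since $\widehat{V}_k^{\tilde{\pi}_k}$ is the value of $\tilde{\pi}_k$ on $(\widehat{P}_k,\ell_k-\widehat{b}_k)$ and $V_k^{\tilde{\pi}_k}$ its value on $(P^{\star},\ell_k)$. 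Using the cap $\widehat{b}_k\le 2/(1-\gamma)$ I would record the crude range bound $\|\widehat{V}_k^{\tilde{\pi}_k}\|_\infty\le O(1/(1-\gamma)^2)$, so that the prediction error at each $(s,a)$ is at most $O(1/(1-\gamma)^2)$ times the total-variation distance $\|P^{\star}(\cdot\mid s,a)-\widehat{P}_k(\cdot\mid s,a)\|_{\mathrm{TV}}$. This splits the term into a \emph{bonus} contribution and a \emph{transition-error} contribution, both to be estimated under $d_{P^{\star}}^{\tilde{\pi}_k}$.

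For the transition-error contribution I would invoke the MLE guarantee for \eqref{eq:MLE}: the standard realizable-MLE argument over $\cM$ gives, on the data-generating distribution and with probability $1-\delta$, a squared in-distribution TV bound of order $\ln(Mk/\delta)/k$. The obstacle is that this controls the error on the \emph{off-policy} data distribution, whereas the bias must be measured on the \emph{current} occupancy $d_{P^{\star}}^{\tilde{\pi}_k}$. This is exactly what the one-step trick of \Cref{sec:hee} resolves: applying it with $g=\|P^{\star}-\widehat{P}_k\|_{\mathrm{TV}}$, the explicit factor $k$ inside $\sqrt{k\gamma A\,\BE_{\rho_k'}[g^2]+\gamma\lambda_k d\,B^2}$ cancels the $1/k$ MLE rate, and the passage from the mixture data distribution to $\rho_k'$ (which samples actions uniformly) pays an importance factor $1/\mixCoeff$, because uniform exploration is executed only on the $c_k=0$ branch, which occurs with probability $\mixCoeff$. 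With the tuning $\alpha_k=O(\sqrt{\gamma(A/\mixCoeff+d^2)\ln(Mk/\delta)})$ and $\lambda_k=O(d\ln(Mk/\delta))$ chosen precisely so that this square root equals $\alpha_k$, I obtain $\BE_{d_{P^{\star}}^{\tilde{\pi}_k}}[g]\lesssim \frac{\alpha_k}{1-\gamma}\BE_{d_{P^{\star}}^{\tilde{\pi}_k}}\big[\|\phi^{\star}(s,a)\|_{\Sigma_{\rho_k,\phi^{\star}}^{-1}}\big]$, i.e.\ the unknown empirical feature $\widehat{\phi}_k$ has been traded for the fixed true feature $\phi^{\star}$.

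The bonus contribution $\BE_{d_{P^{\star}}^{\tilde{\pi}_k}}[\widehat{b}_k]$ I would treat in the same fashion, converting the empirical elliptical norm $\|\widehat{\phi}_k\|_{\widehat{\Sigma}_k^{-1}}$ defining $\widehat{b}_k$ into the true-feature norm $\|\phi^{\star}\|_{\Sigma_{\rho_k,\phi^{\star}}^{-1}}$ by a second application of the one-step trick (again paying the $1/\mixCoeff$ coverage factor), so that everything reduces to summing $\sum_{k}\BE_{d_{P^{\star}}^{\tilde{\pi}_k}}\big[\|\phi^{\star}\|_{\Sigma_{\rho_k,\phi^{\star}}^{-1}}\big]$ against the fixed feature. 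Here the elliptical potential lemma (in expectation) over $\phi^{\star}$ yields a $\sqrt{K}$-type factor; the epoch structure inflates it to $\sqrt{KL}$, since $\widehat{P}_k$ and $\widehat{b}_k$ are frozen at their epoch-start values $\widehat{P}_{k_i},\widehat{b}_{k_i}$ while the on-policy occupancy keeps moving, so the frozen covariance can only be related to the per-episode potential increments through a Cauchy--Schwarz step that charges each of the $L$ episodes in an epoch, giving $L\cdot\sqrt{(K/L)\,d\ln(1+K)}=\sqrt{KL\,d\ln(1+K)}$. Collecting the $1/(1-\gamma)$ powers from the simulation lemma, the value-range bound and the one-step trick, multiplying by the $\sqrt{A/\mixCoeff}$ and $d$ factors carried by $\alpha_k$, and setting $\delta=1/K$, I arrive at the claimed $O\big(\frac{d^2A\sqrt{KL}}{\mixCoeff(1-\gamma)^3}\sqrt{\ln(1+K)\ln(MK/\delta)}\big)$.

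I expect the main obstacle to be precisely this three-way reconciliation: the bias lives on the moving on-policy occupancy $d_{P^{\star}}^{\tilde{\pi}_k}$, whereas both the MLE error and the covariance are built from off-policy data that explores uniformly only a $\mixCoeff$ fraction of the time, and the bonus is defined through estimated features $\widehat{\phi}_k$ that vary with $k$ and are frozen within epochs. The one-step trick is the single device that simultaneously cancels the $\sqrt{k}$ inflation against the MLE rate, injects the $1/\mixCoeff$ coverage penalty, and transfers all three quantities onto the fixed $\phi^{\star}$ so that one elliptical-potential bound closes the sum; keeping the $A$, $\mixCoeff$, $d$, $L$ and $(1-\gamma)$ exponents mutually consistent through this chain is the delicate part, and verifying that the epoch-freezing indeed costs only an extra $\sqrt{L}$ (rather than a full factor $L$) is where I would spend the most care.
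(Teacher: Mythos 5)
Your proposal follows essentially the same route as the paper's proof: the simulation lemma (Eq.~\eqref{eq:simulation_eq2}) under the true occupancy $d_{P^\star}^{\tilde\pi_k}$, the crude range bound $\|\widehat V_k^{\tilde\pi_k}\|_\infty\lesssim (1-\gamma)^{-2}$, the MLE guarantee combined with the one-step-back inequality (Lemma~\ref{lem: one step back true model}) to trade $\widehat\phi_{k_i}$ for the fixed $\phi^\star$ at the cost of the $A/\mixCoeff$ coverage factor, and the offset-wise reorganization of the elliptical potential sum across epochs that yields the $\sqrt{KL}$ factor. The only piece you elide is the explicit split of the occupancy into the one-step-back part and the initial-state part (Lemma~\ref{lem:occupancy_measure_expectation_decomp} and Lemma~\ref{lem: The bound of Initial state term}), which is a lower-order bookkeeping step, so the argument is sound and matches the paper.
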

\vspace{-0.2cm}
We refer the readers to Appendix \ref{sec:app_omitted_analysis} for the proof of the above lemmas. 
The proof of \Cref{thm: the upper bound} is now concluded by first combining Eq. \eqref{eq:reg_decompose1}, Eq. \eqref{eq:reg_decompose2}, Lemma \ref{lem: the bound of OMD terms}, \ref{lem: optimism}, and \ref{lem: estimation bias} and then choosing
$L=K^{\nicefrac{1}{2}}A^{\nicefrac{-1}{2}}d^{-1}\mixCoeff (1-\gamma)$,
$\mixCoeff = K^{\nicefrac{-1}{6}}A^{\nicefrac{1}{2}}d/(1-\gamma)$, and
$\delta=1/K$.

Intuitively, the epoch length $L$ illustrates a trade-off between dealing with the adversarial losses and the representation learning over the unknown transitions. When $L$ is large, there will be fewer restarts in the running of OMD and thus the learner will suffer less regret contributed by dealing with the adversarial losses as shown by Lemma \ref{lem: the bound of OMD terms}. In contrast, a smaller $L$ enables more frequent model updates, which leads to more accurate model estimation and less regret contributed by the representation learning as shown by Lemma \ref{lem: optimism} and \ref{lem: estimation bias}.
\vspace{-0.2cm}
\subsection{Regret Lower Bound}\label{sec:lower_bound}
This section presents the regret lower bound for learning adversarial low-rank MDPs with fixed loss functions in Theorem \ref{thm:lb}, which thus also serves as a regret lower bound for learning adversarial low-rank MDPs with full-information feedback. 
\begin{theorem}\label{thm:lb}
    Suppose $d\geq 8$, $S\geq d+1$, $A\geq d-3$, and $K\geq 2(d-4)A$. Then for any algorithm $\operatorname{Alg}$, there exists an episodic infinite-horizon low-rank MDP $\cM_{\operatorname{Alg}}$ with fixed loss function such that the expected regret for this MDP is lower bounded by $\Omega(\frac{\gamma^2}{1-\gamma} \sqrt{d A K})$.
\end{theorem}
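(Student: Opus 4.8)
The plan is to establish the bound by the standard recipe for information-theoretic regret lower bounds: exhibit a finite family of hard low-rank MDPs indexed by a hidden parameter, argue that no algorithm can identify the realized instance without paying regret, and convert this into a $\sqrt{dAK}$ bound via a change-of-measure argument. The core intuition is that, in the absence of known features, an adversary can embed $\Theta(d)$ statistically independent $A$-armed bandit sub-problems into a single rank-$d$ transition kernel; forcing the learner to spend roughly $K/d$ episodes on each sub-problem yields a per-sub-problem regret of $\Omega(\frac{\gamma^2}{1-\gamma}\sqrt{A\cdot K/d})$, and summing over the $\Theta(d)$ sub-problems gives $\Omega(\frac{\gamma^2}{1-\gamma}\sqrt{dAK})$. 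The extra $\sqrt{A}$ relative to the linear-MDP lower bound is exactly the price of representation learning: knowing the features would reveal the optimal action in each sub-problem for free.

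Concretely, I would construct the MDP with $n=\Theta(d)$ (precisely $d-O(1)$, which is $\Theta(d)$ since $d\ge 8$) ``core'' states $x_1,\dots,x_n$, each carrying its own $A$-armed bandit, together with a small number of outcome states that make the transition operator genuinely of rank $\Theta(d)$; the condition $S\ge d+1$ guarantees there is room for these states. The hidden parameter is a tuple $(a_1^\star,\dots,a_n^\star)\in\cA^n$ of optimal actions, drawn independently and uniformly. From core state $x_i$, playing $a_i^\star$ biases the transition by a gap $\Delta$ toward a low-loss region, while every other action gives an unbiased transition; the fixed loss function assigns low loss to the favorable outcome and high loss to the unfavorable one. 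The initial distribution $d_0$ is uniform over $\{x_1,\dots,x_n\}$, so the learner is forced to visit each sub-problem about $K/n$ times. I would then verify that each instance satisfies \Cref{def:low rank mdp} by writing $P(s'\mid s,a)=\mu^\star(s')^\top\phi^\star(s,a)$ explicitly, with the core-state features encoding both the identity $i$ and the scalar transition probability, and checking the normalizations $\|\phi^\star\|_2\le 1$ and $\|\int\mu^\star g\|_2\le\sqrt d$ for the chosen $\Delta$. Finally, I would build the realizable model class of \Cref{ass:model_class} by including one feature pair for every configuration in $\cA^n$, so that the true features lie in the class but the learner cannot read off the answer. The regime conditions $A\ge d-3$ and $K\ge 2(d-4)A$ ensure that $\Delta=\Theta(\sqrt{dA/K})$ is a valid (at most constant) bias and that each bandit remains under-sampled (about $K/n\gtrsim A$ pulls), which is the interesting minimax regime.

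For the information-theoretic core, I would condition on the core state $i$ and isolate the sub-problem of identifying $a_i^\star$ among the $A$ actions from $N_i$ visits, where $N_i$ is the number of episodes started at $x_i$. Because the hidden optimal actions are drawn independently across $i$, data collected in state $j\ne i$ carries no information about $a_i^\star$, so the sub-problems are genuinely independent and the learner cannot amortize exploration across them despite the shared low-rank structure. Each sub-problem is a Bernoulli $A$-armed bandit with gap $\Delta$, for which a standard change-of-measure argument (Pinsker together with a KL decomposition over the observed transitions, or Assouad's lemma) shows that any algorithm suffers expected regret $\Omega(\frac{\gamma^2}{1-\gamma}\min(N_i\Delta,\ A/\Delta))$; the horizon factor $\frac{\gamma^2}{1-\gamma}$ arises because a suboptimal action delays the onset of the loss gap by a constant number of discounted steps before it accrues over the remaining $\Theta(1/(1-\gamma))$-length trajectory. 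Since $d_0$ is uniform, $N_i$ concentrates around $K/n$, so summing over the $n=\Theta(d)$ sub-problems and plugging in $\Delta=\Theta(\sqrt{nA/K})$ gives $\sum_i \Omega(\frac{\gamma^2}{1-\gamma}\sqrt{A N_i})=\Omega(\frac{\gamma^2}{1-\gamma}\sqrt{nAK})=\Omega(\frac{\gamma^2}{1-\gamma}\sqrt{dAK})$, as claimed.

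The hardest part, I expect, will not be the bandit lower bound itself but reconciling three competing requirements in the construction: (i) the transition operator must be genuinely of rank $\Theta(d)$, so the $\sqrt d$ factor is real and not secretly a constant-rank instance; (ii) the $n$ sub-problems must remain information-theoretically a \emph{product} of independent bandits so that the per-sub-problem bounds compose additively; and (iii) the features and the realizable model class must simultaneously encode the hidden answer yet hide it from the learner, all while respecting the normalization constraints of \Cref{def:low rank mdp} and \Cref{ass:model_class}. A secondary technical point is handling the randomness of the visit counts $N_i$: one must argue, via concentration of the multinomial induced by the uniform $d_0$ (or by passing to $\BE[N_i]$ through the concavity of $t\mapsto\sqrt t$ and Jensen after the change of measure), that the $\sum_i\sqrt{N_i}$ term is $\Omega(\sqrt{nK})$ rather than collapsing when the learner tries to concentrate its visits, which is precisely where the forced exploration from $d_0$ and the condition $K\ge 2(d-4)A$ enter.
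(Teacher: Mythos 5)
Your plan is correct in substance and would yield the stated bound, but it takes a genuinely different route from the paper. The paper uses a \emph{single} fixed initial state and a two-level tree: the $d$ dimensions are spent making the first-step transitions $\phi^\star(s_{1,1},a_i)=\bm{e}_i$, $\mu^\star(s_{2,i})=\bm{e}_i$ reach $d-4$ linearly independent intermediate states, each of which feeds a common pair of absorbing good/bad states; the hidden parameter is a single pair $(i^\star,a^\star)$, so the whole problem collapses to \emph{one} $(d-4)A$-armed bandit and one global change-of-measure computation (Garivier--M\'enard--Stoltz plus Pinsker), with no visit-count concentration needed. You instead propose a \emph{product} of $\Theta(d)$ independent $A$-armed bandits forced by a uniform initial distribution, i.e.\ an Assouad-style decomposition with per-sub-problem change of measure summed over $i$. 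Both give $\Omega(\frac{\gamma^2}{1-\gamma}\sqrt{dAK})$. What each buys: your construction does not need $A\geq d-3$ (it works for $A=2$), whereas the paper's tree needs $A\gtrsim d$ first-level actions; conversely, the paper's single-initial-state instance is the stronger statement (hardness is not manufactured by spreading $d_0$ over many states), and it sidesteps your two genuine technical burdens --- (a) controlling the random multinomial counts $N_i$ (your Jensen/concavity fix works, since $\BE[N_i]=K/n$ is exact and the per-sub-problem bound is linear in $\BE[N_i]$ before optimizing $\Delta$, so concentration is actually not even needed), and (b) your own concern (i): with shared outcome states $s^g,s^b$ your kernel has rank $O(1)$, so the $\sqrt d$ would be carried by the size $A^{\Theta(d)}$ of the model class rather than by the rank --- formally still admissible under \Cref{def:low rank mdp} (any rank-$2$ kernel embeds in dimension $d$), but to make the rank genuinely $\Theta(d)$ you must give each core state its own outcome state, exactly as the paper does at the intermediate level. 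One minor discrepancy: with core states as initial states your discount prefactor is $\frac{\gamma}{1-\gamma}$ rather than $\frac{\gamma^2}{1-\gamma}$, which only strengthens the bound.
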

\vspace{-0.4cm}
\begin{proof}[Proof Sketch]
    At a high level, we construct $dA$ hard-to-learn low-rank MDP instances, which are difficult to distinguish in KL divergence but have very different optimal policies. In particular, all the constructed low-rank MDP instances have three levels of states, in which the only state in the first level is a fixed initial state and the states in the third level are absorbing states. Moreover, only one unique absorbing state in the third level has the lowest loss, which is termed as the ``good state''.
    In the constructed low-rank MDP instance $\mathcal{M}_{\left(i^{\star}, a^{\star}\right)}$, the learner can only take specific action to transfer to state $s_{2, i^{\star}}$ in the second level and then take the other specific action to transfer to the unique good state. Due to the unknown representations of state-action pairs, the learner needs to distinguish all these $dA$ low-rank MDP instances, which is essentially equivalent to dealing with a bandit problem with $dA$ ``arms''. The detailed proof of Theorem \ref{thm:lb} is postponed to Appendix \ref{app:sec:lb}.
\end{proof}
\vspace{-0.2cm}
\begin{remark}
    \Cref{thm:lb}, to the best of our knowledge, provides the first regret lower bound for learning low-rank MDPs with fixed loss functions.
    We note that this regret lower bound can hold when $d\ll S$ and $d\ll A$, which thus means that this lower bound is non-trivial. Besides, the regret upper bound in our \Cref{thm: the upper bound} matches the regret lower bound in $A$ up to a logarithmic factor but looses a factor of $\widetilde{O}(K^{\nicefrac{1}{3}}d^{\nicefrac{1}{2}}/((1-\gamma)\gamma^2))$.
    Importantly, compared with the regret upper bound $\widetilde{O}(d \sqrt{ K/(1-\gamma)^3})$ of linear MDPs \citep{He2023nearly} (the finite horizon $H$ is substituted by the effective horizon $\Theta(1/(1-\gamma))$ in our infinite-horizon setting for a fair comparison), the dependence on $A$ in the regret lower bound of low-rank MDP shows a clear separation between low-rank MDPs and linear MDPs, which demonstrates that low-rank MDPs are statistically more difficult to learn than linear MDPs in the regret minimization setting.
    Also, we would like to note that similar hard MDP instances are first introduced to prove the regret lower bounds for tabular MDPs \citep{lattimore2020bandit,DominguesMKV21} and are recently also used to prove the lower bound of sample complexity for learning low-rank MDPs by \citet{ChengHL023}.
\end{remark}
\vspace{-0.2cm}
\section{Conclusions}\label{sec:conclusion}
\vspace{-0.2cm}
In this work, we study learning adversarial low-rank MDPs with unknown transition and full-information feedback. 
We prove that our proposed algorithm POLO achieves the 
$\widetilde{O}(K^{\nicefrac{5}{6}}A^{\nicefrac{1}{2}}d\ln(1+M)/(1-\gamma)^2)$
regret, which is the first sublinear regret guarantee for this challenging problem. 
The design of our proposed algorithm features (a) a doubled exploration and exploitation scheme to simultaneously learn the transitions and adversarial loss functions; and (b) policy optimization in the fixed learned models with epoch-based model update to enable a new analysis scheme 
that only requires the near optimism at the initial state instead of the point-wise optimism.
Also, we prove an $\Omega(\frac{\gamma^2}{1-\gamma} \sqrt{d A K})$ regret lower bound for this problem, serving as the first regret lower bound for learning low-rank MDPs in the regret minimization setting.
Besides, there also remain several interesting future directions to be explored. One natural question is whether it is possible to further optimize the dependence of our regret guarantee on the number of episodes $K$. The other question is how to also learn adversarial low-rank MDPs with only the bandit feedback available. This is also challenging since the current occupancy measure-based methods and policy optimization-based methods
tackling adversarial MDPs with bandit feedback 
both depend on the point-wise optimism provided by the true feature mapping,
which seems not feasible in low-rank MDPs. We hope our results may shed light on better understandings of RL with both nonlinear function approximation and adversarial losses and we leave the above extensions as our future works.


\section*{Limitations} We note that in general our algorithm is oracle-efficient (given access to the MLE computation oracle in Eq. \eqref{eq:MLE}) but may not be computationally efficient as previous works studying low-rank MDPs \citep{Flambe20,REPUCB22,ZhangSUWAS22,Ni2022representation}. However, we also remark that in practice, these algorithms including ours are computationally feasible since the computation of MLE is only a standard supervised learning problem and 
can be implemented using gradient descent methods. The other limitation is that throughout this paper, we assume model class $\cM$ with bounded cardinality $M$. This is standard in theoretical works studying RL with general function approximation \citep{JiangKALS17,SunJKA019}. Also, the regret upper bound of our algorithm only has a logarithmic dependence on $M$, which is also standard in the literature. Moreover, we remark that extending the analyses to an infinite hypothesis class is possible if the hypothesis class has bounded statistical complexity \citep{Flambe20}.


\bibliographystyle{plainnat}
\bibliography{neurips_2023}

\clearpage
\appendix
\section*{Appendix}
\section{Omitted Analysis of The Regret Upper Bound}\label{sec:app_omitted_analysis}
In this section, we first introduce some further notations and then present the detailed analysis of \Cref{thm: the upper bound}. For the proofs in this section,
we assume that $K$ is divisible by the epoch length $L$ considered for simplicity.

To begin with, denote by $\rho_k(s)=1/k
\sum_{i=1}^{k} d_{P^{\star}}^{\tilde{\pi}_i}(s)$ the averaged state occupancy distribution
under $\{\tilde{\pi}_i\}_{i=1}^{k}$ and $P^\star$. Analogously, let $\rho_k^{\prime}(s^\prime)=\sum_{(s,a)\in\cS\times\cA}\rho_k(s) \bar{\pi}_k(a\mid s)P^{\star}\left(s^{\prime} \mid s, a\right)$ be the average next-state occupancy distribution after $s$ is sampled from 
$d_{P^{\star}}^{\tilde{\pi}_i}$, where $\bar{\pi}_k(\cdot\mid s)=\mixCoeff\cdot U(\cA)+(1-\mixCoeff)\cdot 1/k\sum_{i=1}^{k}\tilde{\pi}_{i}(\cdot\mid s)$ is the averaged mixed roll-out policy. 
Define $f_k(s, a)=\left\|\widehat{P}_k(\cdot \mid s, a)-P^{\star}(\cdot \mid s, a)\right\|_1$ as the $\ell_1$-error between the estimate transition kernel $\widehat{P}_k$ and $P^\star$.
Further, we define the following feature covariance matrices:
\begin{itemize}
    \item $\widehat{\Sigma}_{k, \phi}=k \mathbb{E}_{(s, a) \sim \mathcal{D}_k}\left[\phi(s,a) \phi(s,a)^{\top}\right]+\lambda_k I$,
    \item $\Sigma_{\rho_k, \phi}=k \mathbb{E}_{(s, a) \sim \rho_k}\left[\phi(s, a) \phi(s, a)^{\top}\right]+\lambda_k I$,
    \item $\Sigma_{\rho_k \times \bar{\pi}_k, \phi}=k \mathbb{E}_{s \sim \rho_k, a \sim \bar{\pi}_k}\left[\phi(s, a) \phi(s, a)^{\top}\right]+\lambda_k I$.
\end{itemize}
For notational convenience, we abbreviate $\widehat{\Sigma}_{k, \phi_k}$ as $\widehat{\Sigma}^{-1}_{k}$. Also, note that $\widehat{\Sigma}_{k, \phi}$ is an unbiased estimate of $\Sigma_{\rho_k \times \bar{\pi}_k, \phi}$.

\subsection{Bounding \textsc{OMD Regret Term}}
We first present the proof of \Cref{lem: the bound of OMD terms}, which follows from the standard OMD analysis.
\begin{proof}[Proof of \Cref{lem: the bound of OMD terms}]
We first consider fixed initial state $s_0$.
For some fixed $i\in[N]$, one can see that the OMD regret in episode $k\in\{k_i,k_i+1,\ldots,k_i+L-1\}$ can be written recursively as
\begin{align*}
    &\widehat{V}_k^{\tilde{\pi}_k}(s_0)-\widehat{V}_k^{\pi^\star}(s_0)\\
    =&\left\langle \tilde{\pi}_k\left(\cdot\mid s_0\right), \widehat{Q}_k^{\tilde{\pi}_k}\left(s_0,\cdot\right)\right\rangle-\left\langle\pi^\star\left(\cdot\mid s_0\right), \widehat{Q}_k^{\pi^\star}\left(s_0,\cdot\right)\right\rangle \\
    =&\left\langle \tilde{\pi}_k\left(\cdot\mid s_0\right)-\pi^\star\left(\cdot\mid s_0\right), \widehat{Q}_k^{\tilde{\pi}_k}\left(s_0,\cdot\right)\right\rangle+\left\langle\pi^\star\left(\cdot\mid s_0\right),
\widehat{Q}_k^{\tilde{\pi}_k}\left(s_0,\cdot\right)-\widehat{Q}_k^{\pi^\star}\left(s_0,\cdot\right)\right\rangle \\
    =&\left\langle \tilde{\pi}_k(\cdot\mid s_0)-\pi^\star(\cdot\mid s_0), \widehat{Q}_k^{\tilde{\pi}_k}(s_0,\cdot)\right\rangle + \left\langle \pi^\star(\cdot\mid s_0) , \gamma\left[\widehat{P}_{k_i}\left(\widehat{V}^{\tilde{\pi}_k}_{k}-\widehat{V}^{\pi^\star}_{k}\right)\right](s_0,\cdot)\right\rangle\\
    =&\BE\left[ \sum_{\tau =0}^{\infty} \gamma^{\tau}\left\langle \tilde{\pi}_k(\cdot\mid s_{k,\tau})-\pi^\star(\cdot\mid s_{k,\tau}), \widehat{Q}_{k}^{\tilde{\pi}_k}(s_{k,\tau},\cdot)\right\rangle \,\middle|\, \pi^\star,\widehat{P}_{k_i},s_0\right]\,,
\end{align*}
where recall $\widehat{Q}_{k}^{\pi}(s,a) = \mathbb{E}\left[\sum_{\tau=0}^{\infty} \gamma^{t}[\ell_{k}-\widehat{b}_k](s_{k,\tau}, a_{k,\tau})\Big| \pi, \widehat{P}_{k_i}, (s_{k,0}, a_{k,0}) = (s,a) \right]$ is the state-action value of policy $\pi$ under the empirical model $(\widehat{P}_{k_i},\ell_{k}-\widehat{b}_{k_i})$ and the expectation $\BE[\cdot\mid  \pi^\star,\widehat{P}_{k_i},s_0]$ is taken over the randomness of the state-action sequence $\{(s_{k,\tau},a_{k,\tau})\}_{\tau=0}^{\infty}$ with $a_{k,\tau}\sim\pi^\star(\cdot\mid s_{k,\tau})$, $s_{k,\tau+1}\sim \widehat{P}_{k_i}(\cdot\mid s_{k,\tau},a_{k,\tau})$, and $s_{k,0}=s_0$.

Taking summation of the above display over $k\in\{k_i,k_i+1,\ldots,k_i+L-1\}$ and re-arranging show that
\begin{align}\label{eq:OMD_reg1}
    &\BE\left[\sum_{k=k_i}^{k_i+L-1} \left(\widehat{V}_k^{\tilde{\pi}_k}(s_0)-\widehat{V}_k^{\pi^\star}(s_0)\right)\right]\notag\\
    =&\BE\left[\BE\left[\sum_{k=k_i}^{k_i+L-1}\sum_{\tau =0}^{\infty}  \gamma^{\tau}\left\langle \tilde{\pi}_k(\cdot\mid s_{k,\tau})-\pi^\star(\cdot\mid s_{k,\tau}), \widehat{Q}_{k}^{\tilde{\pi}_k}(s_{k,\tau},\cdot)\right\rangle \,\middle|\, \pi^\star,\widehat{P}_{k_i},s_0\right]\right]\notag\\
    =&\sum_{\tau =0}^{\infty}\gamma^{\tau}\BE\left[\BE\left[\sum_{k=k_i}^{k_i+L-1}\left\langle \tilde{\pi}_k(\cdot\mid s_{k,\tau})-\pi^\star(\cdot\mid s_{k,\tau}), \widehat{Q}_{k}^{\tilde{\pi}_k}(s_{k,\tau},\cdot)\right\rangle \,\middle|\, \pi^\star,\widehat{P}_{k_i},s_0\right]\right]\,.
\end{align}

Further, note that the update process of policy in  Eq. (\ref{eq:OMD_update}) can be solved by the following two-step procedure \citep{lattimore2020bandit}:
\begin{align}
\widehat{\pi}_{k+1}(\cdot\mid s) &\in \argmin_{{\pi}(\cdot\mid s)\in\BR^{A}_{+}}\eta\left\langle{\pi}(\cdot\mid s), \widehat{Q}^{\tilde{\pi}_k}_k(s,\cdot)\right\rangle + D_F({\pi}(\cdot\mid s), \tilde{\pi}_k(\cdot\mid s))\label{eq:omd_solution1}\quad\text{and}\\
\tilde{\pi}_{k+1}(\cdot\mid s) &\in \argmin_{{\pi}(\cdot\mid s)\in\Delta(\cA)} D_F({\pi}(\cdot\mid s), \widehat{\pi}_{k+1}(\cdot\mid s))\label{eq:omd_solution2}\,,
\end{align}
for any $s\in\cS$. Eq. \eqref{eq:omd_solution1} combined with the first-order optimality condition implies that
\begin{align}
    \widehat{Q}^{\tilde{\pi}_k}_k(s,\cdot) = -\frac{1}{\eta}(\nabla F(\widehat{\pi}_{k+1}(\cdot\mid s))-\nabla F(\tilde{\pi}_k(\cdot\mid s)))\,, \label{eq:OMD_first}
\end{align}
for any $s\in\cS$.
This display shows that $\widehat{\pi}_{k+1}(a\mid s)=\tilde{\pi}_k(a\mid s)\exp (-\eta\widehat{Q}^{\tilde{\pi}_k}_k(s,\cdot))$, and  $\widehat{\pi}_{k+1}(a\mid s)\leq \tilde{\pi}_k(a\mid s)$ since $\widehat{Q}^{\tilde{\pi}_k}_k(s,a)\geq 0$, for any $(s,a)\in\cS\times\cA$.

Therefore, one can see that
\begin{align}\label{eq:OMD_eqx}
    &\BE\left[\left\langle \tilde{\pi}_k(\cdot\mid s_{k,\tau})-\pi^\star(\cdot\mid s_{k,\tau}), \widehat{Q}_{k}^{\tilde{\pi}_k}(s_{k,\tau},\cdot)\right\rangle\middle|\, \pi^\star,\widehat{P}_{k_i},s_0\right]\notag\\
    =&\BE\left[\frac{1}{\eta}\langle \pi^\star(\cdot\mid s_{k,\tau})-\tilde{\pi}_k(\cdot\mid s_{k,\tau}), \nabla F(\widehat{\pi}_{k+1}(\cdot\mid s_{k,\tau}))-\nabla F(\tilde{\pi}_k(\cdot\mid s_{k,\tau})) \rangle\middle|\, \pi^\star,\widehat{P}_{k_i},s_0\right]\notag\\
    =& \BE\left[\frac{1}{\eta}(D_F(\pi^\star(\cdot\mid s_{k,\tau}), \tilde{\pi}_k(\cdot\mid s_{k,\tau}))+D_F(\tilde{\pi}_k(\cdot\mid s_{k,\tau}), \widehat{\pi}_{k+1}(\cdot\mid s_{k,\tau}))\right.\notag\\
    &\left.-D_F(\pi^\star(\cdot\mid s_{k,\tau}),\widehat{\pi}_{k+1}(\cdot\mid s_{k,\tau})))\middle|\, \pi^\star,\widehat{P}_{k_i},s_0\right]\notag\\
    \leq& \BE\left[\frac{1}{\eta}\left(D_F(\pi^\star(\cdot\mid s_{k,\tau}), \tilde{\pi}_k(\cdot\mid s_{k,\tau}))+D_F(\tilde{\pi}_k(\cdot\mid s_{k,\tau}), \widehat{\pi}_{k+1}(\cdot\mid s_{k,\tau}))\right.\right.\notag\\
    &\left.\left.-D_F\left(\pi^\star(\cdot\mid s_{k,\tau}),\tilde{\pi}_{k+1}(\cdot\mid s_{k,\tau})\right)\right)\middle|\, \pi^\star,\widehat{P}_{k_i},s_0\right]\,,
\end{align}
where the first equality comes from Eq. (\ref{eq:OMD_first}), the second equality is due to the three-point lemma, and the last inequality follows the generalized Pythagorean theorem.

Taking summation of Eq. \eqref{eq:OMD_eqx} over $k\in\{k_i,k_i+1,\ldots,k_i+L-1\}$ leads to
\begin{align}\label{eq:eq:OMD_reg1_1}
    &\BE\left[\sum_{k=k_i}^{k_i+L-1}\left\langle \tilde{\pi}_k(\cdot\mid s_{k,\tau})-\pi^\star(\cdot\mid s_{k,\tau}), \widehat{Q}_{k}^{\tilde{\pi}_k}(s_{k,\tau},\cdot)\right\rangle\middle|\, \pi^\star,\widehat{P}_{k_i},s_0\right]\notag\\
    \leq&\BE\left[\frac{1}{\eta}\left(D_F(\pi^\star(\cdot\mid s_{k_i,\tau}), \tilde{\pi}_{k_i}(\cdot\mid s_{k_i,\tau}))
    +\sum_{k=k_i}^{k_i+L-1}D_F(\tilde{\pi}_k(\cdot\mid s_{k,\tau}), \widehat{\pi}_{k+1}(\cdot\mid s_{k,\tau}))\right)\middle|\, \pi^\star,\widehat{P}_{k_i},s_0\right]\,.
\end{align}

The first term in Eq. \eqref{eq:eq:OMD_reg1_1} can be bounded as follows: 
\begin{align}\label{eq:eq:OMD_reg2}
&\BE\left[D_F(\pi^\star(\cdot\mid s_{k_i,\tau}), \tilde{\pi}_{k_i}(\cdot\mid s_{k_i,\tau}))\middle|\, \pi^\star,\widehat{P}_{k_i},s_0\right]\notag\\
=& \BE\left[\sum_{a \in \cA} \pi^\star(a\mid s_{k_i,\tau}) \ln \frac{\pi^\star(a\mid s_{k_i,\tau})}{\tilde{\pi}_{k_i}(a\mid s_{k_i,\tau})}\middle|\, \pi^\star,\widehat{P}_{k_i},s_0\right]\notag\\
\leq& \BE\left[\sum_{a \in \cA} \pi^\star(a\mid s_{k_i,\tau})\ln A\middle|\, \pi^\star,\widehat{P}_{k_i},s_0\right]\notag\\
=&\ln A\,,
\end{align}
where the inequality is because we choose $\tilde{\pi}_{k_i}(\cdot\mid s)=U(\cA)$ for any $s\in\cS$ in \Cref{algo:algo1}.


It remains to bound $\BE\left[\sum_{k=k_i}^{k_i+L-1}D_F(\tilde{\pi}_k(\cdot\mid s_{k,\tau}), \widehat{\pi}_{k+1}(\cdot\mid s_{k,\tau}))\middle|\, \pi^\star,\widehat{P}_{k_i},s_0\right]$ in Eq. \eqref{eq:eq:OMD_reg1_1}:
\begin{align}\label{eq:eq:OMD_reg3}
    &\BE\left[\sum_{k=k_i}^{k_i+L-1}D_F(\tilde{\pi}_k(\cdot\mid s_{k,\tau}), \widehat{\pi}_{k+1}(\cdot\mid s_{k,\tau}))\middle|\, \pi^\star,\widehat{P}_{k_i},s_0\right]\notag\\
    =& \BE\left[\sum_{k=k_i}^{k_i+L-1}\left(-D_F(\widehat{\pi}_{k+1}(\cdot\mid s_{k,\tau}), \tilde{\pi}_{k}(\cdot\mid s_{k,\tau}))\notag\right.\right.\\
    &\left.\left.+\left\langle\nabla F(\tilde{\pi}_k(\cdot\mid s_{k,\tau}))-\nabla F(\widehat{\pi}_{k+1}(\cdot\mid s_{k,\tau})), \tilde{\pi}_k(\cdot\mid s_{k,\tau})-\widehat{\pi}_{k+1}(\cdot\mid s_{k,\tau})\right\rangle\right)\middle|\, \pi^\star,\widehat{P}_{k_i},s_0\right]\notag\\
    \leq& \BE\left[\sum_{k=k_i}^{k_i+L-1} \left(-D_F(\widehat{\pi}_{k+1}(\cdot\mid s_{k,\tau}), \tilde{\pi}_{k}(\cdot\mid s_{k,\tau}))+\frac{1}{2}\left\|\nabla F(\tilde{\pi}_k(\cdot\mid s_{k,\tau}))-\nabla F(\widehat{\pi}_{k+1}(\cdot\mid s_{k,\tau}))\right\|^2_{\nabla^{-2}F(z_k(\cdot\mid s_{k,\tau}))}\right.\right.\notag\\
    &\left.\left.+\frac{1}{2}\|\tilde{\pi}_k(\cdot\mid s_{k,\tau})-\widehat{\pi}_{k+1}(\cdot\mid s_{k,\tau})\|^2_{\nabla^{2}F(z_k(\cdot\mid s_{k,\tau}))}\right)\notag\middle|\, \pi^\star,\widehat{P}_{k_i},s_0\right]\\
    =& \BE\left[\sum_{k=k_i}^{k_i+L-1} \left(-D_F(\widehat{\pi}_{k+1}(\cdot\mid s_{k,\tau}), \tilde{\pi}_{k}(\cdot\mid s_{k,\tau}))+\frac{1}{2}\|\eta\widehat{Q}_{k}^{\tilde{\pi}_k}(s_{k,\tau},\cdot)\|^2_{\nabla^{-2}F(z_k(\cdot\mid s_{k,\tau}))}\right.\right.\notag\\
    &\left.\left.+\frac{1}{2}\|\tilde{\pi}_k(\cdot\mid s_{k,\tau})-\widehat{\pi}_{k+1}(\cdot\mid s_{k,\tau})\|^2_{\nabla^{2}F(z_k(\cdot\mid s_{k,\tau}))}\right)\notag\middle|\, \pi^\star,\widehat{P}_{k_i},s_0\right]\\
    =& \BE\left[\sum_{k=k_i}^{k_i+L-1} \left(-\frac{1}{2}\|\tilde{\pi}_k(\cdot\mid s_{k,\tau})-\widehat{\pi}_{k+1}(\cdot\mid s_{k,\tau})\|^2_{\nabla^{2}F(\omega_k(\cdot\mid s_{k,\tau}))}
    +\frac{1}{2}\|\eta\widehat{Q}_{k}^{\tilde{\pi}_k}(s_{k,\tau},\cdot)\|^2_{\nabla^{-2}F(z_k(\cdot\mid s_{k,\tau}))}\right.\right.\notag\\
    &\left.\left.+\frac{1}{2}\|\tilde{\pi}_k(\cdot\mid s_{k,\tau})-\widehat{\pi}_{k+1}(\cdot\mid s_{k,\tau})\|^2_{\nabla^{2}F(z_k(\cdot\mid s_{k,\tau}))}\right)\notag\middle|\, \pi^\star,\widehat{P}_{k_i},s_0\right]\\
    =& \BE\left[\frac{\eta^2}{2}\sum_{k=k_i}^{k_i+L-1}\sum_{a\in \cA} z_k(a\mid s_{k,\tau})\widehat{Q}_{k}^{\tilde{\pi}_k}(s_{k,\tau},a)^2\notag\middle|\, \pi^\star,\widehat{P}_{k_i},s_0\right]\\
    \leq&\BE \left[\frac{\eta^2}{2}\sum_{k=k_i}^{k_i+L-1}\sum_{a\in\cA}\tilde{\pi}_k(a\mid s_{k,\tau})\widehat{Q}_{k}^{\tilde{\pi}_k}(s_{k,\tau},a)^2\notag\middle|\, \pi^\star,\widehat{P}_{k_i},s_0\right]\\
    \leq&2\eta^2 L/(1-\gamma)^2\,,
\end{align}
where 
the second line comes from the Young-Fenchel inequality for all $z_k(\cdot\mid s_{k,\tau})\in [\widehat{\pi}_{k+1}(\cdot\mid s_{k,\tau}), \tilde{\pi}_k(\cdot\mid s_{k,\tau})]$ arbitrarily, the third line follows by the first-order optimality condition in Eq. \eqref{eq:OMD_first}, the fourth line is by the mean value theorem of the second derivative for some fixed $\omega_k(\cdot\mid s_{k,\tau})\in[\widehat{\pi}_{k+1}(\cdot\mid s_{k,\tau}), \tilde{\pi}_k(\cdot\mid s_{k,\tau})]$, the fifth line comes from fixing $z_k(\cdot\mid s_{k,\tau})=\omega_k(\cdot\mid s_{k,\tau})$, the sixth line comes from the fact that $z_k(\cdot\mid s_{k,\tau})\leq \tilde{\pi}_k(\cdot\mid s_{k,\tau})$ and the last line is due to that $|\widehat{Q}_{k}^{\tilde{\pi}_k}(s_{k,\tau},a)|\leq 2/(1-\gamma)$.


Substituting Eq. \eqref{eq:eq:OMD_reg2} and Eq. \eqref{eq:eq:OMD_reg3} into Eq. \eqref{eq:eq:OMD_reg1_1}, along with Eq. \eqref{eq:OMD_reg1}, shows that 
\begin{align*}
&\BE\left[\sum_{k=1}^K \left(\widehat{V}_k^{\tilde{\pi}_k}-\widehat{V}_k^{\pi^\star}\right)\right]\\
    =&\BE\left[\BE_{s_0\sim d_0}\left[\sum_{k=1}^K \left(\widehat{V}_k^{\tilde{\pi}_k}(s_0)-\widehat{V}_k^{\pi^\star}(s_0)\right)\right]\right]\\
    =&\BE\left[\BE_{s_0\sim d_0}\left[\sum_{i=1}^N \sum_{k=k_i}^{k_i+L-1} \left(\widehat{V}_k^{\tilde{\pi}_k}(s_0)-\widehat{V}_k^{\pi^\star}(s_0)\right)\right]\right]\\
    =&\sum_{\tau =0}^{\infty}\gamma^{\tau}\BE\left[\BE_{s_0\sim d_0}\left[\sum_{i=1}^N\BE\left[\sum_{k=k_i}^{k_i+L-1}\left\langle \tilde{\pi}_k(\cdot\mid s_{k,\tau})-\pi^\star(\cdot\mid s_{k,\tau}), \widehat{Q}_{k}^{\tilde{\pi}_k}(s_{k,\tau},\cdot)\right\rangle \,\middle|\, \pi^\star,\widehat{P}_{k_i},s_0\right]\right]\right]\\
    \leq&\sum_{\tau =0}^{\infty}\gamma^{\tau}N\cdot\frac{1}{\eta}\left(\ln A+2\eta^2L/(1-\gamma)^2\right)
    \leq\frac{K\sqrt{2\ln A}}{\sqrt{L}(1-\gamma)^2}\,,
\end{align*}
which completes the proof.
\end{proof}

\subsection{Bounding \textsc{Optimism Term}}
We now turn to prove \Cref{lem: optimism}, which provides a (near) optimism at the initial state distribution.
\begin{proof}[Proof of \Cref{lem: optimism}]
Recall $f_{k_i}(s, a)=\left\|\widehat{P}_{k_i}(\cdot \mid s, a)-P^{\star}(\cdot \mid s, a)\right\|_1$.
In the following, we condition on the good event
\begin{align*}
    \cE =\left\{\forall i\in[N], \mathbb{E}_{s \sim \rho_{k_i}, a \sim \bar{\pi}_{k_i}}\left[f_{k_i}^2(s, a)\right] \leq \zeta_{k_i}, \mathbb{E}_{s \sim \rho_{k_i}^\prime, a \sim \bar{\pi}_{k_i}}\left[f_{k_i}^2(s, a)\right] \leq \zeta_{k_i}\,;\right.\\
    \left.\forall i\in[N],\forall \phi,\|\phi(s, a)\|_{\widehat{\Sigma}_{k_i,\phi}^{-1}}=\Theta\left(\|\phi(s, a)\|_{\Sigma_{\rho_{k_i} \times \bar{\pi}_{k_i}, \phi}^{-1}}\right)\right\}\,,
\end{align*}
which is guaranteed to hold with probability $1-\delta$ by using union bound with \Cref{lem: MLE} and \Cref{lem: Concentration of the bonus term for roll-out policy}.

We first consider fixed initial state $s_0$.
For some epoch $i\in[N]$ and episode $k\in\{k_i,k_i+1,\ldots,k_i+L-1\}$, applying Eq. \eqref{eq:simulation_eq1} in \Cref{lem: simulation lemma} implies that
\begin{align}\label{eq:optimism1}
    &\widehat{V}_k^{\pi^\star}(s_0)-V_k^{\pi^\star}(s_0)\notag\\
    =&\frac{1}{1-\gamma}\mathbb{E}_{(s, a) \sim d_{\widehat{P}_{k_i}}^{\pi^\star}}\left[-\widehat{b}_{k_i}(s, a)+\gamma
        \left(\widehat{P}_{k_i}(\cdot\mid s,a)-{P}^\star(\cdot\mid s,a)\right)^\top V_k^{\pi^\star}\right]\notag\\
    \leq&\frac{1}{1-\gamma}\mathbb{E}_{(s, a) \sim d_{\widehat{P}_{k_i}}^{\pi^\star}}\left[-\widehat{b}_{k_i}(s, a)+\frac{\gamma}{1-\gamma}
        f_{k_i}(s,a)\right]\notag\\
    =&\frac{1}{1-\gamma}\mathbb{E}_{(s, a) \sim d_{\widehat{P}_{k_i}}^{\pi^\star}}\left[-\min \left(\alpha_{k_i}\left\|\widehat{\phi}_{k_i}(s, a)\right\|_{\Sigma_{\rho_{k_i} \times \bar{\pi}_{k_i}, \widehat{\phi}_{k_i}}^{-1}}, 2\right)\frac{1}{(1-\gamma)}\right]\notag\\
    &+\frac{\gamma}{(1-\gamma)^2}\left(
    \underbrace{\gamma \mathbb{E}_{(\tilde{s}, \tilde{a}) \sim d_{\widehat{P}_{k_i}}^{\pi^\star}, s \sim \widehat{P}_{k_i}(\cdot\mid\tilde{s}, \tilde{a}), a \sim \pi^\star(s)}\left[f_{k_i}(s,a)\right]}_{\textsc{Term}_1}+\underbrace{ (1-\gamma)\mathbb{E}_{s \sim d_0, a \sim \pi^\star\left(s\right)}\left[f_{k_i}(s,a)\right]}_{\textsc{Term}_2}\right)\,,
\end{align}
where the inequality comes from the Cauchy–Schwarz inequality together with $\|V_k^{\pi^\star}\|_{\infty}=1/(1-\gamma)$, and the second equality is due to 
the fact that $\widehat{\Sigma}_{{k_i}, \widehat{\phi}_{k_i}}$ is an unbiased estimate of $\Sigma_{\rho_{k_i} \times \bar{\pi}_{k_i}, \widehat{\phi}_{k_i}}$ and \Cref{lem:occupancy_measure_expectation_decomp}.

We first bound $\textsc{Term}_2$ as follows:
\begin{align}\label{eq:optimism2}
    (1-\gamma)\mathbb{E}_{s \sim d_0, a \sim \pi^\star\left(s\right)}\left[f_{k_i}(s, a)\right]\leq \sqrt{\frac{(1-\gamma)A}{\mixCoeff} \mathbb{E}_{s \sim \rho_{k_i}, a \sim \bar{\pi}_{k_i}}\left[f_{k_i}^2(s, a)\right]}\leq \sqrt{\frac{(1-\gamma)A\zeta_{k_i}}{ \mixCoeff}}\,,
\end{align}
where the first inequality is due to \Cref{lem: The bound of Initial state term} and the second inequality is by the definition of event $\cE$.

It remains to bound $\textsc{Term}_1$:
\begin{align}\label{eq:optimism3}
&\gamma\mathbb{E}_{(\tilde{s}, \tilde{a}) \sim d_{\widehat{P}_{k_i}}^{\pi^\star}, s \sim \widehat{P}_{k_i}(\cdot\mid\tilde{s}, \tilde{a}), a \sim \pi^\star(s)}\left[f_{k_i}(s,a)\right]\notag \\
\leq& \gamma\mathbb{E}_{(\tilde{s}, \tilde{a}) \sim d_{\widehat{P}_{k_i}}^{\pi^\star}}\left[\left\|\widehat{\phi}_{k_i}(\tilde{s}, \tilde{a})\right\|_{\Sigma_{\rho_{k_i} \times \bar{\pi}_{k_i}, \widehat{\phi}_{k_i}}^{-1}}\right] 
\sqrt{\frac{{k_i}A}{\mixCoeff} \mathbb{E}_{s \sim \rho_{k_i}^{\prime}, a \sim \bar{\pi}_{k_i}}\left[f_{k_i}^2(s, a)\right]+4\lambda_{k_i} d+4{k_i} \zeta_{k_i}}\notag\\
\leq& \gamma\mathbb{E}_{(\tilde{s}, \tilde{a}) \sim d_{\widehat{P}_{k_i}}^{\pi^\star}}\left[\left\|\widehat{\phi}_{k_i}(\tilde{s}, \tilde{a})\right\|_{\Sigma_{\rho_{k_i} \times \bar{\pi}_{k_i}, \widehat{\phi}_{k_i}}^{-1}}\right] \sqrt{\frac{{k_i}A}{\mixCoeff}\zeta_{k_i} +4 \lambda_{k_i} d+4{k_i} \zeta_{k_i}}\notag\\
\lesssim& \alpha_{k_i} \mathbb{E}_{(\tilde{s}, \tilde{a}) \sim d_{\widehat{P}_{k_i}}^{\pi^\star}}\left[\left\|\widehat{\phi}_{k_i}(\tilde{s}, \tilde{a})\right\|_{\Sigma_{\rho_{k_i} \times \bar{\pi}_{k_i}, \widehat{\phi}_{k_i}}^{-1}}\right]\,,
\end{align}
where the first inequality follows by \Cref{lem: one step back estiamte model} as well as $\|f_{k_i}\|_{\infty}\leq 2$, the second inequality is again due to the definition of the good event $\cE$, and the last inequality comes from the definition of $\alpha_{k_i}$.

Now substituting Eq. \eqref{eq:optimism2} and Eq. \eqref{eq:optimism3} into Eq. \eqref{eq:optimism1} shows that 
\begin{align*}
    &\sum_{k=1}^K \left(\widehat{V}_k^{\pi^\star}-V_k^{\pi^\star}\right)\\
    =&\BE_{s_0\sim d_0}\left[\sum_{k=1}^K \left(\widehat{V}_k^{\pi^\star}(s_0)-V_k^{\pi^\star}(s_0)\right)\right]\\
    =&\BE_{s_0\sim d_0}\left[\sum_{i=1}^N \sum_{k=k_i}^{k_i+L-1} \left(\widehat{V}_k^{\pi^\star}(s_0)-V_k^{\pi^\star}(s_0)\right)\right]\\
    \leq&\sum_{i=1}^N \sum_{k=k_i}^{k_i+L-1} \frac{1}{(1-\gamma)^2}\mathbb{E}_{(s, a) \sim d_{\widehat{P}_{k_i}}^{\pi^\star}}\left[-\min \left(\alpha_{k_i}\left\|\widehat{\phi}_{k_i}(s, a)\right\|_{\Sigma_{\rho_{k_i} \times \bar{\pi}_{k_i}, \widehat{\phi}_{k_i}}^{-1}}, 2\right)\right.\\
    &\left.\quad +\min \left(\alpha_{k_i}\left\|\widehat{\phi}_{k_i}(s, a)\right\|_{\Sigma_{\rho_{k_i} \times \bar{\pi}_{k_i}, \widehat{\phi}_{k_i}}^{-1}}+\sqrt{\frac{(1-\gamma)A\zeta_{k_i}}{ \mixCoeff}}, 2\right)\right]\\
    \leq&\sum_{i=1}^N L\sqrt{\frac{A\ln(MN/\delta)}{\mixCoeff(1-\gamma)^3k_i}}\\
    =&\sum_{i=1}^N L\sqrt{\frac{A\ln(MN/\delta)}{\mixCoeff(1-\gamma)^3((i-1)L+1)}}\\
    \leq& (L+\sqrt{K})\sqrt{\frac{A\ln(MN/\delta)}{\mixCoeff(1-\gamma)^3}}\,,
\end{align*}
where the first inequality follows by $\|f_{k_i}\|_{\infty}\leq 2$ for any $i\in[N]$ and the second inequality is due to the definition of $\zeta_{k_i}$ in Lemma \ref{lem: MLE}. 
\end{proof}


\subsection{Bounding \textsc{Estimation Bias Term}}

We now give the proof of \Cref{lem: estimation bias}, which controls the estimation bias term.
\begin{proof}[Proof of \Cref{lem: estimation bias}]
Similar to the proof of \Cref{lem: optimism}, in what follows,
we condition on the good event
\begin{align*}
    \cE =\left\{\forall i\in[N], \mathbb{E}_{s \sim \rho_{k_i}, a \sim \bar{\pi}_{k_i}}\left[f_{k_i}^2(s, a)\right] \leq \zeta_{k_i}, \mathbb{E}_{s \sim \rho_{k_i}^\prime, a \sim \bar{\pi}_{k_i}}\left[f_{k_i}^2(s, a)\right] \leq \zeta_{k_i}\,;\right.\\
    \left.\forall i\in[N],\forall \phi,\|\phi(s, a)\|_{\widehat{\Sigma}_{{k_i},\phi}^{-1}}=\Theta\left(\|\phi(s, a)\|_{\Sigma_{\rho_{k_i} \times \bar{\pi}_{k_i}, \phi}^{-1}}\right)\right\}\,.
\end{align*}

We first consider fixed initial state $s_0$.
For some epoch $i\in[N]$ and episode $k\in\{k_i,k_i+1,\ldots,k_i+L-1\}$,
applying \Cref{lem: simulation lemma} shows that
\begin{align}\label{eq:estimation_bias_1}
&V_k^{\tilde{\pi}_k}(s_0)-\widehat{V}_k^{\tilde{\pi}_k}(s_0)\\
=& (1-\gamma)^{-1} \mathbb{E}_{(s, a) \sim d_{P^{\star}}^{\tilde{\pi}_k}}\left[\widehat{b}_{k_i}(s, a)-\gamma \mathbb{E}_{\widehat{P}_{k_i}\left(s^{\prime} \mid s, a\right)}\left[\widehat{V}^{\tilde{\pi}_k}_k\left(s^{\prime}\right)\right]+\gamma \mathbb{E}_{P^{\star}\left(s^{\prime} \mid s, a\right)}\left[\widehat{V}^{\tilde{\pi}_k}_k\left(s^{\prime}\right)\right]\right]\notag\\
\leq& \mathbb{E}_{(s, a) \sim d_{P^{\star}}^{\tilde{\pi}_k}}\left[\frac{1}{1-\gamma}\widehat{b}_{k_i}(s, a)+\frac{2}{(1-\gamma)^3}f_{k_i}(s,a)\right]\notag\\
=& \underbrace{\gamma \mathbb{E}_{(\tilde{s}, \tilde{a}) \sim d_{P^{\star}}^{\tilde{\pi}_k}, s \sim P^{\star}(\cdot\mid\tilde{s}, \tilde{a}), a \sim \tilde{\pi}_k(s)}\bigg[\frac{1}{1-\gamma}\widehat{b}_{k_i}(s, a)+\frac{2}{(1-\gamma)^3}f_{k_i}(s,a)\bigg]}_{\textsc {Term}_1}\notag\\
&\quad+\underbrace{ (1-\gamma)\mathbb{E}_{s \sim d_0, a \sim \tilde{\pi}_k\left(s_0\right)}\bigg[\frac{1}{1-\gamma}\widehat{b}_{k_i}(s, a)+\frac{2}{(1-\gamma)^3}f_{k_i}(s,a)\bigg]}_{\textsc {Term}_2}\,,
\end{align}
where the inequality follows from the fact that $\|\widehat{b}_{k_i}\|_{\infty} \leq 2/(1-\gamma)$ and
 $\|\widehat{V}^{\tilde{\pi}_k}_{k_i}\|_{\infty} \leq 2 /(1-\gamma)^2$ and the second equality is due to 
 \Cref{lem:occupancy_measure_expectation_decomp}.
$\textsc {Term}_2$ can be bounded as follows:
\begin{align}\label{eq:estimation_bias_2}
    \textsc {Term}_2&=\mathbb{E}_{s \sim d_0, a \sim \tilde{\pi}_k\left(s_0\right)}\left[\widehat{b}_{k_i}(s, a)+\frac{2}{(1-\gamma)^2}f_{k_i}(s,a)\right]\notag\\
    &\leq \sqrt{\frac{A}{(1-\gamma) \mixCoeff} \mathbb{E}_{s \sim \rho_{k_i}, a \sim \bar{\pi}_{k_i}(s)}\left[\widehat{b}_{k_i}^2(s, a)+\frac{4}{(1-\gamma)^4}f_{k_i}^2(s,a)\right]}\notag\\
    &\lesssim\sqrt{\frac{dA\alpha_{k_i}^2}{(1-\gamma)^3 {k_i}\mixCoeff}+\frac{A\zeta_{k_i}}{(1-\gamma)^5\mixCoeff}}\notag\\
    &\leq \sqrt{\frac{dA\alpha_{k_i}^2}{(1-\gamma)^3 {k_i}\mixCoeff}}+\sqrt{\frac{A\zeta_{k_i}}{(1-\gamma)^5\mixCoeff}}\,,
\end{align}
where the first inequality follows from \Cref{lem: The bound of Initial state term}, and the second inequality comes from \Cref{lem: MLE} as well as the following inequality:
\begin{align}\label{eq:est_bias_egx}
    &\mathbb{E}_{s \sim \rho_{k_i}, a \sim \bar{\pi}_{k_i}(s)}\left[{k_i}\widehat{b}_{k_i}^2(s, a)\right]\notag \\
    \leq&\frac{{k_i}\alpha_{k_i}^2}{(1-\gamma)^2}  \mathbb{E}_{s \sim \rho_{k_i}, a \sim \bar{\pi}_{k_i}(s)}\left[\left\|\widehat{\phi}_{k_i}(s, a)\right\|_{\Sigma_{\rho_{k_i} \times \bar{\pi}_{k_i}, \widehat{\phi}_{k_i}}^{-1}}^2\right]\notag\\
    =& \frac{{k_i}\alpha_{k_i}^2}{(1-\gamma)^2} \operatorname{Tr}\left(\mathbb{E}_{s \sim \rho_{k_i}, a \sim \bar{\pi}_{k_i}}\left[\widehat{\phi}_{k_i}(s,a) \widehat{\phi}_{k_i}(s,a)^{\top}\right]\left\{{k_i} \mathbb{E}_{s \sim \rho_{k_i}, a \sim \bar{\pi}_{k_i}}\left[\widehat{\phi}_{k_i}(s,a) \widehat{\phi}_{k_i}(s,a)^{\top}\right]+\lambda_{k_i} I\right\}^{-1}\right)\notag\\
    \leq& \frac{\alpha_{k_i}^2 d}{(1-\gamma)^2}\,,
\end{align}
where the first inequality is because  $\widehat{\Sigma}_{{k_i}, \phi_{k_i}}$ is an unbiased estimate of $\Sigma_{\rho_{k_i} \times \bar{\pi}_{k_i}, \phi_{k_i}}$ and the second inequality is by $\operatorname{Tr}(AB)\geq \operatorname{Tr}(AC)$ for positive semi-definite matrices $A$, $B$, $C$ and $B-C\succeq 0$.

To bound $\textsc {Term}_1$, we note that
\begin{align*}
    &\mathbb{E}_{(\tilde{s}, \tilde{a}) \sim d_{P^{\star}}^{\tilde{\pi}_k}, s \sim P^{\star}(\cdot\mid\tilde{s}, \tilde{a}), a \sim \tilde{\pi}_k(s)}\left[\widehat{b}_{k_i}(s, a)+\frac{2}{(1-\gamma)^2}f_{k_i}(s,a)\right]\\
    \leq& \mathbb{E}_{(\tilde{s}, \tilde{a}) \sim d_{P ^\star}^{\tilde{\pi}_k}}\left[\left\|\phi^{\star}(\tilde{s}, \tilde{a})\right\|_{\Sigma_{\rho_{k_i}, \phi \star}^{-1}}\right] \sqrt{\frac{2{k_i}A}{\mixCoeff\gamma} \mathbb{E}_{s \sim \rho_{k_i}, a \sim \bar{\pi}_{k_i}(s)}\left[\widehat{b}_{k_i}^2(s, a)+\frac{4}{(1-\gamma)^4}f_{k_i}^2(s,a)\right]+\lambda_{k_i}d \frac{36}{(1-\gamma)^4}}\\
    \leq& \mathbb{E}_{(\tilde{s}, \tilde{a}) \sim d_{P ^\star}^{\tilde{\pi}_k}}\left[\left\|\phi^{\star}(\tilde{s}, \tilde{a})\right\|_{\Sigma_{\rho_{k_i}, \phi \star}^{-1}}\right] \sqrt{\frac{2{k_i}A}{\mixCoeff\gamma}\left(\frac{\alpha_{k_i}^2 d}{{k_i}(1-\gamma)^2}+\frac{4}{(1-\gamma)^4}\zeta_{k_i}\right)+\lambda_{k_i}d \frac{36}{(1-\gamma)^4}}\\
    \lesssim&  \mathbb{E}_{(\tilde{s}, \tilde{a}) \sim d_{P ^\star}^{\tilde{\pi}_k}}\left[\left\|\phi^{\star}(\tilde{s}, \tilde{a})\right\|_{\Sigma_{\rho_{k_i}, \phi \star}^{-1}}\right] \sqrt{\frac{dA\alpha_{k_i}^2}{\mixCoeff\gamma(1-\gamma)^2}} 
    + \mathbb{E}_{(\tilde{s}, \tilde{a}) \sim d_{P^\star}^{\tilde{\pi}_k}}\left[\left\|\phi^{\star}(\tilde{s}, \tilde{a})\right\|_{\Sigma_{\rho_{k_i}, \phi \star}^{-1}}\right] \sqrt{\frac{{k_i}A\zeta_{k_i}}{\mixCoeff\gamma(1-\gamma)^4}+\frac{\lambda_{k_i}d}{\gamma(1-\gamma)^4}}\,,
\end{align*}
where the first inequality follows from \Cref{lem: one step back true model}, the AM-GM inequality, $\|\widehat{b}_{k_i}\|_{\infty}\leq 2(1-\gamma)$, and $\|f_{k_i}\|_{\infty}\leq 2$ and the second inequality is due to Eq. \eqref{eq:est_bias_egx}.

The above display implies that $\textsc {Term}_1$ can be bounded as follows:
\begin{align}\label{eq:estimation_bias_3}
    &\gamma \mathbb{E}_{(\tilde{s}, \tilde{a}) \sim d_{P^{\star}}^{\tilde{\pi}_k}, s \sim P^{\star}(\cdot\mid \tilde{s}, \tilde{a}), a \sim \tilde{\pi}_k(s)}\left[\frac{1}{1-\gamma}\widehat{b}_{k_i}(s, a)+\frac{2}{(1-\gamma)^3}f_{k_i}(s,a)\right]\notag\\
    \lesssim& \frac{1}{(1-\gamma)^2}\mathbb{E}_{(\tilde{s}, \tilde{a}) \sim d_{P^\star}^{\tilde{\pi}_k}}\left[\left\|\phi^{\star}(\tilde{s}, \tilde{a})\right\|_{\Sigma_{\rho_{k_i}, \phi^{\star}}^{-1}}\right] \sqrt{\gamma\frac{dA \alpha_{k_i}^2}{\mixCoeff}}
    +\frac{1}{(1-\gamma)^3}\mathbb{E}_{(\tilde{s}, \tilde{a}) \sim d_{P^\star}^{\tilde{\pi}_k}}\left[\left\|\phi^{\star}(\tilde{s}, \tilde{a})\right\|_{\Sigma_{\rho_{k_i}, \phi^{\star}}^{-1}}\right]\notag\\ 
    &\cdot\sqrt{\gamma\left(\frac{ {k_i}A\zeta_{k_i}}{\mixCoeff}+ \lambda_{k_i}d\right)}\notag\\
    \leq& \frac{1}{(1-\gamma)^2}\mathbb{E}_{(\tilde{s}, \tilde{a}) \sim d_{P^\star}^{\tilde{\pi}_k}}\left[\left\|\phi^{\star}(\tilde{s}, \tilde{a})\right\|_{\Sigma_{\rho_{k_i}, \phi^{\star}}^{-1}}\right] \sqrt{\frac{dA \alpha_K^2}{\mixCoeff}} 
    + \frac{1}{(1-\gamma)^3}\mathbb{E}_{(\tilde{s}, \tilde{a}) \sim d_{P^\star}^{\tilde{\pi}_k}}\left[\left\|\phi^{\star}(\tilde{s}, \tilde{a})\right\|_{\Sigma_{\rho_{k_i}, \phi^{\star}}^{-1}} \alpha_K\right]\notag\\
    \lesssim&\frac{1}{(1-\gamma)^3}\mathbb{E}_{(\tilde{s}, \tilde{a}) \sim d_{P^\star}^{\tilde{\pi}_k}}\left[\left\|\phi^{\star}(\tilde{s}, \tilde{a})\right\|_{\Sigma_{\rho_{k_i}, \phi^{\star}}^{-1}}\right] \sqrt{\frac{dA \alpha_K^2}{\mixCoeff}}\,,
\end{align}
where the second inequality is due to that  $\alpha_k=O(\sqrt{\gamma(A/\mixCoeff+d^2)\ln(M k/\delta)})=O(\sqrt{\gamma(A/\mixCoeff+d^2)k\zeta_k})=O(\sqrt{\gamma(Ak\zeta_k/\mixCoeff+\lambda_k d)})=O( \sqrt{\gamma(Ak\zeta_k/\mixCoeff+\lambda_k d+k \zeta_k)})$.



Substituting Eq. \eqref{eq:estimation_bias_2} and Eq. \eqref{eq:estimation_bias_3} into Eq. \eqref{eq:estimation_bias_1}, and taking summation over $k\in[K]$ leads to
\begin{align*}
&\sum_{k=1}^K \left(V_k^{\tilde{\pi}_k}-\widehat{V}_k^{\tilde{\pi}_k}\right)\\
    =&\BE_{s_0\sim d_0}\left[\sum_{k=1}^K \left(V_k^{\tilde{\pi}_k}(s_0)-\widehat{V}_k^{\tilde{\pi}_k}(s_0)\right)\right]\\
    \lesssim&\sum_{i=1}^N \sum_{k=k_i}^{k_i+L-1} \left( \frac{1}{(1-\gamma)^3}\mathbb{E}_{({s}, {a}) \sim d_{P^\star}^{\tilde{\pi}_k}}\left[\left\|\phi^{\star}({s}, {a})\right\|_{\Sigma_{\rho_{k_i}, \phi^{\star}}^{-1}}\right] \sqrt{\frac{dA \alpha_K^2}{\mixCoeff}}
    +\sqrt{\frac{dA\alpha_K^2}{(1-\gamma)^3 {k_i}\mixCoeff}}
    +\sqrt{\frac{A\zeta_{k_i}}{(1-\gamma)^5\mixCoeff}} \right)\\
    \lesssim&\frac{\alpha_K}{(1-\gamma)^3}\sqrt{\frac{dA}{\mixCoeff}}\cdot\sum_{i=1}^N \sum_{k=k_i}^{k_i+L-1}
    \mathbb{E}_{({s}, {a}) \sim d_{P^\star}^{\tilde{\pi}_k}}\left[\left\|\phi^{\star}({s}, {a})\right\|_{\Sigma_{\rho_{k_i}, \phi^{\star}}^{-1}}\right]\\
    \leq&\frac{\alpha_K}{(1-\gamma)^3}\sqrt{\frac{dA}{\mixCoeff}}
    \cdot\sqrt{K\sum_{i=1}^N \sum_{k=k_i}^{k_i+L-1}\mathbb{E}_{({s}, {a}) \sim d_{P^\star}^{\tilde{\pi}_k}}\left[\left\|\phi^{\star}({s}, {a})\right\|^2_{\Sigma_{\rho_{k_i}, \phi^{\star}}^{-1}}\right]}\\
    =&\frac{\alpha_K}{(1-\gamma)^3}\sqrt{\frac{dA}{\mixCoeff}}
    \cdot\sqrt{K\sum_{i=1}^N \sum_{k=k_i}^{k_i+L-1}
    \operatorname{Tr}\left(\mathbb{E}_{({s}, {a}) \sim d_{P^\star}^{\tilde{\pi}_k}}\left[\phi^{\star}({s}, {a})\phi^{\star}({s}, {a})^\top\right]\Sigma_{\rho_{k_i}, \phi^{\star}}^{-1}\right)}\\
    =&\frac{\alpha_K}{(1-\gamma)^3}\sqrt{\frac{dA}{\mixCoeff}}
    \cdot\sqrt{K\sum_{j=1}^L \sum_{i=1}^{N}
    \operatorname{Tr}\left(\mathbb{E}_{({s}, {a}) \sim d_{P^\star}^{\tilde{\pi}_{(i-1)L+j}}}\left[\phi^{\star}({s}, {a})\phi^{\star}({s}, {a})^\top\right]\Sigma_{\rho_{k_i}, \phi^{\star}}^{-1}\right)}\\
    \leq&\frac{\alpha_K}{(1-\gamma)^3}\sqrt{\frac{dA}{\mixCoeff}}\\
    &\cdot\sqrt{K\sum_{j=1}^L \sum_{i=1}^{N}
    \operatorname{Tr}\left(\mathbb{E}_{({s}, {a}) \sim d_{P^\star}^{\tilde{\pi}_{(i-1)L+j}}}\left[\phi^{\star}({s}, {a})\phi^{\star}({s}, {a})^\top\right]\left(\sum_{q=1}^{i-1}\BE_{(s,a)\sim d_{P^\star}^{\tilde{\pi}_{(q-1)L+j}}}\left[\phi^{\star}({s}, {a})\phi^{\star}({s}, {a})^\top\right]+\lambda_1I\right)^{-1}\right)}\\
    \lesssim&\frac{\alpha_K}{(1-\gamma)^3}\sqrt{\frac{dA}{\mixCoeff}}\cdot\sqrt{K\sum_{j=1}^L d\ln \left(1+\frac{N}{d \lambda_1}\right)}\\
    \lesssim&\frac{\alpha_K}{(1-\gamma)^3}\sqrt{\frac{dA}{\mixCoeff}}\cdot\sqrt{d LK\ln \left(1+\frac{K}{d \lambda_1}\right)}\\
    \lesssim&\frac{d^2A\sqrt{KL}}{\mixCoeff(1-\gamma)^3}\sqrt{\ln(1+ K)\ln(M K/\delta)}\,,
\end{align*}
where the third inequality follows from Cauchy–Schwarz inequality together with Jensen's inequality, the fourth inequality is by 
$\operatorname{Tr}(AB)\geq \operatorname{Tr}(AC)$ for positive semi-definite matrices $A$, $B$, $C$ and $B-C\succeq 0$, and
the fifth inequality is due to \Cref{lem:Elliptical potential lemma}. The proof is now completed.
\end{proof}

\subsection{One-step-back Inequalities}
We first present the following lemma, which bounds the quantity under the initial state distribution, for any policy $\pi$. Note that this lemma holds for
any $k\in[K]$.
 \begin{lemma}\label{lem: The bound of Initial state term}
    For any $g: \mathcal{S} \times \mathcal{A} \rightarrow \mathbb{R}$ such that $\|g\|_{\infty} \leq B$ and any policy $\pi$, it holds that
    \begin{align*}
        \mathbb{E}_{s \sim d_0, a \sim \pi\left(s_0\right)}\left[g(s, a)\right]\leq \sqrt{\frac{A}{(1-\gamma) \mixCoeff} \mathbb{E}_{s \sim \rho_k, a \sim \bar{\pi}_k}\left[g^2(s, a)\right]}\,.
    \end{align*}
\end{lemma}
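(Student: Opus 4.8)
The plan is to prove the inequality by a change of measure from the pair $(d_0,\pi)$ to the pair $(\rho_k,\bar{\pi}_k)$, followed by a single application of the Cauchy--Schwarz inequality. Concretely, I would introduce the likelihood ratio
\[
w(s,a)=\frac{d_0(s)\,\pi(a\mid s)}{\rho_k(s)\,\bar{\pi}_k(a\mid s)},
\]
so that $\BE_{s\sim d_0,a\sim\pi}[g(s,a)]=\BE_{s\sim\rho_k,a\sim\bar{\pi}_k}[w(s,a)\,g(s,a)]$, and observe that $\BE_{s\sim\rho_k,a\sim\bar{\pi}_k}[w(s,a)]=1$ since $\rho_k\times\bar{\pi}_k$ is a probability distribution. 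The entire argument then reduces to producing a uniform upper bound on $w$, and the two factors $\tfrac{1}{1-\gamma}$ and $\tfrac{A}{\mixCoeff}$ appearing in the target bound will come, respectively, from the state marginal and the action conditional.

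The two pointwise estimates I would establish are as follows. For the states, recall that $d_{P^\star}^{\tilde{\pi}_i}(s)=(1-\gamma)\sum_{h=0}^\infty\gamma^h d_{P^\star,h}^{\tilde{\pi}_i}(s)$ and that the $h=0$ term is exactly the initial distribution, $d_{P^\star,0}^{\tilde{\pi}_i}(s)=d_0(s)$. Keeping only this term gives $d_{P^\star}^{\tilde{\pi}_i}(s)\ge(1-\gamma)d_0(s)$ for every $i$, and averaging over $i\in[k]$ yields $\rho_k(s)\ge(1-\gamma)d_0(s)$, i.e. $d_0(s)/\rho_k(s)\le 1/(1-\gamma)$. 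For the actions, the uniform-exploration component of the mixed roll-out policy gives $\bar{\pi}_k(a\mid s)=\mixCoeff\,U(\cA)(a)+(1-\mixCoeff)\tfrac1k\sum_{i=1}^k\tilde{\pi}_i(a\mid s)\ge\mixCoeff/A$, so that $\pi(a\mid s)/\bar{\pi}_k(a\mid s)\le A/\mixCoeff$ using $\pi(a\mid s)\le 1$. Multiplying the two bounds gives the uniform control $w(s,a)\le \tfrac{A}{(1-\gamma)\mixCoeff}$.

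To conclude, I would split $w\,g=\sqrt{w}\cdot(\sqrt{w}\,g)$ and apply Cauchy--Schwarz:
\[
\BE_{s\sim\rho_k,a\sim\bar{\pi}_k}[w\,g]
\le\sqrt{\BE_{s\sim\rho_k,a\sim\bar{\pi}_k}[w]}\;\sqrt{\BE_{s\sim\rho_k,a\sim\bar{\pi}_k}[w\,g^2]}
\le\sqrt{1}\cdot\sqrt{\frac{A}{(1-\gamma)\mixCoeff}\,\BE_{s\sim\rho_k,a\sim\bar{\pi}_k}[g^2]},
\]
where the last step uses $\BE_{\rho_k\times\bar{\pi}_k}[w]=1$ together with the pointwise bound $w\le \tfrac{A}{(1-\gamma)\mixCoeff}$ pulled out of the second factor. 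This is exactly the claimed inequality.

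I do not expect a serious obstacle here; the only points requiring care are well-definedness of the change of measure, which is guaranteed because the uniform mixing forces $\bar{\pi}_k(a\mid s)>0$ everywhere and $\rho_k(s)\ge(1-\gamma)d_0(s)$ ensures $\rho_k$ dominates $d_0$ on its support, and the placement of the $w$ in the correct factor of Cauchy--Schwarz (putting $\sqrt w$ with itself so that $\BE[w]=1$ neutralizes one factor, rather than squaring $w$ and incurring $A^2/((1-\gamma)^2\mixCoeff^2)$). I would also note that the hypothesis $\|g\|_\infty\le B$ is not actually needed for the inequality itself and merely guarantees that the right-hand side is finite.
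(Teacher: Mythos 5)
Your proposal is correct and is essentially the paper's own argument: the paper applies Jensen's inequality first and then importance-samples the second moment, while you importance-sample first and then apply Cauchy--Schwarz with the $\sqrt{w}\cdot(\sqrt{w}\,g)$ split, but since $\BE_{\rho_k\times\bar{\pi}_k}[w]=1$ both routes reduce to bounding $\sqrt{\BE_{\rho_k\times\bar{\pi}_k}[w\,g^2]}$ via the same two pointwise estimates $\rho_k(s)\ge(1-\gamma)d_0(s)$ and $\bar{\pi}_k(a\mid s)\ge\mixCoeff/A$. Your side remarks (well-definedness of the change of measure, and that $\|g\|_\infty\le B$ is not actually used here) are accurate.
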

\begin{proof}
    \begin{align*}
    \mathbb{E}_{s \sim d_0, a \sim \pi\left(s_0\right)}\left[g(s,a)\right] 
    &\leq \sqrt{\mathbb{E}_{s \sim d_0, a \sim \pi\left(s_0\right)}\left[g^2(s,a)\right]}\\
    &\leq \sqrt{\max _{(s, a)\in\cS\times\cA} \frac{d_0(s) \pi(a | s)}{\rho_k(s) \bar{\pi}_k(a | s)} \mathbb{E}_{s \sim \rho_k, a \sim \bar{\pi}_k}\left[g^2(s, a)\right]}\\
    &\leq \sqrt{\max _{(s, a)\in\cS\times\cA} \frac{d_0(s) \pi(a | s)}{(1-\gamma)d_0(s) \bar{\pi}_k(a | s)} \mathbb{E}_{s \sim \rho_k, a \sim \bar{\pi}_k}\left[g^2(s, a)\right]}\\
    &\leq  \sqrt{\max _{(s, a)\in\cS\times\cA} \frac{1}{(1-\gamma) \mixCoeff\cdot U(a)} \mathbb{E}_{s \sim \rho_k, a \sim \bar{\pi}_k}\left[g^2(s, a)\right]}\\
    &\leq \sqrt{\frac{A}{(1-\gamma) \mixCoeff} \mathbb{E}_{s \sim \rho_k, a \sim \bar{\pi}_k}\left[g^2(s, a)\right]}\,,
\end{align*}
where the first inequality follows from Jensen's inequality,  the second inequality is by importance sampling, and the fourth inequality is due to the definition of $\bar{\pi}_k$.
\end{proof}

The following lemma shows that 
\begin{align*}
\mathbb{E}_{({s}, {a}) \sim d_{P^{\star}}^{\tilde{\pi}_k}}\left[g(s,a)\right]\lesssim\mathbb{E}_{({s}, {a}) \sim d_{P ^\star}^{\tilde{\pi}_k}}\left[\left\|\phi^{\star}({s}, {a})\right\|_{\Sigma_{\rho_k, \phi \star}^{-1}}\right]\,,
\end{align*} 
if $ \mathbb{E}_{s \sim \rho_k, a \sim \bar{\pi}_k}\left[g^2(s, a)\right]$ is upper bounded.
\begin{lemma}[One-step-back inequality in the true model]\label{lem: one step back true model} For any $g: \mathcal{S} \times \mathcal{A} \to \mathbb{R}$ such that $\|g\|_{\infty} \leq B$, any epoch $i\in[N]$ and any episode $k\in\{k_i,\ldots,k_i+L-1\}$, it holds that 
\begin{align*}
    &\mathbb{E}_{(\tilde{s}, \tilde{a}) \sim d_{P^{\star}}^{\tilde{\pi}_k}, s \sim P^{\star}(\cdot\mid\tilde{s}, \tilde{a}), a \sim \tilde{\pi}_k(s)}\left[g(s,a)\right] \\
    \leq& \mathbb{E}_{(\tilde{s}, \tilde{a}) \sim d_{P ^\star}^{\tilde{\pi}_k}}\left[\left\|\phi^{\star}(\tilde{s}, \tilde{a})\right\|_{\Sigma_{\rho_{k_i}, \phi \star}^{-1}}\right] 
    \sqrt{\frac{{k_i}A}{\mixCoeff\gamma} \mathbb{E}_{s \sim \rho_{k_i}, a \sim \bar{\pi}_{k_i}}\left[g^2(s, a)\right]+\lambda_{k_i}d B^2}\,.
\end{align*}
\end{lemma}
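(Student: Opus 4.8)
The plan is to use the low-rank factorization $P^\star(s\mid\tilde s,\tilde a)=\mu^\star(s)^\top\phi^\star(\tilde s,\tilde a)$ to linearize the one-step-forward expectation as a linear functional of $\phi^\star(\tilde s,\tilde a)$, and then to peel off $\phi^\star(\tilde s,\tilde a)$ via Cauchy--Schwarz in the geometry of $\Sigma_{\rho_{k_i},\phi^\star}$. Writing $\bar g(s)=\BE_{a\sim\tilde\pi_k(s)}[g(s,a)]$, I would first note that
\[
\BE_{s\sim P^\star(\cdot\mid\tilde s,\tilde a),\,a\sim\tilde\pi_k(s)}\!\left[g(s,a)\right]=\int\bar g(s)\,\mu^\star(s)^\top\phi^\star(\tilde s,\tilde a)\dif s=\vtheta^\top\phi^\star(\tilde s,\tilde a),\quad\text{where }\vtheta:=\int\mu^\star(s)\bar g(s)\dif s.
\]
Taking the outer expectation over $(\tilde s,\tilde a)\sim d_{P^\star}^{\tilde\pi_k}$ and applying the pointwise bound $\vtheta^\top\phi^\star(\tilde s,\tilde a)\le\|\phi^\star(\tilde s,\tilde a)\|_{\Sigma_{\rho_{k_i},\phi^\star}^{-1}}\,\|\vtheta\|_{\Sigma_{\rho_{k_i},\phi^\star}}$, the left-hand side is at most $\BE_{(\tilde s,\tilde a)\sim d_{P^\star}^{\tilde\pi_k}}[\|\phi^\star(\tilde s,\tilde a)\|_{\Sigma_{\rho_{k_i},\phi^\star}^{-1}}]\cdot\|\vtheta\|_{\Sigma_{\rho_{k_i},\phi^\star}}$. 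It then suffices to show $\|\vtheta\|_{\Sigma_{\rho_{k_i},\phi^\star}}^2\le\tfrac{k_iA}{\mixCoeff\gamma}\BE_{s\sim\rho_{k_i},a\sim\bar\pi_{k_i}}[g^2(s,a)]+\lambda_{k_i}dB^2$.

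Expanding the norm gives $\|\vtheta\|_{\Sigma_{\rho_{k_i},\phi^\star}}^2=k_i\BE_{(s,a)\sim\rho_{k_i}}[(\vtheta^\top\phi^\star(s,a))^2]+\lambda_{k_i}\|\vtheta\|_2^2$. For the regularization piece, the normalization assumption on $\mu^\star$ (namely $\|\int\mu^\star(s)h(s)\dif s\|_2\le\sqrt d$ for $h:\cS\to[0,1]$) together with $\|\bar g\|_\infty\le B$ yields $\|\vtheta\|_2\le B\sqrt d$ (directly when $g\ge0$, as in all our applications, and up to an absolute constant after splitting $\bar g$ into positive and negative parts otherwise), hence $\lambda_{k_i}\|\vtheta\|_2^2\le\lambda_{k_i}dB^2$. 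For the leading piece, I would re-express the inner product back as a transition expectation, $\vtheta^\top\phi^\star(s,a)=\int\bar g(s')P^\star(s'\mid s,a)\dif s'=\BE_{s'\sim P^\star(\cdot\mid s,a),\,a'\sim\tilde\pi_k(s')}[g(s',a')]$, and apply conditional Jensen to obtain $(\vtheta^\top\phi^\star(s,a))^2\le\BE_{s'\sim P^\star(\cdot\mid s,a),\,a'\sim\tilde\pi_k(s')}[g^2(s',a')]$.

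The crux is then a change of measure relating the one-step-forward pair $(s',a')$ to the reference pair $(s\sim\rho_{k_i},a\sim\bar\pi_{k_i})$, and it separates into a temporal shift and a policy mismatch. For the temporal shift I would invoke the geometric-horizon occupancy identity: for each $\tilde\pi_j$, $\gamma\sum_{s,a}d_{P^\star}^{\tilde\pi_j}(s,a)P^\star(s'\mid s,a)=d_{P^\star}^{\tilde\pi_j}(s')-(1-\gamma)d_0(s')\le d_{P^\star}^{\tilde\pi_j}(s')$; averaging over $j\le k_i$ shows that the $s'$-marginal induced by $(s,a)\sim\rho_{k_i}$ followed by $P^\star$ is at most $\rho_{k_i}(s')/\gamma$, which is exactly the origin of the $1/\gamma$ factor. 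For the policy mismatch, the explicit uniform component of the mixed roll-out policy guarantees $\bar\pi_{k_i}(a'\mid s')\ge\mixCoeff/A$, so importance weighting from $\tilde\pi_k(\cdot\mid s')$ to $\bar\pi_{k_i}(\cdot\mid s')$ costs only the factor $A/\mixCoeff$. Combining the two gives $k_i\BE_{(s,a)\sim\rho_{k_i}}[(\vtheta^\top\phi^\star(s,a))^2]\le\tfrac{k_iA}{\mixCoeff\gamma}\BE_{s\sim\rho_{k_i},a\sim\bar\pi_{k_i}}[g^2(s,a)]$, and adding the regularization bound and taking square roots completes the argument.

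The main obstacle I anticipate is precisely this last change-of-measure step: the temporal re-indexing of the occupancy measure and the policy reweighting must be carried out simultaneously and uniformly over all $j\le k_i$, and the bound is affordable only because the occupancy identity loses a mere multiplicative $1/\gamma$ and because the forced-exploration ($\mixCoeff$-uniform) part of $\bar\pi_{k_i}$ supplies the uniform lower bound $\bar\pi_{k_i}\ge\mixCoeff/A$. This is exactly where the doubled-exploration design of the algorithm is used in the analysis; everything else (the linearization, Cauchy--Schwarz, and Jensen steps) is routine.
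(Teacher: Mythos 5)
Your proposal is correct and follows essentially the same route as the paper's proof: linearize the one-step-forward expectation via $P^\star=\mu^{\star\top}\phi^\star$, apply Cauchy--Schwarz in the $\Sigma_{\rho_{k_i},\phi^\star}$ geometry, bound the dual norm using the $\sqrt{d}$-regularity of $\mu^\star$ plus Jensen, and then perform exactly the same two-part change of measure (the occupancy identity costing $1/\gamma$, and importance weighting against the $\mixCoeff$-uniform component of $\bar{\pi}_{k_i}$ costing $A/\mixCoeff$). Your remark on splitting $\bar g$ into positive and negative parts for the $\lambda_{k_i}dB^2$ term is, if anything, slightly more careful than the paper's treatment of that step.
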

\begin{proof}
To begin with, applying the Cauchy–Schwarz inequality shows that
\begin{align}\label{eq:one_step_back_true}
 &\mathbb{E}_{(\tilde{s}, \tilde{a}) \sim d_{P^{\star}}^{\tilde{\pi}_k}, s \sim P^{\star}(\cdot\mid\tilde{s}, \tilde{a}), a \sim \tilde{\pi}_k(s)}\left[g(s, a)\right]\notag\\
 =&\mathbb{E}_{(\tilde{s}, \tilde{a}) \sim d_{P^{\star}}^{\tilde{\pi}_k}} \left[\phi^{\star}(\tilde{s}, \tilde{a})^{\top} \int \sum_a \mu^{\star}(s) \tilde{\pi}_k(a \mid s) g(s, a) d(s)\right]\notag\\
\leq&\mathbb{E}_{(\tilde{s}, \tilde{a}) \sim d_{P^{\star}}^{\tilde{\pi}_k}}\left[\left\|\phi^{\star}(\tilde{s}, \tilde{a})\right\|_{\Sigma_{\rho_{k_i}, \phi^{\star}}^{-1}}\left\|\int \sum_a \mu^{\star}(s) \tilde{\pi}_k(a \mid s) g(s, a) d(s)\right\|_{\Sigma_{\rho_{k_i}, \phi^{\star}}}\right]\,.
\end{align}
We bound the second quadratic form w.r.t. $\Sigma_{\rho_{k_i}, \phi^{\star}}$ in Eq. \eqref{eq:one_step_back_true} as follows:
\begin{align}\label{ineq: true model before MLE}
& \left\|\int \sum_a \mu^{\star}(s) \tilde{\pi}_k(a \mid s) g(s, a) d(s)\right\|_{\Sigma_{\rho_{k_i}, \phi^{\star}}}^2 \notag\\
=&\left[\int \sum_a \mu^{\star}(s) \tilde{\pi}_k(a \mid s) g(s, a) d(s)\right]^{\top}
\left\{{k_i} \mathbb{E}_{(s, a) \sim \rho_{k_i}}\left[\phi^{\star}(s, a)\phi^{\star}(s, a)^{\top}\right]+\lambda_k I\right\}\notag\\
&\quad\left[\int \sum_a \mu^{\star}(s) \tilde{\pi}_k(a \mid s) g(s, a) d(s)\right] \notag\\
\leq& {k_i} \mathbb{E}_{(\tilde{s}, \tilde{a}) \sim \rho_{k_i}}\left\{\left[\int \sum_a \mu^{\star}(s)^{\top} \phi^{\star}(\tilde{s}, \tilde{a}) \tilde{\pi}_k(a \mid s) g(s, a) d(s)\right]^2\right\}+\lambda_{k_i} d B^2\notag \\
=& {k_i} \mathbb{E}_{(\tilde{s}, \tilde{a}) \sim \rho_{k_i}}\left\{
\BE_{s \sim P^{\star}(\cdot\mid \tilde{s}, \tilde{a}), a \sim \tilde{\pi}_k(s)}\left[ g(s, a)\right]^2\right\}+\lambda_{k_i} d B^2\notag\\
\leq&{k_i}\mathbb{E}_{(\tilde{s}, \tilde{a}) \sim \rho_{k_i}, s \sim P^{\star}(\cdot\mid \tilde{s}, \tilde{a}), a \sim \tilde{\pi}_k(s)}\left[g^2(s, a)\right]+\lambda_{k_i} d B^2\,,
\end{align}
where the first inequality comes from $\|g(s,a)\|_{\infty}\leq B$ together with the regularity condition in \Cref{ass:model_class} that $\left\|\int\mu^\star(s) h(s) \mathrm{d}(s)\right\|_2 \leq \sqrt{d}$ for any $h: \mathcal{S} \rightarrow[0,1]$, and the last inequality follows from the Jensen's inequality.

By importance sampling, it is clear that
\begin{align}\label{ineq: the importance sampling of true model}
 &{k_i}\mathbb{E}_{(\tilde{s},\tilde{a}) \sim \rho_{k_i}, s \sim P^{\star}(\cdot\mid\tilde{s}, \tilde{a}), a \sim \tilde{\pi}_k(s)}\left[g^2(s, a)\right]+\lambda_{k_i} d B^2\notag\\
 \leq& \frac{{k_i}}{\gamma} \mathbb{E}_{s \sim \rho_{k_i}, a \sim \tilde{\pi}_k(s)}\left[g^2(s, a)\right]+\lambda_{k_i} d B^2 \notag\\
\leq&\max_{(s,a)\in\cS\times\cA} \frac{{k_i}}{\gamma}\frac{\tilde{\pi}_k(a \mid s)}{\bar{\pi}_{k_i}(a\mid s)} \mathbb{E}_{s \sim \rho_{k_i}, a \sim \bar{\pi}_{k_i}(s)}\left[g^2(s, a)\right]+\lambda_{k_i} d B^2\notag\\
\leq& \frac{{k_i}A}{\mixCoeff\gamma}\mathbb{E}_{s \sim \rho_{k_i}, a \sim \bar{\pi}_{k_i}(s)}\left[g^2(s, a)\right]+\lambda_{k_i} d B^2\,, 
\end{align}
where the third inequality is due to the definition of $\bar{\pi}_k$, and the first inequality is because 
\begin{align*}
& \gamma \mathbb{E}_{(\tilde{s}, \tilde{a}) \sim \rho_{k_i}, s \sim P^{\star}(\cdot\mid\tilde{s}, \tilde{a}), a \sim \tilde{\pi}_k(s)}\left[g^2(s, a)\right]\\
\leq&\gamma \mathbb{E}_{(\tilde{s}, \tilde{a}) \sim \rho_{k_i}, s \sim P^{\star}(\cdot\mid\tilde{s}, \tilde{a}), a \sim \tilde{\pi}_k(s)}\left[g^2(s, a)\right]+(1-\gamma) \mathbb{E}_{s_0 \sim d_0, a \sim \tilde{\pi}_k(s)}\left[g^2(s, a)\right] \\
=&\mathbb{E}_{s \sim \rho_{k_i}, a \sim \tilde{\pi}_k(s)}\left[g^2(s, a)\right]\,,
\end{align*}
which comes from \Cref{lem:occupancy_measure_expectation_decomp}.

The proof is concluded by substituting Eq. \eqref{ineq: true model before MLE} and Eq. \eqref{ineq: the importance sampling of true model} into Eq. \eqref{eq:one_step_back_true}.
\end{proof}

The following lemma is a counterpart of \Cref{lem: one step back true model}, which shows that 
\begin{align*}
\mathbb{E}_{({s}, {a}) \sim d_{\widehat{P}_k}^{\pi}}\left[g(s,a)\right]\lesssim\mathbb{E}_{({s}, {a}) \sim d_{\widehat{P}_k}^{\pi}}\left[\left\|\widehat{\phi}_k({s}, {a})\right\|_{\Sigma_{\rho_k \times \bar{\pi}_k, \widehat{\phi}_k}^{-1}}\right]\,,
\end{align*}
if $ \mathbb{E}_{s \sim \rho_k^{\prime}, a \sim \bar{\pi}_k}\left[g^2(s, a)\right]$ is upper bounded. Note that compared with  \Cref{lem: one step back true model}, this lemma additionally needs to condition on the event that the MLE guarantee (\textit{cf.}, \Cref{lem: MLE}) holds.
\begin{lemma}[One-step-back inequality in the learned model]\label{lem: one step back estiamte model} 
Conditioned on the event where the MLE guarantee in \Cref{lem: MLE} holds, \textit{i.e.}, $\mathbb{E}_{s \sim \rho_{k_i}, a \sim \bar{\pi}_{k_i}}\left[f_{k_i}(s, a)^2\right] \lesssim \zeta_{k_i}$, for any epoch $i\in[N]$.
Then for any $g:\mathcal{S} \times \mathcal{A} \to \mathbb{R}$ such that $\|g\|_{\infty} \leq B$, any epoch $i\in[N]$ and any policy $\pi$, it holds that
\begin{align*}
    &\mathbb{E}_{(\tilde{s}, \tilde{a}) \sim d_{\widehat{P}_{k_i}}^{\pi}, s \sim \widehat{P}_{k_i}(\cdot\mid\tilde{s}, \tilde{a}), a \sim \pi(s)}\left[g(s,a)\right]\\
    \leq& \mathbb{E}_{(\tilde{s}, \tilde{a}) \sim d_{\widehat{P}_{k_i}}^\pi}\left[\left\|\widehat{\phi}_{k_i}(\tilde{s}, \tilde{a})\right\|_{\Sigma_{\rho_{k_i} \times \bar{\pi}_{k_i}, \widehat{\phi}_{k_i}}^{-1}}\right]
    \sqrt{\frac{{k_i}A}{\mixCoeff} \mathbb{E}_{s \sim \rho_{k_i}^{\prime}, a \sim \bar{\pi}_{k_i}}\left[g^2(s, a)\right]+B^2 \lambda_{k_i} d+{k_i} B^2 \zeta_{k_i}}\,,
\end{align*}
where recall that $\bar{\pi}_k(\cdot\mid s)=\mixCoeff\cdot U(\cA)+(1-\mixCoeff)\cdot 1/k\sum_{j=1}^{k}\tilde{\pi}_{j}(\cdot\mid s)$.
\end{lemma}
\begin{proof}
The proof of this lemma is generally similar to that of \Cref{lem: one step back true model}. We start by applying the Cauchy–Schwarz inequality:
\begin{align}\label{eq:one_step_back_learned1}
 &\mathbb{E}_{(\tilde{s}, \tilde{a}) \sim d_{\widehat{P}_{k_i}}^{\pi}, s \sim \widehat{P}_{k_i}(\cdot\mid\tilde{s}, \tilde{a}), a \sim \pi(s)}\left[g(s, a)\right]\notag\\
 =&\mathbb{E}_{(\tilde{s}, \tilde{a}) \sim d_{\widehat{P}_{k_i}}^{\pi}} \left[\widehat{\phi}_{k_i}(\tilde{s}, \tilde{a})^{\top} \int \sum_a \widehat{\mu}_{k_i}(s) \pi(a \mid s) g(s, a) d(s)\right]\notag\\
\leq& \mathbb{E}_{(\tilde{s}, \tilde{a}) \sim d_{\widehat{P}_{k_i}}^{\pi}}
\left[\left\|\widehat{\phi}_{k_i}(\tilde{s}, \tilde{a})\right\|_{\Sigma_{\rho_{k_i} \times \bar{\pi}_{k_i}, \widehat{\phi}_{k_i}}^{-1}}\left\|\int \sum_a \widehat{\mu}_{k_i}(s) \pi(a \mid s) g(s, a) d(s)\right\|_{\Sigma_{\rho_{k_i} \times \bar{\pi}_{k_i}, \widehat{\phi}_{k_i}}}\right]\,.
\end{align}

We now bound the second quadratic form w.r.t. $\Sigma_{\rho_{k_i} \times \bar{\pi}_{k_i}, \widehat{\phi}_{k_i}}$ in Eq. \eqref{eq:one_step_back_learned1} as follows:
\begin{align}\label{eq:one_step_back_learned2}
& \left\|\int \sum_a \widehat{\mu}_{k_i}(s) \pi(a \mid s) g(s, a) d(s)\right\|_{\Sigma_{\rho_{k_i} \times \bar{\pi}_{k_i}, \widehat{\phi}_{k_i}}}^2 \notag\\
=&\left[\int \sum_a \widehat{\mu}_{k_i}(s) \pi(a \mid s) g(s, a) d(s)\right]^{\top}\cdot
\left\{k_i \mathbb{E}_{s \sim \rho_{k_i}, a \sim \bar{\pi}_{k_i}}\left[\widehat{\phi}_{k_i}(s,a) \widehat{\phi}_{k_i}(s,a)^{\top}\right]+\lambda_{k_i} I\right\}\notag\\
&\cdot\left[\int \sum_a \widehat{\mu}_{k_i}(s) \pi(a \mid s) g(s, a) d(s)\right]\notag\\
\leq& {k_i} \mathbb{E}_{\tilde{s} \sim \rho_{k_i}, \tilde{a} \sim \bar{\pi}_{k_i}}\left\{\left[\int \sum_a \widehat{\mu}_{k_i}(s)^{\top} \widehat{\phi}_{k_i}(\tilde{s}, \tilde{a}) \pi(a \mid s) g(s, a) d(s)\right]^2\right\}+B^2 \lambda_{k_i} d\notag\\
=&{k_i} \mathbb{E}_{\tilde{s} \sim \rho_{k_i}, \tilde{a} \sim \bar{\pi}_{k_i}}\left\{\mathbb{E}_{s \sim \widehat{P}_{k_i}(\cdot\mid\tilde{s}, \tilde{a}), a \sim \pi(s)}[g(s, a)]^2\right\}+B^2 \lambda_{k_i} d\,,
\end{align}
where the first inequality is because $\|g(s,a)\|_{\infty}\leq B$ as well as the regularity condition in \Cref{ass:model_class} that $\left\|\int\mu(s) h(s) \mathrm{d}(s)\right\|_2 \leq \sqrt{d}$ for any $h: \mathcal{S} \rightarrow[0,1]$ and any $\mu\in\Psi$.

Moreover, using the MLE guarantee in \Cref{lem: MLE}, we have that
\begin{align}\label{eq:one_step_back_learned3}
&{k_i} \mathbb{E}_{\tilde{s} \sim \rho_{k_i}, \tilde{a} \sim \bar{\pi}_{k_i}}\left\{\mathbb{E}_{s \sim \widehat{P}_{k_i}(\cdot\mid\tilde{s}, \tilde{a}), a \sim \pi(s)}[g(s, a)]^2\right\}+B^2 \lambda_{k_i} d\notag\\
\leq& {k_i} \mathbb{E}_{\tilde{s} \sim \rho_{k_i}, \tilde{a} \sim \bar{\pi}_{k_i}}\left\{\mathbb{E}_{s \sim P^\star(\cdot\mid\tilde{s}, \tilde{a}), a \sim \pi(s)}[g(s, a)]^2\right\}+B^2 \lambda_{k_i} d+{k_i} B^2 \zeta_{k_i} \notag\\
\leq& {k_i} \mathbb{E}_{\tilde{s} \sim \rho_{k_i}, \tilde{a} \sim \bar{\pi}_{k_i}, s \sim P^{\star}(\cdot\mid\tilde{s}, \tilde{a}), a \sim \pi(s)}\left[g^2(s, a)\right]+B^2 \lambda_{k_i} d+B^2 {k_i} \zeta_{k_i}\notag\\
\leq& \frac{{k_i}A}{\mixCoeff}\mathbb{E}_{\tilde{s} \sim \rho_{k_i}, \tilde{a} \sim \bar{\pi}_{k_i}, s \sim P^\star(\cdot\mid\tilde{s}, \tilde{a}), a \sim \bar{\pi}_{k_i}}\left[g^2(s, a)\right]+B^2 \lambda_{k_i} d+B^2 {k_i} \zeta_{k_i} \notag\\
\leq& \frac{{k_i}A}{\mixCoeff} \mathbb{E}_{s \sim \rho_{k_i}^{\prime}, a \sim \bar{\pi}_{k_i}}\left[g^2(s, a)\right]+B^2 \lambda_{k_i} d+B^2 {k_i} \zeta_{k_i} \,,
\end{align}
where the second inequality follows by Jensen's inequality, the third inequality comes from importance sampling and the definition of $\bar{\pi}_{k_i}$, and the last inequality is due to the definition of $\rho_{k_i}^{\prime}$. 

The proof is now concluded by substituting Eq. \eqref{eq:one_step_back_learned2} and Eq. \eqref{eq:one_step_back_learned3} into Eq. \eqref{eq:one_step_back_learned1}.
\end{proof}

The following lemma shows that the expectation of any state-action function $g:\cS\times\cA\to\BR$ w.r.t. $d_{P}^{\pi_1}$ and $\pi_2$ can be decomposed into (a) the one-step-back expectation of $g$ w.r.t. $d_{P}^{\pi_1}$ and $\pi_2$; and (b) the expectation of $g$ w.r.t. $d_0$ and $\pi_2$.
\begin{lemma}\label{lem:occupancy_measure_expectation_decomp}
For any $P$, and any policy $\pi_1$ and $\pi_2$, it holds that
\begin{align*}
    &\mathbb{E}_{s \sim d_{P}^{\pi_1},a\sim\pi_2(\cdot\mid s)}[g(s, a)]\\
    =&\gamma \mathbb{E}_{(\tilde{s}, \tilde{a}) \sim d_{P}^{\pi_1}, s \sim P(\cdot\mid \tilde{s}, \tilde{a}), a \sim \pi_2(\cdot\mid s)}[g(s, a)]+(1-\gamma) \mathbb{E}_{s \sim d_0, a \sim \pi_2\left(\cdot\mid s_0\right)}[g(s, a)]\,.
\end{align*}
\end{lemma}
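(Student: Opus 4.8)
The plan is to reduce the identity to the \emph{Bellman flow equation} for the discounted state occupancy measure, namely
\begin{align*}
d_P^{\pi_1}(s) = (1-\gamma)\, d_0(s) + \gamma \sum_{\tilde{s},\tilde{a}} d_P^{\pi_1}(\tilde{s},\tilde{a})\, P(s \mid \tilde{s},\tilde{a})\,,
\end{align*}
and then to integrate this against $g$. First I would recall that $d_P^{\pi_1}(s) = (1-\gamma)\sum_{h=0}^{\infty}\gamma^h d_{P,h}^{\pi_1}(s)$, where $d_{P,h}^{\pi_1}(s)$ is the probability of occupying $s$ at step $h$, with $d_{P,0}^{\pi_1} = d_0$ and the one-step recursion $d_{P,h+1}^{\pi_1}(s) = \sum_{\tilde{s},\tilde{a}} d_{P,h}^{\pi_1}(\tilde{s})\,\pi_1(\tilde{a}\mid\tilde{s})\, P(s\mid\tilde{s},\tilde{a})$.

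To obtain the flow equation, I would peel off the $h=0$ term, which contributes $(1-\gamma)d_0(s)$, and re-index the tail by $h \mapsto h+1$. Pulling out one factor of $\gamma$ and applying the step-wise recursion turns the tail into $\gamma \sum_{\tilde{s},\tilde{a}} \big[(1-\gamma)\sum_{h=0}^{\infty}\gamma^{h} d_{P,h}^{\pi_1}(\tilde{s})\big]\pi_1(\tilde{a}\mid\tilde{s}) P(s\mid\tilde{s},\tilde{a})$; the bracketed quantity is exactly $d_P^{\pi_1}(\tilde{s})$, and using $d_P^{\pi_1}(\tilde{s})\pi_1(\tilde{a}\mid\tilde{s}) = d_P^{\pi_1}(\tilde{s},\tilde{a})$ (valid because under $\pi_1$ the step-wise state-action law factorizes as $d_{P,h}^{\pi_1}(\tilde{s},\tilde{a}) = d_{P,h}^{\pi_1}(\tilde{s})\pi_1(\tilde{a}\mid\tilde{s})$) yields the claimed flow equation.

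Finally I would multiply the flow equation by $\bar g(s) := \mathbb{E}_{a\sim\pi_2(\cdot\mid s)}[g(s,a)]$ and sum over $s$. The left-hand side becomes $\mathbb{E}_{s\sim d_P^{\pi_1}, a\sim\pi_2(\cdot\mid s)}[g(s,a)]$; the first term on the right becomes $(1-\gamma)\mathbb{E}_{s\sim d_0, a\sim\pi_2(\cdot\mid s)}[g(s,a)]$; and in the second term, summing $P(s\mid\tilde{s},\tilde{a})\,\bar g(s)$ over $s$ produces $\mathbb{E}_{s\sim P(\cdot\mid\tilde{s},\tilde{a}), a\sim\pi_2(\cdot\mid s)}[g(s,a)]$, so that the remaining sum against $d_P^{\pi_1}(\tilde{s},\tilde{a})$ gives exactly the one-step-back expectation in the statement. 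I expect the only delicate point to be the bookkeeping of the index shift together with the factorization of the state-action occupancy under $\pi_1$; once the flow equation is in hand, the rest is a direct substitution. (The summations should be read as integrals over $\cS$ when the state space is continuous, but the manipulations are identical.)
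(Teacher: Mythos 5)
Your proposal is correct and is essentially the paper's own argument: the paper likewise peels off the $h=0$ term of $d_P^{\pi_1}=(1-\gamma)\sum_h \gamma^h d_{P,h}^{\pi_1}$, re-indexes the tail, and applies the one-step recursion $d_{P,h+1}^{\pi_1}(s)=\sum_{\tilde s,\tilde a} d_{P,h}^{\pi_1}(\tilde s,\tilde a)P(s\mid\tilde s,\tilde a)$ before recombining — the only cosmetic difference is that you first derive the Bellman flow equation for the measure and then integrate against $\bar g$, whereas the paper carries the expectation of $g$ through each step. No gaps.
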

\begin{proof}
\begin{align*}
    &\mathbb{E}_{s \sim d_{P}^{\pi_1},a\sim\pi_2(\cdot\mid s)}[g(s, a)]\\
    =&\sum_{t=1}^{\infty}(1-\gamma)\gamma^t\mathbb{E}_{s \sim d_{P,t}^{\pi_1},a\sim\pi_2(\cdot\mid s)}[g(s, a)]+
    (1-\gamma)\mathbb{E}_{s \sim d_{P,0}^{\pi_1},a\sim\pi_2(\cdot\mid s)}[g(s, a)]\\
    =&\gamma\sum_{t=0}^{\infty}(1-\gamma)\gamma^t\mathbb{E}_{s \sim d_{P,t+1}^{\pi_1},a\sim\pi_2(\cdot\mid s)}[g(s, a)]+
    (1-\gamma) \mathbb{E}_{s \sim d_0, a \sim \pi_2\left(\cdot\mid s_0\right)}[g(s, a)]\\
    =&\gamma\sum_{t=0}^{\infty}(1-\gamma)\gamma^t\mathbb{E}_{(\tilde{s},\tilde{a}) \sim d_{P,t}^{\pi_1},s\sim P(\cdot\mid \tilde{s},\tilde{a}),a\sim\pi_2(\cdot\mid s)}[g(s, a)]+
    (1-\gamma) \mathbb{E}_{s \sim d_0, a \sim \pi_2\left(\cdot\mid s_0\right)}[g(s, a)]\\
    =&\gamma \mathbb{E}_{(\tilde{s}, \tilde{a}) \sim d_{P}^{\pi_1}, s \sim P(\cdot\mid \tilde{s}, \tilde{a}), a \sim \pi_2(\cdot\mid s)}[g(s, a)]+(1-\gamma) \mathbb{E}_{s \sim d_0, a \sim \pi_2\left(\cdot\mid s_0\right)}[g(s, a)]\,.
\end{align*}
\end{proof}

\section{Omitted Analysis of The Regret Lower Bound}\label{app:sec:lb}

In this section, we provide the proof of Theorem \ref{thm:lb}.
For the remainder of this section, we switch from loss functions to reward functions for convenience since we now consider MDPs with fixed loss functions.

\subsection{Construction of Hard-to-learn MDP Instances}
\begin{figure}[ht]
\centering

\begin{tikzpicture}[->,>=stealth',shorten >=1pt,auto,node distance=2.8cm,semithick]
\tikzstyle{every state}=[fill=white,text=black]
\node[state,minimum width =35pt,minimum height =35pt,draw=blue!75,fill=blue!20]        (A)     {$s_{1,1}$};
\node[state,minimum width =35pt,minimum height =35pt,draw=blue!75,fill=blue!20]        (S21)[below left=1.5cm and 4.5cm of A]    {$s_{2,1}$}; 
\node[state,minimum width =35pt,minimum height =35pt,draw=blue!75,fill=blue!20]        (S2l)[below left=1.5cm and 0.5cm of A]    {$s_{2,i^\star}$};
\node[state,minimum width =35pt,minimum height =35pt,draw=blue!75,fill=blue!20]        (S2dm4)[below right=1.5cm and 1.75cm of A]    {$s_{2,d-4}$};
\node[state,minimum width =35pt,minimum height =35pt,draw=blue!75,fill=blue!20]        (So)[below right=1.5cm  and 4.5cm of A]    {$s^o$}; 
\path 
	(A) edge (S21)
	(A) edge (S2l)
        (A) edge (S2dm4)
        (A) edge (So);   
\path[-]
        (S21)  edge[loosely dotted]   (S2l)
	(S2l)  edge[loosely dotted]   (S2dm4);
\node[state,minimum width =35pt,minimum height =35pt,draw=blue!75,fill=blue!20]     (Sg)[below  left =2.5cm and 1cm of S2l]    {$s^g$}; 
\node[state,minimum width =35pt,minimum height =35pt,draw=blue!75,fill=blue!20]     (Sb)[below  right =2.5cm and 1cm of S2l]    {$s^b$};
\path 
    (S21)  edge[dashed]   node[above left= 0.1cm and 0.3cm]{$\frac{1}{2}$} (Sg)
    (S21)  edge[dashed]   node[above left=0.1cm and 1.1cm]{$\frac{1}{2}$} (Sb)
    (S2l)  edge[dashed]   node[below]{} (Sg)
    (S2l)  edge[dashed]   node[below]{} (Sb)
    (S2l)  edge[blue, dashed, bend right]   node[above= 0.5cm]{$\textcolor{blue}{\frac{1}{2}+\varepsilon} $} (Sg)
    (S2l)  edge[blue, dashed, bend left]   node[above=0.5cm]{$\textcolor{blue}{\frac{1}{2}-\varepsilon} $} (Sb)
    (S2dm4)  edge[dashed]   node[below]{} (Sg)
    (S2dm4)  edge[dashed]   node[below]{} (Sb)
    (Sg)  edge [loop below] node{$1$} (Sg)
    (Sb)  edge [loop below] node{$1$}  (Sb);
\node[state, fill=white,draw=none]    (RG) [below left=0.01cm and 0.1cm of Sg] {$r(s^g, a)={1}$};
\node[state, fill=white,draw=none]    (RB) [below left=0.01cm and 0.1cm of Sb] {$r(s^b, a)=0$};
\node[state,minimum width =35pt,minimum height =35pt,draw=blue!75,fill=blue!20]     (So1)[below  left =2.5cm and 1.5cm of So]    {$s^o_{1}$}; 
\node[state,minimum width =35pt,minimum height =35pt,draw=blue!75,fill=blue!20]     (Soj)[below =2.1cm of So]    {$s^o_{j}$}; 
\node[state,minimum width =35pt,minimum height =35pt,draw=blue!75,fill=blue!20]     (Soom1)[below  right =2.5cm and 1.5cm of So]    {$s_{S-d}^o$}; 
\node (rect) [minimum width=0.1cm,minimum height=1cm] (ROS)[below  left =0.25cm and 2.1cm of Soj] {$r(s^o_{1}, a)=\frac{1}{2}$};
\path 
	(So) edge[dashed] (So1)
	(So) edge[dashed] (Soj)
        (So) edge[dashed] (Soom1);
\path[-]
	(So1) edge[loosely dotted] (Soj)
	(Soj) edge[loosely dotted] (Soom1);
\path
        (So1)  edge[dashed,loop below] node{$\frac{1}{S-d}$} (So1)
        (So1)  edge[dashed, bend right] node[below=0.0001cm]{$\frac{1}{S-d}$} (Soj)
        (So1)  edge[dashed, bend right=55, looseness=1] node[below = 0.0001cm]{$\frac{1}{S-d}$} (Soom1)
        (Soj)  edge [dashed,bend right] node{} (So1)
        (Soj)  edge [dashed,loop below] node{} (Soj)
        (Soj)  edge [dashed,bend left] node{} (Soom1)
        (Soom1)  edge [dashed,bend right=50, looseness=1] node{} (So1)
        (Soom1)  edge [dashed,bend left] node{} (Soj)
        (Soom1)  edge [dashed,loop below] node{} (Soom1);
\end{tikzpicture}
\caption{The class of the hard-to-learn low-rank MDP instances used in the proof of Theorem \ref{thm:lb}.}\label{fig:lower_bound_instance}
\end{figure}
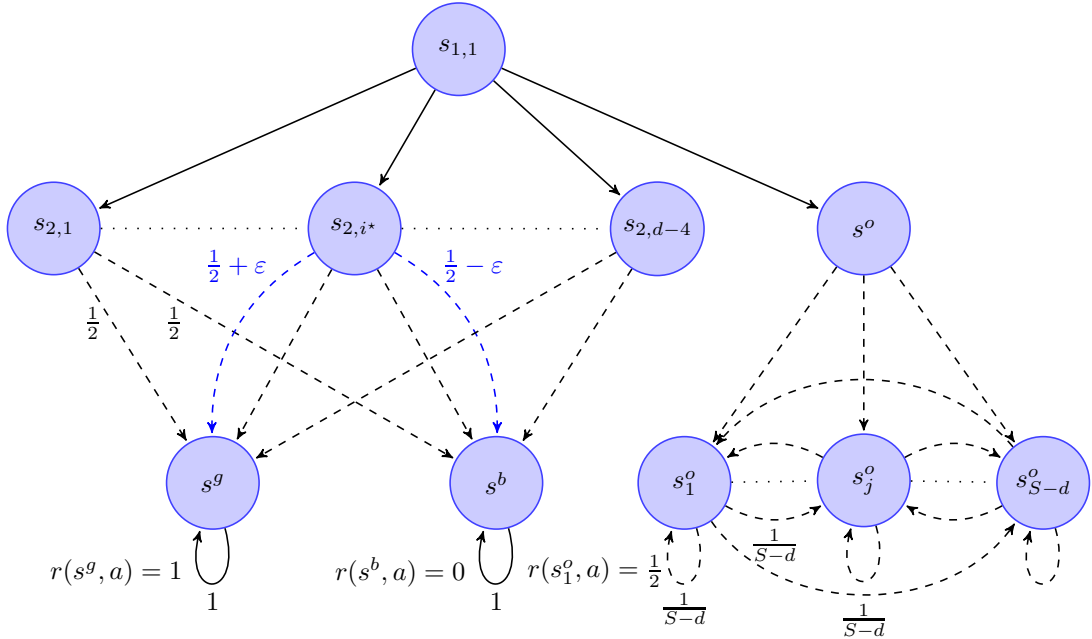\noindent

To prove our regret lower bound in Theorem \ref{thm:lb}, we construct a class of hard-to-learn low-rank MDP instances, as shown in Figure \ref{fig:lower_bound_instance}.
To begin with, we first introduce the reference low-rank MDP $\cM_0$, with its elements detailed as follows:
\begin{itemize}
    \item State space: $\cS=\{s_{1,1},s_{2,1},s_{2,2},\ldots,s_{2,d-4}\}\cup\{s^g,s^b\}\cup\{s^o\}\cup\cS_{\cO}$, where $\cS_{\cO}=\{s^o_{i}\}_{i=1}^{S-d}$ denotes the set of ``outlier states'', $s^g$ denotes the ``good state'', and $s^b$ denotes the ``bad state''.
    \item Action space: $\cA=\{a_1,a_2,\ldots,a_{A}\}$.
    \item Reward function: $r(s,a)=\BI\{s=s^g\}+\frac{1}{2}\BI\{s\in\cS_{\cO}\}$.
    \item Transitions: 
    \begin{itemize}
        \item For the initial state $s_{1,1}$, the learner will deterministically transit to state $s_{2,i}$ if taking action $a_i$, $\forall i\in[d-4]$, and will transit to state $s^o$ otherwise. Formally, $P^\star\left(s_{2,i} \mid s_{1,1}, a_i\right)=1$, $\forall i\in[d-4]$, and $P^\star\left(s^o \mid s_{1,1}, a_i\right)=1$, $\forall i\in[A]\setminus[d-4]$.
        \item For state $s_{2,i}\in\{s_{2,1},s_{2,2},\ldots,s_{2,d-4}\}$, the learner will transit to good state $s^{g}$ and bad state $s^b$ uniformly at random, no matter what action it takes, $\textit{i.e.}$, $P^\star\left(s^{g} \mid s_{2,i}, a\right)=P^\star\left(s^{b} \mid s_{2,i}, a\right)=\frac{1}{2}$, $\forall i\in[d-4]$ and $a\in\cA$.
        \item For states $s^o$ and $s^o_{i}\in\cS_{\cO}$, the learner will uniformly transit to a state $s^o_{j}\in\cS_{\cO}$, no matter what action it takes. Formally, $P^\star\left(s^o_{j} \mid s^o, a\right)=P^\star\left(s^o_{j} \mid s^o_{i}, a\right)=\frac{1}{S-d}$, $\forall s^o_{i},s^o_{j}\in\cS_{\cO}$ and $a\in\cA$.
        \item For states $s^g$ and $s^b$, the learner will stay at the current state no matter what action it takes, which means that $P^\star\left(s^{g} \mid s^g, a\right)=P^\star\left(s^{b} \mid s^{b}, a\right)=1$.
    \end{itemize}
\end{itemize}
Further, the transitions of the above MDP can be realized by $P^\star(s^\prime\mid s,a)=\langle\phi^\star(s,a),\mu^\star(s^\prime)\rangle$, with the following features, which thus implies that this MDP is indeed a low-rank MDP:
\begin{align*}
    &\phi^\star(s_{1,1},a_i)=\bm{e}_i, \quad \mu^\star(s_{2,i})=\bm{e}_i, \,\forall i\in[d-4],\quad \mu^\star(s_{1,1})=\bm{0}\\
    &\phi^\star(s_{1,1},a_i)=(0,\ldots, 0,1,0),\,\forall i\in[A]\setminus[d-4],\quad\mu^\star(s^o)=(0,\ldots, 0,1,0)\\
    &\phi^\star\left(s_{2,j}, a\right)=(0, \ldots, 0, \frac{1}{2}, \frac{1}{2}, 0,0),\,\forall a\in \cA,\quad\mu^\star(s^g)=(0, \ldots, 0,1,0,0,0),\quad\mu^\star(s^b)=(0, \ldots, 0,0,1,0,0)\\
    &\phi^\star(s^o,a)=\phi^\star(s^o_{j},a)=(0,\ldots,0,\frac{1}{S-d}),\,\forall a\in \cA,\quad \mu^\star(s^o_{j})=(0,\ldots,0,1)\\
    &\phi^\star(s^g,a)=\mu^\star(s^g),\quad \phi^\star(s^b,a)=\mu^\star(s^b),\,\forall a\in \cA\,.
\end{align*}
Based on the reference MDP $\cM_0$, we define other low-rank MDP instances $\cM_{(i^\star,a^\star)}$, $\forall (i^\star,a^\star)\in[d-4]\times \cA$. In specific, the only difference between $\cM_{(i^\star,a^\star)}$ and $\cM_0$ is that $\phi^\star(s_{2,i^\star},a^\star)=(0, \ldots, 0, \frac{1}{2}+\varepsilon, \frac{1}{2}-\varepsilon, 0,0)$, such that $P^\star\left(s_g \mid s_{2,i^\star}, a^\star\right)=\frac{1}{2}+\varepsilon$, and $P^\star\left(s_b \mid s_{2,i^\star}, a^\star\right)=\frac{1}{2}-\varepsilon$, for some $\varepsilon>0$ to be defined later.

\subsection{Proof of Theorem \ref{thm:lb}}
Based on the class of hard-to-learn low-rank MDP instances constructed above, we are now ready to prove the regret lower bound in Theorem \ref{thm:lb}.
\begin{proof}[Proof of Theorem \ref{thm:lb}]

In what follows, we denote by $\mathbb{P}_{\left(i^\star, a^\star\right)} \coloneqq \mathbb{P}_{\operatorname{Alg}, \mathcal{M}_{\left(i^\star, a^\star\right)}}$ the probability measure over the outcomes induced by the interaction between $\operatorname{Alg}$ and $\mathcal{M}_{\left(i^\star, a^\star\right)}$, and by $\mathbb{E}_{\left(i^\star, a^\star\right)} \coloneqq \mathbb{E}_{\operatorname{Alg}, \mathcal{M}_{\left(i^\star, a^\star\right)}}$ the expectation with respect to $\mathbb{P}_{\left(i^\star, a^\star\right)}$.
\paragraph{Regret of $\operatorname{Alg}$ in $\mathcal{M}_{\left(i^\star, a^\star\right)}$} 
For some $\cM_{(i^\star,a^\star)}$, its optimal policy $\pi^\star_{(i^\star,a^\star)}:\cS\to\cA$ satisfies that $\pi^\star_{(i^\star,a^\star)}(s_{1,1})=a_{i^\star}$ and $\pi^\star_{(i^\star,a^\star)}(s_{2,i^\star})=a^\star$, with the optimal value function 
\begin{align}\label{eq:lb_eq1}
V_{0}^\star(s_{1,1}) & =\mathbb{E}\left[\sum_{\tau=0}^{+\infty} \gamma^\tau r(s_{\tau}, a_\tau) \mid \pi^\star_{(i^\star,a^\star)},P^\star_{(i^\star,a^\star)}, s_0=s_{1,1}\right]=\sum_{\tau=2}^{+\infty} \gamma^\tau \left(\frac{1}{2}+\varepsilon\right)=\frac{\gamma^2}{1-\gamma}\left(\frac{1}{2}+\varepsilon\right)\,.
\end{align}
For some policy $\pi$, it is also clear that its value function satisfies
\begin{align}\label{eq:lb_eq2}
V_{0}^\pi(s_{1,1}) &=\frac{\gamma^2}{1-\gamma}\left(\frac{1}{2}+\varepsilon\BP_{\left(i^\star, a^\star\right)}\left((s_2,a_2)=(s_{2,i^\star},a^\star)\right)\right)\,.
\end{align}
Combining Eq. \eqref{eq:lb_eq1} and \eqref{eq:lb_eq2} shows that the regret of $\operatorname{Alg}$ in $\cM_{(i^\star,a^\star)}$ satisfies
\begin{align*}
\cR_K(\operatorname{Alg}, \cM_{(i^\star,a^\star}))&=\frac{\gamma^2\varepsilon}{1-\gamma}K\left(1-\frac{1}{K}\BE_{(i^\star,a^\star)}\left[\sum_{k=1}^K\BI\{(s^k_2,a^k_2)=(s_{2,i^\star},a^\star)\} \right]\right)\\
    &=\frac{\gamma^2\varepsilon}{1-\gamma}K\left(1-\frac{1}{K}\BE_{(i^\star,a^\star)}\left[N^K_{(i^\star,a^\star)} \right]\right)\,,
\end{align*}
where we define $N^K_{(i^\star,a^\star)}\coloneqq \sum_{k=1}^K\BI\{(s^k_2,a^k_2)=(s_{2,i^\star},a^\star)\}$.

\paragraph{Maximum Regret of $\operatorname{Alg}$ over All Possible $\cM_{(i^\star,a^\star)}$}
With $\cR_K(\operatorname{Alg}, \cM_{(i^\star,a^\star}))$ in the above equation, we can deduce that 
\begin{align}\label{eq:lb_eq3}
    \max_{(i^\star,a^\star)}\cR_K(\operatorname{Alg},\cM_{(i^\star,a^\star)})&\geq \frac{1}{(d-4)A}\sum_{(i^\star,a^\star)}\cR_K(\operatorname{Alg},\cM_{(i^\star,a^\star}))\notag\\
    &\geq \frac{\gamma^2\varepsilon}{1-\gamma}K\left(1-\frac{1}{K(d-4)A}\sum_{(i^\star,a^\star)}\BE_{(i^\star,a^\star)}\left[N^K_{(i^\star,a^\star)} \right]\right)\,.
\end{align}
To lower bound the above display, it remains to upper bound $\sum_{(i^\star,a^\star)}\BE_{(i^\star,a^\star)}\left[N^K_{(i^\star,a^\star)} \right]$. To this end, by Lemma 1 in the work of \citet{GarivierMS19} together with the fact that $N^K_{(i^\star,a^\star)}/K\in[0,1]$, it holds that 
\begin{align*}
    \operatorname{KL}\left(\operatorname{Ber}\left(\frac{1}{K}\BE_0\left[N^K_{(i^\star,a^\star)}\right]\right),\operatorname{Ber}\left(\frac{1}{K}\BE_{(i^\star,a^\star)}\left[N^K_{(i^\star,a^\star)}\right]\right)\right)\leq \operatorname{KL}\left(\BP_0,\BP_{(i^\star,a^\star)}\right)\,.
\end{align*}
This implies that 
\begin{align*}
    \frac{1}{K}\BE_{(i^\star,a^\star)}\left[N^K_{(i^\star,a^\star)}\right]
    &\leq \frac{1}{K}\BE_0 \left[N^K_{(i^\star,a^\star)}\right]+\sqrt{\frac{1}{2}\operatorname{KL}\left(\BP_0,\BP_{(i^\star,a^\star)}\right)}\\
    &=\frac{1}{K}\BE_0 \left[N^K_{(i^\star,a^\star)}\right]+\varepsilon\sqrt{2}\sqrt{\BE_0\left[N^K_{(i^\star,a^\star)}\right]}\,,
\end{align*}
where the inequality is due to Pinsker’s inequality that $(p-q)^2 \leq \frac{1}{2} \operatorname{KL}(\operatorname{Ber}(p), \operatorname{Ber}(q))$, for $p,q\in[0,1]$, and the equality comes from Lemma 15.1 of \citet{lattimore2020bandit} and Lemma 14 of \citet{DominguesMKV21} as well as assuming $0\leq\varepsilon\leq\frac{1}{4}$. 

Based on this, one can see that 
\begin{align}\label{eq:lb_eq4}
    \frac{1}{K}\sum_{(i^\star,a^\star)}\BE_{(i^\star,a^\star)}\left[N^K_{(i^\star,a^\star)}\right]
    &\leq \frac{1}{K}\sum_{(i^\star,a^\star)}\BE_0 \left[N^K_{(i^\star,a^\star)}\right]+\varepsilon\sqrt{2}\sum_{(i^\star,a^\star)}\sqrt{\BE_0\left[N^K_{(i^\star,a^\star)}\right]}\notag\\
    &\leq 1+\varepsilon\sqrt{2}\sqrt{(d-4)AK}\,,
\end{align}
where the second inequality follows from using the Cauchy-Schwartz
inequality together with the fact that $N^K_{(i^\star,a^\star)}\leq K$.

\paragraph{Optimizing $\varepsilon$ to Lower Bound the Maximum Regret} 
Substituting Eq. \eqref{eq:lb_eq4} into Eq. \eqref{eq:lb_eq3} leads to 
\begin{align*}
    \max_{(i^\star,a^\star)}\cR_K(\operatorname{Alg},\cM_{(i^\star,a^\star)})
    &\geq\frac{\gamma^2\varepsilon}{1-\gamma}K\left(1-\frac{1}{(d-4)A}-\varepsilon\sqrt{2}\sqrt{\frac{K}{(d-4)A}}\right)\\
    &\geq\frac{1}{4\sqrt{2}}\cdot\frac{\gamma^2}{1-\gamma}\left(1-\frac{1}{(d-4)A}\right)^2\sqrt{(d-4)AK}\\
    &\geq \frac{361}{1600\sqrt{2}}\cdot\frac{\gamma^2}{1-\gamma}\sqrt{(d-4)AK}\,,
\end{align*}
where the second inequality comes from by choosing $\varepsilon=\frac{1}{2\sqrt{2}}\left(1-\frac{1}{(d-4)A}\right)\sqrt{\frac{(d-4)A}{K}}$ and the last inequality is due to $d\geq 8$ and $A\geq d-3$. Finally, note that 
$\varepsilon\leq \frac{1}{4}$ is guaranteed when $K\geq 2(d-4)A$. The proof is thus concluded.
\end{proof}

\section{Auxiliary Lemmas} 
We first introduce the concentration of MLE, the i.i.d. version of which at least dates back to Chapter 7 of \citet{geer2000empirical} and the non-i.i.d. version of which is first proved by \citet{Flambe20} and also appears in the analysis of  \citet{REPUCB22}.
\begin{lemma}[MLE guarantee]\label{lem: MLE}
For some fixed epoch $i\in[N]$, with probability $1-\delta$, it holds that
\begin{align*}
    \mathbb{E}_{s \sim\left\{0.5 \rho_{k_i}+0.5 \rho_{k_i}^{\prime}\right\}, a \sim \bar{\pi}_{k_i}(s)}\left[\left\|\widehat{P}_{k_i}(\cdot \mid s, a)-P^{\star}(\cdot\mid s, a)\right\|_1^2\right] \lesssim \zeta\,, \quad \zeta\coloneqq\frac{\ln (M / \delta)}{{k_i}}\,.
\end{align*}
Therefore, simultaneously for all epoch $ i \in [N]$, with probability $1-\delta$, it holds that
\begin{align*}
\mathbb{E}_{s \sim\left\{0.5 \rho_{k_i}+0.5 \rho_{k_i}^{\prime}\right\}, a \sim \bar{\pi}_{k_i}(s)}\left[\left\|\widehat{P}_{k_i}(\cdot \mid s, a)-P^{\star}(\cdot \mid s, a)\right\|_1^2\right] \lesssim\zeta_{k_i}\,, \quad \zeta_{k_i}\coloneqq\frac{\ln (MN / \delta)}{{k_i}}\,.
\end{align*}    
\end{lemma}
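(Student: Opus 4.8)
The plan is to recognize the estimator from \eqref{eq:MLE} as a standard conditional-density (next-state given state-action) maximum likelihood problem and to invoke the non-i.i.d.\ martingale concentration bound for MLE. First I would pin down the sampling law of the contexts. By the construction in \Cref{algo:algo1}, the tuple $(s_k,a_k,s_k')$ appended to $\cD_k$ has, conditioned on the history, $s_k\sim d_{P^\star}^{\tilde\pi_k}$ and (after averaging over the Bernoulli draw $c_k$) $a_k$ distributed as $(1-\mixCoeff)\tilde\pi_k(\cdot\mid s_k)+\mixCoeff\,U(\cA)$, while the tuple $(s_k',a_k',s_k'')$ appended to $\cD_k'$ has $s_k'\sim P^\star(\cdot\mid s_k,a_k)$ and $a_k'$ with the identical mixing. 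Averaging the per-episode conditional laws over $k=1,\dots,k_i$ reproduces exactly the averaged occupancies $\rho_{k_i}$ (for the $\cD$ part) and $\rho_{k_i}'$ (for the $\cD'$ part) together with the averaged policy $\bar\pi_{k_i}$, so that the empirical context law over the $2k_i$ samples feeding \eqref{eq:MLE} equals $\tfrac12(\rho_{k_i}+\rho_{k_i}')$ paired with $\bar\pi_{k_i}$, which is precisely the measure under which the statement's expectation is taken.

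Second, I would apply the exponential-supermartingale MLE guarantee (the non-i.i.d.\ version of \citet{Flambe20}, also used by \citet{REPUCB22}, tracing back to Chapter 7 of \citet{geer2000empirical}). For a fixed candidate $(\mu,\phi)\in\cM$ let $L_t=\ln\frac{\mu^\top(s'_t)\phi(s_t,a_t)}{P^\star(s'_t\mid s_t,a_t)}$ be the per-sample log-likelihood ratio against the true kernel. The key computation is the Hellinger identity $\mathbb{E}_{t-1}[\exp(\tfrac12 L_t)]=1-H^2\big(\mu^\top(\cdot)\phi(s_t,a_t),\,P^\star(\cdot\mid s_t,a_t)\big)\le\exp(-H^2_t)$, which makes $\exp\big(\tfrac12\sum_{s\le t}L_s\big)\big/\prod_{s\le t}\mathbb{E}_{s-1}[\exp(\tfrac12 L_s)]$ a unit-mean martingale. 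A Markov/Chernoff bound at level $\delta/M$, followed by a union bound over the finite class $\cM$, yields that with probability $1-\delta$, simultaneously for all $(\mu,\phi)\in\cM$, $\sum_{t}\mathbb{E}_{t-1}[H^2_t]\le \ln(M/\delta)-\tfrac12\sum_t L_t$. Realizability from \Cref{ass:model_class} ($\mu^\star\in\Psi,\phi^\star\in\Phi$) gives $\sum_t L_t(\widehat\mu_{k_i},\widehat\phi_{k_i})\ge \sum_t L_t(\mu^\star,\phi^\star)=0$ because the MLE maximizes the empirical log-likelihood and $P^\star$ is feasible; hence the gap term drops and $\sum_{t=1}^{2k_i}\mathbb{E}_{t-1}\big[H^2(\widehat P_{k_i}(\cdot\mid s,a),P^\star(\cdot\mid s,a))\big]\lesssim \ln(M/\delta)$.

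Third, I would convert to the stated $\ell_1$ form. Dividing the accumulated Hellinger bound by the $2k_i$ samples turns the sum of conditional expectations into the single expectation under $\tfrac12(\rho_{k_i}+\rho_{k_i}')\times\bar\pi_{k_i}$ via the averaging identity from the first step, so $\mathbb{E}[H^2]\lesssim \ln(M/\delta)/k_i$. The elementary inequality $\|P-Q\|_1^2\le 8\,H^2(P,Q)$ (equivalently $\mathrm{TV}\le\sqrt2\,H$) then upgrades this to $\mathbb{E}\big[\|\widehat P_{k_i}-P^\star\|_1^2\big]\lesssim \ln(M/\delta)/k_i=\zeta$, which is the first display. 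Finally, replacing $\delta$ by $\delta/N$ and union bounding over the $N=K/L$ epochs delivers the uniform guarantee with $\zeta_{k_i}=\ln(MN/\delta)/k_i$.

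The step I expect to be the main obstacle is the first one: carefully bookkeeping that the adaptive, history-dependent per-episode sampling laws average to the fixed marginals $\rho_{k_i}$, $\rho_{k_i}'$, and $\bar\pi_{k_i}$, so that the martingale bound (which intrinsically produces a \emph{sum of conditional expectations}) can be identified with the single averaged-distribution expectation appearing in the lemma. The supermartingale concentration itself is by now routine once the Hellinger likelihood-ratio identity and realizability are in place, and the Hellinger-to-$\ell_1$ conversion is a standard inequality.
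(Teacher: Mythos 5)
Your proof is correct and follows essentially the same route the paper takes: the paper does not prove this lemma itself but defers to the standard non-i.i.d.\ MLE concentration of \citet{geer2000empirical}, \citet{Flambe20}, and \citet{REPUCB22}, which is exactly the exponential-supermartingale-plus-Hellinger argument you reconstruct (unit-mean likelihood-ratio martingale, union bound over the finite class $\cM$, realizability to drop the log-likelihood gap), followed by the Hellinger-to-$\ell_1$ conversion and the union bound over the $N$ epochs. The one caveat --- shared by the paper's own bookkeeping, e.g.\ its assertion that $\widehat{\Sigma}_{k,\phi}$ is an unbiased estimate of $\Sigma_{\rho_k\times\bar{\pi}_k,\phi}$ --- is that the averaged data law feeding the MLE is the average of the joint laws $d_{P^\star}^{\tilde{\pi}_j}(s)\bigl[(1-\xi)\tilde{\pi}_j(a\mid s)+\xi\,U(a)\bigr]$ over $j\le k_i$, which is not literally the product measure $\rho_{k_i}\times\bar{\pi}_{k_i}$ because the per-episode action distributions vary with $j$; your claim of exact equality in the first step should be softened to an identification up to this discrepancy (harmless for the downstream uses, since both measures dominate $\xi\,\rho_{k_i}(s)U(a)$, but not the literal equality you assert).
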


The following lemma is the canonical elliptical potential lemma.
\begin{lemma}[Lemma 19.4,  \citet{lattimore2020bandit}]\label{lem:Elliptical potential lemma}
Let $M_{0}=\lambda_0 I \in \mathbb{R}^{d \times d}$ with $\lambda_0>0$ and $M_k=M_{k-1}+G_k$, where $G_k$ is positive definite with the maximum eigenvalue $\lambda_{\max}(G_k)\leq 1$ and $\operatorname{Tr}(G_k)\leq B^2$. Then
\begin{align*}
    \sum_{k=1}^K\operatorname{Tr}(G_kM_{k-1}^{-1})\leq2\ln\det(M_K)-2\ln\det(M_{0})\leq 2d\ln\left(1+\frac{KB^2}{d\lambda_0} \right)\,.
\end{align*}
\end{lemma}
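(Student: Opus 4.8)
The plan is to prove both inequalities through the per-step increments of $\ln\det(M_k)$, exploiting the additive update $M_k = M_{k-1} + G_k$. First I would write, using the symmetric positive-definite square root of $M_{k-1}$,
\[
\det(M_k) = \det\!\big(M_{k-1}\big)\,\det\!\big(I + A_k\big), \qquad A_k := M_{k-1}^{-1/2} G_k M_{k-1}^{-1/2} \succ 0,
\]
so that taking logarithms and summing telescopes to $\ln\det(M_K) - \ln\det(M_0) = \sum_{k=1}^K \ln\det(I+A_k)$. Simultaneously, the cyclic property of the trace gives $\operatorname{Tr}(G_k M_{k-1}^{-1}) = \operatorname{Tr}(A_k)$, which reduces the whole statement to comparing $\sum_k \operatorname{Tr}(A_k)$ against $2\sum_k \ln\det(I+A_k)$, and then bounding the latter by the log-determinant of $M_K$.

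For the first inequality I would argue eigenvalue-wise. Writing $\{\lambda_{k,i}\}_{i=1}^d$ for the eigenvalues of the symmetric positive semidefinite matrix $A_k$, we have $\operatorname{Tr}(A_k) = \sum_i \lambda_{k,i}$ and $\ln\det(I+A_k) = \sum_i \ln(1+\lambda_{k,i})$. It therefore suffices to invoke the scalar inequality $x \le 2\ln(1+x)$, valid for every $x \in [0,1]$. The hypotheses supply exactly the eigenvalue control needed: since $\lambda_{\max}(G_k)\le 1$ and $M_{k-1} \succeq M_0 = \lambda_0 I$, one has $G_k \preceq I \preceq \lambda_0 I \preceq M_{k-1}$ whenever $\lambda_0 \ge 1$, hence $A_k \preceq I$ and every $\lambda_{k,i} \le 1$. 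Applying the scalar bound to each eigenvalue and summing over $i$ and $k$ gives $\sum_k \operatorname{Tr}(A_k) \le 2\sum_k \ln\det(I+A_k) = 2\big(\ln\det(M_K) - \ln\det(M_0)\big)$, which is the first claimed inequality.

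For the second inequality I would bound $\ln\det(M_K)$ through its trace via the arithmetic–geometric mean inequality on eigenvalues. Since $M_K = \lambda_0 I + \sum_{k=1}^K G_k$, the constraint $\operatorname{Tr}(G_k) \le B^2$ yields $\operatorname{Tr}(M_K) \le d\lambda_0 + KB^2$. Because the geometric mean of the nonnegative eigenvalues of $M_K$ is at most their arithmetic mean, $\det(M_K)^{1/d} \le \operatorname{Tr}(M_K)/d$, so $\ln\det(M_K) \le d\ln\big(\lambda_0 + KB^2/d\big)$; combined with $\ln\det(M_0) = d\ln\lambda_0$ this gives $\ln\det(M_K) - \ln\det(M_0) \le d\ln\big(1 + KB^2/(d\lambda_0)\big)$, and multiplying by $2$ closes the chain.

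The only genuinely delicate point is the per-step comparison $\operatorname{Tr}(A_k) \le 2\ln\det(I+A_k)$: the scalar inequality $x \le 2\ln(1+x)$ fails for large $x$ (it holds only up to $x \approx 2.51$), so the argument hinges on keeping $\lambda_{\max}(A_k)$ bounded. This is precisely why the normalization $\lambda_{\max}(G_k)\le 1$ together with $\lambda_0 \ge 1$ (which holds in the application, where $\lambda_k = O(d\ln(Mk/\delta)) \ge 1$) is essential; under these conditions $A_k \preceq I$ and the constant $2$ is comfortably valid. Everything else is routine linear algebra and telescoping.
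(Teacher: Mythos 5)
Your proof is correct and follows essentially the same route as the cited source (the paper gives no proof of its own, deferring to Lemma 19.4 of \citet{lattimore2020bandit}): telescoping $\ln\det$, the factorization $\det(M_k)=\det(M_{k-1})\det(I+A_k)$ with $A_k=M_{k-1}^{-1/2}G_kM_{k-1}^{-1/2}$, the scalar bound $x\le 2\ln(1+x)$ applied eigenvalue-wise, and AM--GM on the eigenvalues of $M_K$ for the second inequality. You also correctly isolated the one genuine subtlety: with only $\lambda_0>0$ the first inequality as printed is actually false (take $d=1$, $K=1$, $G_1=1$, $\lambda_0\to 0$, so that $\operatorname{Tr}(G_1M_0^{-1})=1/\lambda_0\gg 2\ln(1+1/\lambda_0)$); the book's version truncates the left-hand side with $\min\{1,\cdot\}$, which the paper's restatement silently drops, and your substitute hypothesis $\lambda_0\ge 1$---satisfied in the paper's application since $\lambda_k=\Theta(d\ln(Mk/\delta))\ge 1$---is exactly what restores the claim.
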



The following lemma guarantees the concentration of the empirical feature covariance matrix and the version for fixed feature mapping $\phi(\cdot)$ is first proved by \citet{ZanetteCA21}. The proof of this lemma can be readily obtained by taking a union bound over any $\phi\in\Phi$ in the proof of Lemma 39 of \citet{ZanetteCA21}.
\begin{lemma}\label{lem: Concentration of the bonus term for roll-out policy}
Let $\lambda_{k_i}=\Theta(d \ln ({k_i}|\Phi| / \delta))=\Theta(d \ln ({k_i}M / \delta)), \forall i\in [N]$. Then simultaneously for all $i\in[N]$ and all $\phi\in\Phi$, with probability $1-\delta$, it holds that
\begin{align*}
\|\phi(s, a)\|_{\widehat{\Sigma}_{{k_i}, \phi}^{-1}}=\Theta\left(\|\phi(s, a)\|_{\Sigma_{\rho_{k_i} \times \bar{\pi}_{k_i}, \phi}^{-1}}\right)\,.
\end{align*}
\end{lemma}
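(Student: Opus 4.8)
The plan is to deduce the two-sided norm equivalence from a \emph{spectral equivalence} of the empirical and population covariance matrices, and to obtain the latter from a matrix concentration bound for each fixed $\phi\in\Phi$ followed by a union bound over the finite class $\Phi$ and over the epochs $i\in[N]$. Concretely, it suffices to prove that, with high probability, there exist absolute constants $0<c_1\le c_2$ with $c_1\,\Sigma_{\rho_{k_i}\times\bar{\pi}_{k_i},\phi}\preceq\widehat{\Sigma}_{k_i,\phi}\preceq c_2\,\Sigma_{\rho_{k_i}\times\bar{\pi}_{k_i},\phi}$, since by monotonicity of the matrix inverse this sandwich passes to $c_2^{-1}\Sigma^{-1}\preceq\widehat{\Sigma}^{-1}\preceq c_1^{-1}\Sigma^{-1}$ and hence $\|\phi(s,a)\|_{\widehat{\Sigma}_{k_i,\phi}^{-1}}=\Theta(\|\phi(s,a)\|_{\Sigma_{\rho_{k_i}\times\bar{\pi}_{k_i},\phi}^{-1}})$ simultaneously for every $(s,a)$.

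For a fixed epoch $i$ and a fixed $\phi\in\Phi$, I would write $\widehat{\Sigma}_{k_i,\phi}=A+\lambda_{k_i}I$ and $\Sigma_{\rho_{k_i}\times\bar{\pi}_{k_i},\phi}=\bar{A}+\lambda_{k_i}I$, where $A=\sum_{j}\phi(s_j,a_j)\phi(s_j,a_j)^{\top}$ is the empirical outer-product sum over the samples collected through episode $k_i$ and $\bar{A}=k_i\,\BE_{s\sim\rho_{k_i},a\sim\bar{\pi}_{k_i}}[\phi\phi^{\top}]$. The summands form an adapted sequence whose conditional means telescope exactly to $\bar{A}$, reflecting the already-noted fact that $\widehat{\Sigma}_{k_i,\phi}$ is an unbiased estimate of $\Sigma_{\rho_{k_i}\times\bar{\pi}_{k_i},\phi}$. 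Since $\|\phi\|_2\le 1$, the whitened increments satisfy $\|\Sigma^{-1/2}\phi\phi^{\top}\Sigma^{-1/2}\|\le\phi^{\top}\Sigma^{-1}\phi\le 1/\lambda_{k_i}$, so a whitened matrix Freedman inequality (exactly the fixed-$\phi$ bound of Lemma~39 of \citet{ZanetteCA21}) gives
\begin{align*}
\left\|\Sigma_{\rho_{k_i}\times\bar{\pi}_{k_i},\phi}^{-1/2}\big(\widehat{\Sigma}_{k_i,\phi}-\Sigma_{\rho_{k_i}\times\bar{\pi}_{k_i},\phi}\big)\Sigma_{\rho_{k_i}\times\bar{\pi}_{k_i},\phi}^{-1/2}\right\|\le\tfrac{1}{2}
\end{align*}
whenever $\lambda_{k_i}\gtrsim d\ln(k_i/\delta)$, because the effective dimension of the whitened quadratic variation is at most $d$. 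Rearranging the displayed bound yields $\tfrac12\Sigma_{\rho_{k_i}\times\bar{\pi}_{k_i},\phi}\preceq\widehat{\Sigma}_{k_i,\phi}\preceq\tfrac32\Sigma_{\rho_{k_i}\times\bar{\pi}_{k_i},\phi}$, i.e.\ the desired sandwich with $c_1=1/2$, $c_2=3/2$.

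Finally, I would take a union bound over all $\phi\in\Phi$ and all epochs $i\in[N]$, replacing $\delta$ by $\delta/(N|\Phi|)$; this is precisely what forces the stated regularizer $\lambda_{k_i}=\Theta(d\ln(k_iM/\delta))$ (absorbing the $\ln N$ and $\ln|\Phi|=\ln M$ factors into the logarithm), after which the spectral sandwich, and hence the $\Theta$-equivalence of the Mahalanobis norms, holds simultaneously for every $i$ and every $\phi$ with probability $1-\delta$. The only genuinely delicate ingredient is the fixed-$\phi$ concentration: the samples are \emph{not} i.i.d.\ because each policy $\tilde{\pi}_j$ depends on the previously collected data, so an adapted (martingale) matrix Freedman argument with the regularizer $\lambda_{k_i}I$ serving as an eigenvalue floor is required rather than a plain matrix Chernoff bound. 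Since this adaptive bound is already established for a single fixed feature map in \citet{ZanetteCA21}, the remaining work here is the routine union bound over the finite collection $\Phi$, which is the source of the logarithmic $\ln M$ dependence in $\lambda_{k_i}$.
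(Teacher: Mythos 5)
Your proposal is correct and follows essentially the same route as the paper, which likewise obtains the result by invoking the fixed-$\phi$ concentration of Lemma~39 of \citet{ZanetteCA21} (a martingale/adapted matrix concentration argument with the regularizer as an eigenvalue floor) and then taking a union bound over $\phi\in\Phi$ and the epochs, absorbing the resulting $\ln|\Phi|=\ln M$ and $\ln N$ factors into $\lambda_{k_i}$. Your additional elaboration of the spectral sandwich and the whitened Freedman step is a faithful account of what that cited lemma does.
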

The following is the canonical simulation lemma, which bounds the  difference between the performance of the same policy $\pi$ under two different environments and dates back at least to \citet{AbbeelN05}.
\begin{lemma}[Simulation lemma]\label{lem: simulation lemma}
Given two MDP models $\left(P^{\prime}, \ell-b\right)$ and $(P,\ell)$, for any policy $\pi$, it holds that
\begin{align}\label{eq:simulation_eq1}
    &V_{P^{\prime}, \ell-b}^\pi-V_{P, \ell}^\pi=\frac{1}{1-\gamma} \mathbb{E}_{(s, a) \sim d_{P^{\prime}}^\pi}\left[-b(s, a)+\gamma \left(P^{\prime}\left(\cdot\mid s, a\right)-P\left(\cdot\mid s, a\right)\right)^\top V_{P, \ell}^\pi\right]\,,
\end{align}
and
\begin{align}\label{eq:simulation_eq2}
    &V_{P^{\prime}, \ell-b}^\pi-V_{P, \ell}^\pi=\frac{1}{1-\gamma} \mathbb{E}_{(s, a) \sim d_{P}^\pi}\left[-b(s, a)+\gamma \left(P^{\prime}\left(\cdot\mid s, a\right)-P\left(\cdot\mid s, a\right)\right)^\top V_{P^\prime, \ell-b}^\pi\right]\,.
\end{align}
    
\end{lemma}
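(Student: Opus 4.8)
The plan is to prove both identities from a single one-step Bellman recursion for the value gap, which I then unroll through the appropriate occupancy measure. Write $\Delta^\pi(s) \coloneqq V_{P', \ell-b}^\pi(s) - V_{P, \ell}^\pi(s)$ for the pointwise value difference and let $[PV](s,a) \coloneqq \mathbb{E}_{s' \sim P(\cdot \mid s, a)}[V(s')] = P(\cdot\mid s,a)^\top V$ denote the one-step transition operator. By Bellman consistency for the fixed policy $\pi$ on each model, $V_{P', \ell-b}^\pi(s) = \mathbb{E}_{a \sim \pi(\cdot \mid s)}[(\ell - b)(s,a) + \gamma [P' V_{P', \ell-b}^\pi](s,a)]$ and $V_{P, \ell}^\pi(s) = \mathbb{E}_{a \sim \pi(\cdot \mid s)}[\ell(s,a) + \gamma [P V_{P, \ell}^\pi](s,a)]$. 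Subtracting gives $\Delta^\pi(s) = \mathbb{E}_{a \sim \pi(\cdot \mid s)}\big[-b(s,a) + \gamma([P' V_{P', \ell-b}^\pi](s,a) - [P V_{P, \ell}^\pi](s,a))\big]$, and the two displays of the lemma will follow by splitting this last difference in the two symmetric ways.

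For Eq. \eqref{eq:simulation_eq1}, I would add and subtract $\gamma [P' V_{P, \ell}^\pi](s,a)$ inside the bracket, turning the recursion into $\Delta^\pi(s) = \mathbb{E}_{a \sim \pi(\cdot \mid s)}[g(s,a) + \gamma [P' \Delta^\pi](s,a)]$, where $g(s,a) \coloneqq -b(s,a) + \gamma\big(P'(\cdot\mid s,a) - P(\cdot\mid s,a)\big)^\top V_{P, \ell}^\pi$. This is precisely the Bellman fixed-point equation for the value of $\pi$ on the model $(P', g)$, so by uniqueness of that fixed point (guaranteed by boundedness of $\ell$ and $b$, which makes every geometric series converge) we get $\Delta^\pi(s) = \mathbb{E}[\sum_{\tau=0}^\infty \gamma^\tau g(s_\tau, a_\tau) \mid \pi, P', s_0 = s]$. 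Taking $s_0 \sim d_0$ and invoking the occupancy-measure identity $d_{P'}^\pi(s,a) = (1-\gamma)\sum_{h=0}^\infty \gamma^h d_{P',h}^\pi(s,a)$ converts the discounted sum into $\tfrac{1}{1-\gamma}\mathbb{E}_{(s,a)\sim d_{P'}^\pi}[g(s,a)]$, which is exactly the right-hand side of \eqref{eq:simulation_eq1}.

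For Eq. \eqref{eq:simulation_eq2}, I would instead add and subtract $\gamma [P V_{P', \ell-b}^\pi](s,a)$, yielding $\Delta^\pi(s) = \mathbb{E}_{a \sim \pi(\cdot \mid s)}[\tilde g(s,a) + \gamma [P \Delta^\pi](s,a)]$ with $\tilde g(s,a) \coloneqq -b(s,a) + \gamma\big(P'(\cdot\mid s,a) - P(\cdot\mid s,a)\big)^\top V_{P', \ell-b}^\pi$. Running the identical fixed-point-plus-occupancy-measure argument, now with transition $P$ (and its occupancy measure $d_P^\pi$) in place of $P'$, gives $\Delta^\pi = \tfrac{1}{1-\gamma}\mathbb{E}_{(s,a)\sim d_P^\pi}[\tilde g(s,a)]$, which is \eqref{eq:simulation_eq2}. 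The two displays differ only in which value function is paired with $P'-P$ and correspondingly which occupancy measure appears, reflecting the two choices of rollout transition in the telescoping.

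This is a standard value-difference argument, so I do not anticipate a genuine obstacle; the only points requiring care are (a) justifying that $\Delta^\pi$ is the \emph{unique} solution of the derived Bellman recursion, i.e., convergence of the discounted series under the boundedness of the losses and bonus, and (b) correctly matching the step-$h$ discounted visitation series $\sum_h \gamma^h d_{P,h}^\pi$ to the normalized occupancy measure $d_P^\pi$ carrying the $(1-\gamma)$ prefactor, so that the $\tfrac{1}{1-\gamma}$ in front of each expectation lands correctly.
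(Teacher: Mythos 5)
Your proof is correct: both identities follow exactly as you argue, by subtracting the two policy-specific Bellman equations, regrouping the cross term in the two symmetric ways, recognizing the resulting recursion as the (unique, by $\gamma$-contraction on bounded functions) fixed point for $\pi$ under $(P',g)$ or $(P,\tilde g)$, and unrolling through the normalized occupancy measure $d_{P'}^\pi$ or $d_P^\pi$ with the $(1-\gamma)$ prefactor handled as you describe. The paper itself gives no proof of this lemma---it cites it as canonical, dating back to \citet{AbbeelN05}---and your argument is precisely the standard value-difference derivation that citation covers.
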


\end{document}